\newcommand{\bfm}[1]{\ensuremath{\mathbf{#1}}}
\def\ba{\bfm a}     \def\bA{\bfm A}     \def\cA{{\cal  A}}     
\def\bb{\bfm b}          \def\cB{{\cal  B}}     
\def\bc{\bfm c}               
     \def\bE\\
\def\cE{{\cal  E}}     
         \def\cF{{\cal  F}}     
          \def\cG{{\cal  G}}     
          \def\cH{{\cal  H}}     
     \def\bI{\bfm I}          
          \def\cJ{{\cal  J}}     
          \def\cL{{\cal  L}}     
          \def\cN{{\cal  N}}     
     \def\bP{\bfm P}          
     \def\bQ{\bfm Q}          
          \def\cS{{\cal  S}}
\def\bv{\bfm v}          \def\cV{{\cal  V}}     
\def\bx{\bfm x}     \def\bX{\bfm X}     \def\cX{{\cal  X}}     
\def\by{\bfm y}     \def\bY{{\bfm Y}}          
\def\bz{\bfm z}     \def\bZ{\bfm Z}
\newcommand{\bfsym}[1]{\ensuremath{\boldsymbol{#1}}}
\def \balpha   {\bfsym{\alpha}}       \def \bbeta    {\bfsym{\beta}}
\def \bgamma   {\bfsym{\gamma}}       
\def \bepsilon {\bfsym{\epsilon}}     
         \def \btheta   {\bfsym{\theta}}
\newcommand{\bvarepsilon}{\bm{\varepsilon}}
\def \bTheta   {\bfsym{\Theta}}       
\def \bSigma   {\bfsym{\Sigma}}       
\def \bOmega   {{\bfsym{\Omega}}}
\DeclareMathOperator*{\argmin}{argmin}
\DeclareMathOperator*{\argmax}{argmax}
\def \RR	{\mathbb{R}}
\def \GG	{\mathbb{G}}
\def \PP {\mathbb{P}}
\def \mS {\mathcal{S}}
\def \mA {\mathcal{A}}
\def \mL {\mathcal{L}}
\def \mT {\mathcal{T}}
\def \mG {\mathcal{G}}
\def \mB {\mathcal{B}}
\def \wh{\widehat}
\def \wt{\widetilde}
\newcommand{\beq}  {\begin{equation}}
\newcommand{\eeq}  {\end{equation}}
\newcommand{\beqn} {\begin{eqnarray}}
\newcommand{\eeqn} {\end{eqnarray}}
\newcommand{\beqnn}{\begin{eqnarray*}}
\newcommand{\eeqnn}{\end{eqnarray*}}
\newcommand{\ltwonorm}[1]{\lVert#1\rVert_2}
\newcommand{\mbR}{\mathbb{R}}
\newcommand{\mbP}{\mathbb{P}}
\newcommand{\mbE}{\mathbb{E}}
\newcommand{\mcI}{\mathcal{I}}
\newcommand{\mcG}{\mathcal{G}}
\newcommand{\mcF}{\mathcal{F}}
\newcommand{\lof}{$L_0$-Fusion }
\numberwithin{equation}{section}
\theoremstyle{plain}
\newtheorem{thm}{Theorem}[section]
\newtheorem{coro}{Corollary}[section]
\newtheorem{defn}[thm]{Definition}
\newtheorem{prop}{Proposition}[section]
\newtheorem{rem}{Remark}[section]
\newcommand{\forceindent}{\leavevmode{\parindent=1em\indent}}
\DeclarePairedDelimiter\paren{(}{)}
\DeclarePairedDelimiter\brak{[}{]}
\DeclarePairedDelimiter\brac{\lbrace}{\rbrace}
\DeclarePairedDelimiter{\floor}{\lfloor}{\rfloor}
\DeclarePairedDelimiter\norm{\lVert}{\rVert}
\DeclarePairedDelimiter\abs{\lvert}{\rvert}
\begin{document}

\begin{frontmatter}
	\title{Supervised homogeneity fusion: A combinatorial approach}
	\runtitle{Homogeneity Fusion}
	
\begin{aug}
		\author{\fnms{Wen} \snm{Wang}\thanksref{}\ead[label=e1]{wangwen@umich.edu}},
		\author{\fnms{Shihao} \snm{Wu}\thanksref{}\ead[label=e2]{wshihao@umich.edu}},
		\author{\fnms{Ziwei} \snm{Zhu}\thanksref{}\ead[label=e2]{ziweiz@umich.edu}},
		\author{\fnms{Ling} \snm{Zhou}\thanksref{}\ead[label=e2]{lingzhou@swufe.edu.cn}},
		\author{\fnms{Peter X.-K.} \snm{Song}\thanksref{}\ead[label=e2]{pxsong@umich.edu}}
\end{aug}

\begin{abstract}
	Fusing regression coefficients into homogenous groups can unveil those coefficients that share a common value within each group. Such groupwise homogeneity reduces the intrinsic dimension of the parameter space and unleashes sharper statistical accuracy. We propose and investigate a new combinatorial grouping approach called \lof  that is amenable to mixed integer optimization (MIO).
	On the statistical aspect, we identify a fundamental quantity called \emph{grouping sensitivity} that underpins the difficulty of recovering the true groups. We show that $L_0$-Fusion achieves grouping consistency under the weakest possible requirement of the grouping sensitivity: if this requirement is violated, then the minimax risk of group misspecification will fail to converge to zero. Moreover, we show that in the high-dimensional regime, one can apply $L_0$-Fusion coupled with a sure screening set of features without any essential loss of statistical efficiency, while reducing the computational cost substantially. On the algorithmic aspect, we provide a MIO formulation for $L_0$-Fusion along with a warm start strategy. 
	Simulation and real data analysis demonstrate that \lof 
	exhibits superiority over its competitors in terms of grouping accuracy. 	
\end{abstract}

\end{frontmatter}

\section{Introduction} 
\label{sec:intro}
Identifying homogeneous groups of regression coefficients has received increasing attention because the resulting regression model provides better scientific interpretations and enhance predictive performance in many applications. In some occasions, features or covariates naturally act in groups to influence outcomes, so knowing group structures of the features help scientists gain new knowledge about a physical system of interest. From a modeling perspective, aggregating covariates with similar effects along with the response reduces model complexity and improves interpretability, especially in the high-dimensional regime. There have been a flurry of works under this direction; see for example \cite{bondell2008simultaneous,shen2010grouping,zhu2013simultaneous,ke2015homogeneity,jeon2017homogeneity}, among others. There is a vast literature  in discovering homogeneous groups of observations or individuals in overly heterogeneous population. However, these existing methods cannot be applied to our problem that aims to group regression parameters. Identifying group structures of regression parameters is crucial to learn the underlying heterogeneous covariates' effects, which is then leveraged to reach a more appropriate model for data analyses. A partial list of the literature includes  \cite{ke2016structure,shen2015inference,ma2017concave,lian2017homogeneity}, just name a few. The focus of this paper is on pursuing homogeneous groups of regression coefficients in which we do not have any prior knowledge about their true group structures.  

Homogeneity fusion is carried out routinely in environmental health sciences in a manual and subjective manner  
to evaluate the effect of a given set of toxicants on certain health outcomes.  Consider $p$ toxicants, whose concentrations are denoted by $X_1,\ldots, X_p$ respectively, $q$ other covariates $\{Z_k\}_{k = 1} ^ q$ and a outcome variable $Y$.  Scientists typically consider a linear regression model $Y \sim \sum_{j=1}^p \beta_j X_j + \sum_{k = 1} ^ q \alpha_k Z_k$ to evaluate effect of a mixture $A := \sum_{j=1}^p \beta_j X_j$ on outcome $Y$. 
One common practice to reduce model complexity and facilitate scientific interpretation is aggregating the exposure of similar toxicants to yield a sum-mixture (e.g. $\sum_{j} X_j$). For example, SumDEHP is a sum of four phthalates, MECPP, MEOHP, MEHHP, and MEHP, which quantifies total DEHP exposure from products such as PVC plastics used in food processing/packaging materials as well as building materials and medical devices \citep{Schettler2006, Kobrosly2012, Braun2014}. See also \cite{Marsee2006, Marie2015} for another sum-mixture called SumAA that adds three extra phthalates MBP, MiBP, and MBzP to SumDEHP. Learning such a sum-mixture structure requires the toxicants within the same mixture to share the same regression coefficients in the linear model. Unfortunately, in practice the formation of a sum-mixture is done manually by scientists in an \textit{ad hoc} fashion.  There has been long of interest to develop a data-driven homogeneity fusion methodology that provides a needed statistical toolbox for scientists to identify and include important toxicants, while excluding unimportant ones,  in the formation of a toxic mixture. This new approach can greatly reduce subjectivity in data processing and yield robust scientific conclusions and insights on the relationship between toxicants and outcome. This motivates us to pursue parsimony by regularizing coefficients $\beta_j$ in addition to homogeneity pursuit of those nonzero coefficients in our methodology. 

Suppose that the true linear model with $K_0$ groups of non-zero coefficients takes the form:  
\begin{eqnarray}
\label{model}
&&Y = \sum_{j=1}^p \beta  ^ *_j X_j + \sum_{k = 1} ^ q \alpha^*_k Z_k  +\varepsilon, \hspace{0.5cm}\beta  ^ *_j \in \{0, \gamma^*_1, \gamma^*_2, \dots, \gamma^*_{K_0}\}, ~\forall j \in [p], 
\end{eqnarray}
where $\varepsilon \sim \cN(0, \sigma ^ 2)$, and where the coefficients $\{\beta  ^ *_j\}_{j = 1} ^ p$ belong to a set including 0 and $K_0$ unknown different nonzero values $\{\gamma^*_k\}_{k = 1} ^ {K_0}$. Note that the group membership of each nonzero $\beta_j$ is not observed in data collection.  Write $\balpha ^ * = (\alpha^*_1, \ldots, \alpha^*_q)^\top \in \RR ^ q$, $\bbeta ^ *= (\beta  ^ *_1, \ldots, \beta  ^ *_p)^\top \in \RR ^ p$ and $\bgamma ^ * = (\gamma^*_1, \ldots, \gamma ^ *_{K_0})^\top \in \RR ^{K_0}$. Our main goal in this paper is to estimate $\bgamma^*$, $\bbeta^*$ and $\balpha^*$ simultaneously based on an independent and identically distributed ($i.i.d.$) sample $\{(\bx_i, \bz_i, y_i)\}_{i = 1} ^ n$ of size $n$. In the case of high dimension, $\bbeta$ is often assumed sparse so that we perform feature selection and grouping simultaneously to ensure statistical consistency. 

We now review and discuss some important works related to model \eqref{model}. \cite{shen2010grouping} considered model \eqref{model} without $\{Z_k\}_{k \in [q]}$ and proposed to minimize the following objective with respect to $\bbeta$: 
$
    S_1(\bbeta) = n^{-1}\sum_{i=1}^{n}(y_i - \sum_{j=1}^px_{ij}\beta_j)^2 + \lambda_1\sum_{j < j'}J_{\tau}(|\beta_j - \beta_{j'}|), 
$
where $\lambda_1$ is a tuning parameter that is associated with fusion strength, and $J_{\tau}(z) = \min(z\tau^{-1}, 1)$ is a surrogate of the indicator function $1_{z \neq 0}(z)$, with $\tau > 0$ representing the approximation error of $J_{\tau}(z)$ to the $L_0$ penalty $1_{z \neq 0}(z)$. Such penalty on the pairwise difference can lead to redundant comparisons and extra computational complexity. Note that there is no sparsity regularization in $S_1(\bbeta)$. As an extension, \cite{zhu2013simultaneous} considered simultaneous grouping pursuit and feature selection by further penalizing individual coefficients, that is, minimizing
$
    S_2(\bbeta) = n^{-1}\sum_{i=1}^{n}(y_i - \sum_{j=1}^px_{ij}\beta_j)^2 + \lambda_1\sum_{(j, j') \in \mathcal{E}}J_{\tau}\big(\big||\beta_j| - |\beta_{j'}|\big|\big) + \lambda_2 \sum_{j=1}^pJ_{\tau}(|\beta_j|). 
$ 
Here $\mathcal{E}$ is the edge set of an undirected graph with $p$ nodes representing $\{X_j\}_{j = 1}^p$. If $X_i$ and $X_j$ can be grouped, then there is an edge between nodes $i$ and $j$; otherwise, there is no edge. Available prior knowledge of $\mathcal{E}$ reduces computational burden and improves estimation efficiency. 
However, it is always challenging in practice to obtain a plausible estimate of $\mathcal{E}$, which makes the method less appealing. \cite{ke2015homogeneity} proposed a different method named as clustering algorithm in regression via data-driven segmentation (CARDS). They use a preliminary estimate to determine ``adjacent" coefficient pairs for fusion and only penalize distances between the two coefficients in each adjacent pairs by folded concave penalty function. Therefore, the CARDS estimator depends on the initial ordering of the coefficients, which could be unstable especially when the effect sizes are small (e.g. weak signals).

We propose to pursue homogeneity and sparsity simultaneously through a combinatorial approach called $L_0$-Fusion. Specifically, we estimate $\bbeta ^ *$ by the least squares with an exact group constraint ($\beta ^ *_j$ can only take $K_0$ distinct nonzero values) and an $L_0$ sparsity constraint. To obtain this estimator, we formulate the corresponding optimization problem as a mixed integer optimization (MIO) problem. \cite{bertsimas2016best} demonstrated that MIO provides a computationally tractable approach to solve the classical best subset selection (BSS) problem of a practical scale: With the sample size in thousands and the dimension in hundreds, a MIO algorithm can achieve provable optimality in minutes. Such success of MIO and the similar combinatorial nature of the BSS problem inspire us to seek for a MIO formulation of the \lof problem. Our main contributions are summarized as follows: (a) To the best of our knowledge, it is the first time that we formulate the group pursuing as a MIO problem; (b) we show that the estimator derived from the \lof problem achieves grouping consistency once the loss function is reasonably sensitive to a certain grouping error; (c) we discover that the grouping sensitivity requirement in (b) turns out to be necessary (up to a universal constant) for any approach to achieve selection and grouping consistency; 
(d) we provide a warm start algorithm with convergence guarantee for the \lof problem, in order to accelerate the MIO solver. 

The rest of the article is organized as follows. Section \ref{sec:stat} introduces the \lof method and a ``screen then group'' strategy to tackle high dimension. It also presents all the statistical theory, including the selection and grouping consistency of the \lof method and a necessary condition to achieve such consistency. Section \ref{sec:mixed} introduces our MIO formulation for the \lof problem together with a warm up algorithm. Section \ref{sec:num} demonstrates significant superiority of the \lof approach over existing ones in terms of grouping accuracy in both low-dimensional and high-dimensional regimes. We also apply \lof to a metabolomics dataset to aggregate concentration of similar lipids to predict the body mass index (BMI). The appendix includes all technical details, including the proofs of major theoretical results.

\section{Statistical methodology and theory}\label{sec:stat}

\subsection{Notation}

We use  regular letters, bold regular letters and bold capital letters to denote scalars, vectors and  matrices  respectively. 
For any positive integer $n$, we denote $\{1, \ldots, n\}$ by $[n]$. For any two sets $\mA$ and $\mB$, let $\mA \backslash \mB := \mA \cap \mB ^ c$. For  any  vector $\ba$ and  matrix $\bA$,  we  use $\ba^\top$ and $\bA^{\top}$ to  denote  the  transpose  of $\ba$ and $\bA$ respectively.
Given $\mB = \{i_1, \ldots, i_{|\cB|}\} \subset [p]$, we use $\bX_{\mB}$ to denote the submatrix of $\bX$ with columns indexed in $\cB$ and use  $\bbeta_{\mB}$ to denote $(\beta_{i_1}, \dots, \beta_{i_{|\cB|}})^\top$.
Given  any $a,b\in\RR$,  we  say $a\lesssim b$ if  there  exists  a  universal constant $C >0$ such that $a\le Cb$; we say $a \gtrsim b$ if there exists a universal constant $c > 0$ such that $a \ge  cb$; we say $a \asymp b$ if $a \lesssim b$ and $a \gtrsim b$. For any event $\cA$, we use $I(\cA)$ to denote the indicator function associated with $\cA$, i.e., $I(\cA) = 1$ if $\cA$ occurs, and $I(\cA) = 0$ otherwise. 

\subsection{\lof with feature screening}\label{sec:ultra_met}

Suppose we have $n$ independent observations $(\bx_i, \bz_i, y_i)_{i \in [n]}$ from model \eqref{model}. Our paper revolves around the following combinatorial optimization problem to achieve feature selection and homogeneity fusion simultaneously: 
\begin{eqnarray}
\label{eq:prob}
&&\min_{\balpha \in \RR ^ q, \bbeta \in \RR ^ p, \bgamma \in \RR ^ K}\sum_{i=1}^n(y_i - \bx_i ^ \top \bbeta - \bz_i^{\top}\balpha)^2, \\
&& \text{subject to: } \hspace{0.5cm} \beta_j \in \{0, \gamma_1, \gamma_2, \dots, \gamma_K\}  , \forall j \in[p], \nonumber\\
&&\hspace{2.1cm} \|\bbeta\|_0 := \sum_{j=1}^pI(\beta_j \neq 0) \leq s. \nonumber
\end{eqnarray} 
The first constraint requires the non-zero group number to be bounded by $K$, and the second constraint requires the sparsity of $\bbeta$ to be bounded by $s$. Given that the problem above restricts the $\ell_0$-norm of $\bbeta$ and also fuses the components of $\bbeta$, we refer to it as the \lof problem. Without the grouping constraint, \eqref{eq:prob} boils down the well-known best subset selection (BSS) problem \citep{Gar65, HLe67, BKM67} with subset size $s$. Note that problem \eqref{eq:prob} is NP-hard because of the cardinality and grouping constraint. Despite of the computational challenge, Section \ref{sec:mio_l0} provides a MIO formulation of \eqref{eq:prob} that is amenable to modern integer optimization solvers such as GUROBI and MOSEK. In our numerical study, when the dimension $p \le 100$, GUROBI can solve the \lof problem within seconds.

However, under practical setups, $p$ is often in thousands or even millions. Directly solving the \lof problem under such a scale is computationally burdensome or even prohibitive. To tackle this, we propose a ``\textit{screen then group}'' strategy. In the screening stage, let $\wt{\cS}$ denote a screening set generated by a preliminary feature screening procedure, the examples of which include, but are not limited to,  penalized least squares methods \citep{tibshirani1996regression,fan2001variable,  zhang2010nearly}, sure independence screening \citep{fan2008sure} or sparsity constraint method \citep{needell2009cosamp,fan2020best}. Suppose $\wt{\cS}$ enjoys the sure screening property, i.e., the true support set $\cS ^ 0\subseteq \wt{\cS}$ with high probability. Then in the \textit{grouping} stage,   
we perform \lof on the reduced design $\bX_{\wt{\cS}}$ to generate groups of nonzero coefficients, so that we work with lower-dimensional version of problem \eqref{eq:prob}.

\begin{center}
        \begin{algorithm}[t]
            \caption{CoSaMP($\bX, \by, \wh\bbeta_0, \pi, l, \tau$)}\label{alg_iht}
            \KwIn{Design matrix $\bX$, response $\by$, initial value $\wh\bbeta_0$, projection size $\pi$, expansion size $l$, convergence threshold $\tau>0$}
            \begin{algorithmic}[1]
                \State ~~ $t$ $\gets$ $0$
                \State ~~ \textbf{repeat}
                \State ~~~~ \hspace*{0.02in} $\mG_t \gets \mT_{\text{abs}}(\nabla\mL(\wh\bbeta_t),l)$
                \State ~~~~ \hspace*{0.02in} $\mS_t^{\dagger}\gets$supp$(\wh\bbeta_t)\cup\mG_t$
                \State ~~~~ \hspace*{0.02in} $\wh\bbeta^{\dagger}_t\gets(\bX_{\mS_t^\dagger}^{\top}\bX_{\mS_t^\dagger})^{+}\bX_{\mS_t^\dagger}^{\top}\by$
                \State ~~~~ \hspace*{0.02in} $\mS_t\gets\mT_{\text{abs}}(\wh\bbeta_t^{\dagger},\pi)$
                \State ~~~~ \hspace*{0.02in} $\wh\bbeta_{t+1}\gets (\bX_{\mS_t}^{\top}\bX_{\mS_t})^{+}\bX_{\mS_t}^{\top}\by$
                \State ~~~~ \hspace*{0.02in} $t\gets t+1$
                \State ~~ \textbf{until}
                $\|\wh\bbeta_t-\wh\bbeta_{t-1}\|_2<\tau$
                \State~~ $\wh\bbeta^{\text{cs}}\gets\wh\bbeta_t$            
         \end{algorithmic}
         \KwOut{$\wh\bbeta^{\mathrm{cs}}$ }
        \end{algorithm}
\end{center}

We choose CoSaMP (Compressive Sampling Matching Pursuit), an iterative two-stage hard thresholding algorithm proposed by \cite{needell2009cosamp}, as our variable screener.  Algorithm \ref{alg_iht} presents its pseudocode. CoSaMP performs two rounds of hard thresholding in each iteration: it first expands the model by recruiting the largest coordinates of the gradient (lines $3$-$4$) and then contracts the model by discarding the smallest components of the refitted signal on the expanded model (lines $5$-$7$). \cite{fan2020best} showed that under a high-dimensional sparse regression setup, CoSaMP (referred to as IHT therein) can achieve sure screening properties within few iterations under highly correlated designs. In addition, \cite{zhu2021early} showed numerically that CoSaMP yields much fewer false discoveries than LASSO, SCAD and MCP on early solution paths, particularly in the presence of high correlations among predictors. These supporting results signify CoSaMP as an efficient and reliable screener that can help substantially reduce the dimension while retaining the true signals. We emphasize that the low false discovery rate (FDR) here is crucial to controlling the dimension of the reduced design on which the \lof procedure becomes computationally tractable. 

\subsection{Statistical theory}\label{sec:stat_theory}

In this section, we prove that the global minimizers of problem~\eqref{eq:prob} reconstruct the ideal ``oracle estimator'', i.e., the estimator with prior knowledge of the true grouping, under a ``degree-of-separation'' condition. To understand how the proposed method performs under high dimensions, in the following we derive necessary and sufficient conditions to achieve grouping consistency as well as  selection consistency. Define the parameter space $\bTheta(K, s)  := \{\btheta = (\bbeta^{\top}, \balpha^{\top})^{\top} \in \RR ^ {p + q} \,|\, \bgamma \in \RR ^ K, \beta_j \in \{0, \gamma_1, \dots, \gamma_K\}, \forall j \in [p],  \|\bbeta\|_0 \leq s\}.$ Denote the index operator for the elements of $\bbeta$ with value $r$ by $\mG(\bbeta; r) = \{j\in[p]\,|\, \beta_j = r\}$ and the grouping operator by $\mathbb{G}(\bbeta) = \{\mG(\bbeta; r)\,|\, r\neq 0, \mG(\bbeta;r)\neq\emptyset\}$. Let $|\mG(\bbeta; r)|$ and $|\mathbb{G}(\bbeta)|$ be the cardinality of $\mG(\bbeta; r)$ and $\mathbb{G}(\bbeta)$, respectively.

Throughout this section, we write the $n \times p$ design matrix $\bX = (\bX_1, \dots, \bX_p)$ and the $n \times q$ matrix $\bZ = (\bZ_1, \dots, \bZ_q)$, where $\bX_j$ and $\bZ_k$ are the $j$th and $k$th columns of $\bX$ and $\bZ$, respectively.  

\subsubsection{Sensitivity to grouping accuracy}
We first define a distance $d(\bbeta,\bbeta')$ between two groupings that correspond to $\bbeta$ and $\bbeta'$ respectively:

\begin{defn}[Distance between groupings]
	Let $\mcF(\bbeta, \bbeta'):=  
	\brac{f \text{ is injective}:\mathbb{G}(\bbeta)\to \mathbb{G}(\bbeta')}.$
	Then for any $\bbeta, \bbeta'$ such that $|\mathbb{G}(\bbeta)| \leq |\mathbb{G}(\bbeta')|$, define 
	\begin{eqnarray}
	    \label{eq:group}
		d(\bbeta, \bbeta') := \min_{f \in \mcF(\bbeta, \bbeta')} \bigg|\bigcup_{\mcG_2 \in \mathbb{G}(\bbeta')} \bigg\{\mcG_2 \big\backslash \bigcup_{\mcG_1 \in \mathbb{G}(\bbeta)} \{\mcG_1 \cap f(\mcG_1)\}\bigg\}\bigg|. 
	\end{eqnarray}  
	\label{def:sensitivity}
\end{defn}

\noindent This distance is the minimum number of grouping labels that need to be changed to match $\mathbb{G}(\bbeta)$ and $\mathbb{G}(\bbeta')$. Specifically, $\cup_{\cG_1 \in \mathbb{G}(\bbeta)} \{\cG_1 \cap f(\cG_1)\}$ collects all the variables that are consistently labeled by $\mathbb{G}(\bbeta)$ and $\mathbb{G}(\bbeta')$ based on a mapping $f$. Therefore, $\cup_{\mcG_2 \in \mathbb{G}(\bbeta')} \{\mcG_2 \big\backslash \bigcup_{\mcG_1 \in \mathbb{G}(\bbeta)} \{\mcG_1 \cap f(\mcG_1)\}\}$ means to capture all the variables with inconsistent group labels in $\mathbb{G}(\bbeta)$ and $\mathbb{G}(\bbeta')$ based on mapping $f$. Figure \ref{fig:degree_sep} illustrates a specific setup with two possible grouping maps $f ^ {(1)}$ and $f ^ {(2)}$ in $\cF(\bbeta, \bbeta')$. One can see that $f ^ {(1)}$ gives three inconsistent group labels (three crosses) between $\mathbb{G}(\bbeta)$ and $\mathbb{G}(\bbeta')$, while $f ^ {(2)}$ gives four. Therefore, $f ^ {(1)}$ minimizes the objective in \eqref{eq:group}, implying that $d(\bbeta, \bbeta') = 3$.

\begin{figure}[h]
    \begin{tabular}{cc}
        \includegraphics[width=.28\linewidth]{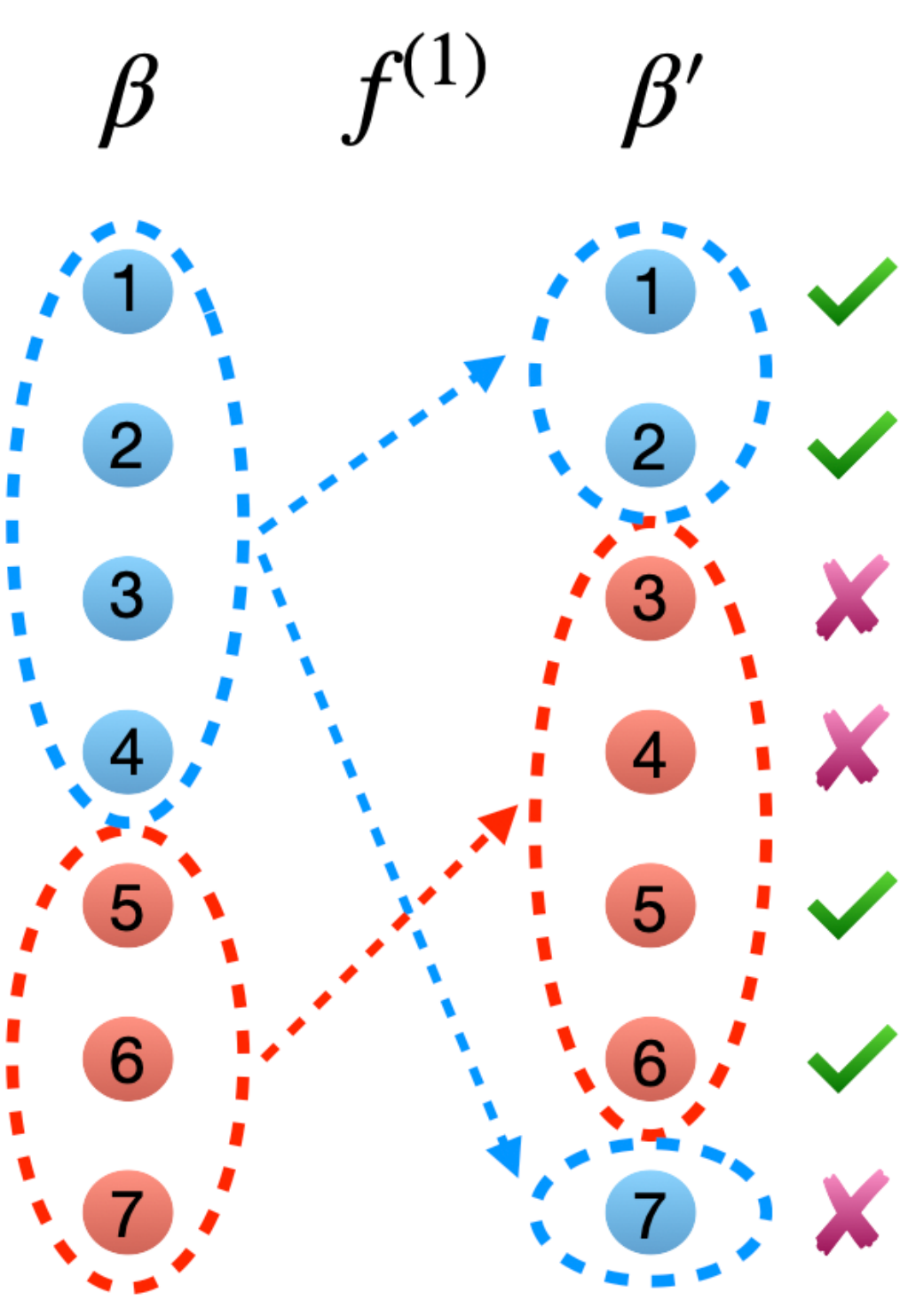} & \hspace{2cm}\includegraphics[width=.28\linewidth]{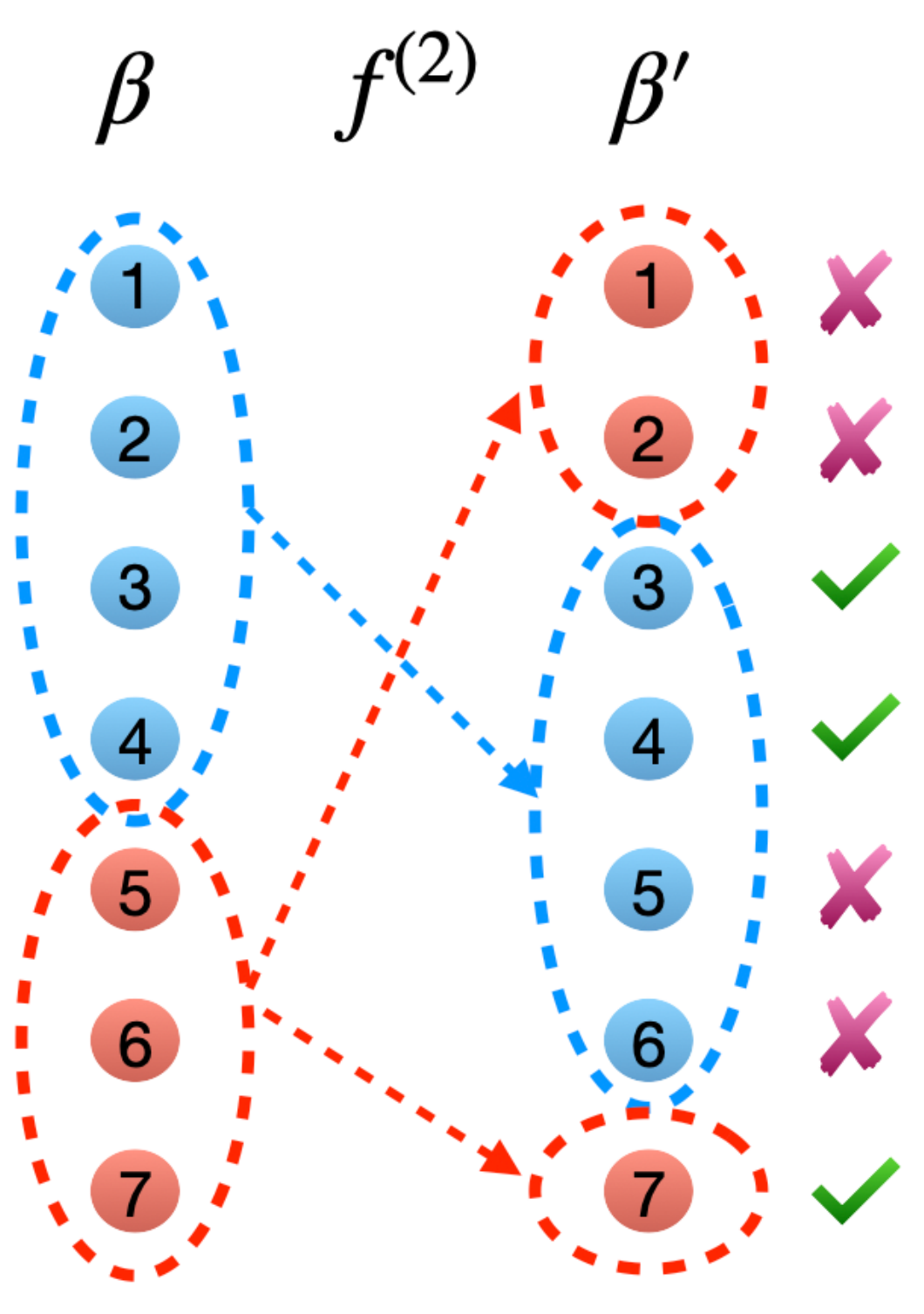}
    \end{tabular}
    \caption{Illustration of the grouping maps and grouping distance. Here $\bbeta, \bbeta' \in \RR ^ 7$, $|\mathbb{G}(\bbeta)| = |\mathbb{G}(\bbeta')| = 2$ and $|\cF(\bbeta, \bbeta')| = 2$. We write $\cF(\bbeta, \bbeta') = \{f^ {(1)}, f ^ {(2)}\}$ and illustrate these two maps on the left and right panels respectively. For clarity, we let $\cG$ and $f(\cG)$ share the same color for any $\cG \in \mathbb{G}(\bbeta)$. On the right border of each panel, for each feature, we use a check (cross) to indicate the consistency (inconsistency) between $\mathbb{G}(\bbeta)$ and $\mathbb{G}(\bbeta')$ according to the given grouping map.} 
    \label{fig:degree_sep}
\end{figure}

Next, we define a sensitivity measure of mean squared error (MSE) with respect to grouping error, which is shown later to determine the difficulty of identifying the true grouping:
\begin{defn}[Grouping sensitivity]
	\begin{align}
		c_{\min} \equiv c_{\min}(\btheta^*, \bX, \bZ) = \min_{\substack{{\btheta \in \bTheta(\abs{\mathbb{G}(\bbeta^*)}, \norm{\bbeta^*}_0)} \\ {\mathbb{G}(\bbeta) \neq \mathbb{G}(\bbeta ^ *)}}}\frac{\|\bX (\bbeta - \bbeta ^ *) + \bZ (\balpha - \balpha^*)\|^2_2}{n \max\left(d(\bbeta, \bbeta ^ *), 1 \right)}, \label{eq:dos}
	\end{align}
	where $\btheta^*=(\bbeta^{*\top},\balpha^{*\top})^\top\in\RR^{p+q}$.
\end{defn}

\noindent In words, $c_{\min}$ is the minimum increase of MSE due to a falsely grouped variable. A small $c_{\min}$ suggests that the MSE is insensitive to false grouping and thus makes it difficult to identify the true grouping. 

\subsubsection{Sufficient condition}
Given a grouping status $\mathbb{G}(\bbeta)$, define
\[
	\bX_{\mathbb{G}(\bbeta)} := \bigg(\sum_{k \in \mG(\bbeta; \gamma_1)}\bX_k, \dots, \sum_{k \in \mG(\bbeta; \gamma_{\abs{\mathbb{G}(\bbeta)}})}\bX_k\bigg), 
\]
which is a groupwise collapsed matrix by summing up columns of $\bX$ according to the groups in $\mathbb{G} (\bbeta)$. 

\begin{defn}[Oracle least squares estimator]
	Given the true coefficient $\bbeta ^ *$, the oracle least squares estimator $\hat{\btheta}^{\mathrm{ol}} = (\hat{\bbeta}^{\mathrm{ol} \top}, \hat{\balpha}^{\mathrm{ol}\top})^{\top}$ is defined as  
	\[
		\hat\btheta ^ {\mathrm{ol}} := \argmin_{\btheta: \mathbb{G}(\bbeta)=\mathbb{G}(\bbeta ^ *)} \|{\bY-\bX\bbeta-\bZ\balpha}\|_2 ^ 2.
	\] 
	More specifically, in $\hat{\bbeta}^{\mathrm{ol}} = (\hat{\beta}^{\mathrm{ol}}_1, \dots, \hat{\beta}^{\mathrm{ol}}_p)^{\top}$,  
	$\hat{\beta}^{\mathrm{ol}}_j$ is $\hat{\gamma}_k$ if $j \in \mathbb{G}(\bbeta ^ *; \gamma^*_{ k})$; $k = 1, \dots, K_0$, and $\hat{\beta}^{\mathrm{ol}}_j$ is $0$ if $j \in \mathbb{G}(\bbeta ^ *; 0)$, where 
	\[
		(\hat{\bgamma}^\top, \hat{\balpha}^\top) = (\hat{\gamma}_1, \dots, \hat{\gamma}_{K_0}, \hat{\balpha}^\top) = \argmin_{(\bgamma ^ \top, \balpha^{\top})^{\top} \in \mathbb{R}^{K_0 + q}}\|{\bY - \bX_{\mathbb{G}(\bbeta ^ *)}\bgamma - \bZ\balpha}\|^2_2.
	\]
\end{defn}

For any estimator $\hat{\btheta}=(\hat{\bbeta}^\top, \hat{\balpha}^{\top})^{\top}$ of $\btheta ^ *$, define the $0$-$1$ grouping risk $\cL_g(\hat{\btheta}; \btheta ^ *) := \mbP\paren[\big]{\mathbb{G}(\hat\bbeta)\neq \mathbb{G}(\bbeta ^ *)}$. Denote the solution to the \lof problem~\eqref{eq:prob} by $\hat{\btheta} ^ \mathrm{g} = (\hat{\bbeta}^{\mathrm{g}\top},\hat{\balpha}^{\mathrm{g}\top})^{\top}$. 
Recall that $K_0=\abs{\mathbb{G}(\bbeta^*)}$ and further define $s_0=\norm{\bbeta^*}_0$. The next theorem says that $\hat\btheta ^ {\mathrm{g}}$ consistently recovers $\hat\btheta ^ {\mathrm{ol}}$ when the grouping sensitivity $c_{\min} \gtrsim \log(pK_0) / n$. Section \ref{sec:nece} shows that this lower bound is necessary to achieve grouping consistency. 

%

\begin{thm}
	\label{thm:suff}
	Suppose that $K=K_0$ and $s=s_0$ in \eqref{eq:prob}.
	We have
	\begin{align*}
		\mbP(\hat\btheta ^ {\mathrm{g}}\neq \hat\btheta^{\mathrm{ol}})\leq
		 6\exp\brak[\bigg]{-\frac{3n}{40\sigma^2}\brac[\bigg]{c_{\min}-\frac{\sigma^2}{n}  \paren[\big]{27\log(pK_0)+12}}},
	\end{align*}
    which implies that when $c_{\min} \ge \frac{\sigma^2}{n}  \{d_1\log(pK_0)+ 12\}$ for some universal constant $d_1 > 27$, $\hat\btheta^{\mathrm{g}}$ consistently reconstructs $\hat\btheta^{\mathrm{ol}}$, i.e., as $n,p \to\infty$, $\cL_g(\hat \btheta^{\mathrm{g}}; \btheta ^ *)\to 0$. 	
\end{thm}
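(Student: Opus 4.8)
The plan is to reduce the event $\{\hat\btheta^{\mathrm g}\neq\hat\btheta^{\mathrm{ol}}\}$ to the event that $\hat\bbeta^{\mathrm g}$ recovers the wrong grouping, and then to bound the probability of the latter by a union bound over candidate groupings. Since every $\btheta$ with $\mathbb{G}(\bbeta)=\mathbb{G}(\bbeta^*)$ lies in $\bTheta(K_0,s_0)$, the oracle estimator $\hat\btheta^{\mathrm{ol}}$ is feasible for \eqref{eq:prob}, so optimality of $\hat\btheta^{\mathrm g}$ gives the basic inequality $\|\bY-\bX\hat\bbeta^{\mathrm g}-\bZ\hat\balpha^{\mathrm g}\|_2^2\le\|\bY-\bX\hat\bbeta^{\mathrm{ol}}-\bZ\hat\balpha^{\mathrm{ol}}\|_2^2=\|(\bI-\bP^{\mathrm{ol}})\bY\|_2^2$, where $\bP^{\mathrm{ol}}$ is the orthogonal projection onto $V_{\mathrm{ol}}:=\col([\bX_{\mathbb{G}(\bbeta^*)},\bZ])$; conversely, on $\{\mathbb{G}(\hat\bbeta^{\mathrm g})=\mathbb{G}(\bbeta^*)\}$ the point $\hat\btheta^{\mathrm g}$ is itself feasible for the oracle problem, so (under a mild full-rank condition on $[\bX_{\mathbb{G}(\bbeta^*)},\bZ]$ making the oracle fit uniquely $\bP^{\mathrm{ol}}\bY$) the two estimators coincide. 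Hence $\mbP(\hat\btheta^{\mathrm g}\neq\hat\btheta^{\mathrm{ol}})\le\mbP(\mathbb{G}(\hat\bbeta^{\mathrm g})\neq\mathbb{G}(\bbeta^*))$.

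Next I would fix a feasible grouping $G$ with $d:=d(G,\mathbb{G}(\bbeta^*))\ge1$ (one first checks that $G\neq\mathbb{G}(\bbeta^*)$ indeed forces $d\ge1$), write $\bX_G$ for the design collapsed according to $G$, and set $V_G:=\col([\bX_G,\bZ])$ and $\bb_G:=(\bI-\bP_{V_G})(\bX\bbeta^*+\bZ\balpha^*)$. On $\{\mathbb{G}(\hat\bbeta^{\mathrm g})=G\}$ the fitted vector lies in $V_G$, hence $\|\bY-\bX\hat\bbeta^{\mathrm g}-\bZ\hat\balpha^{\mathrm g}\|_2^2\ge\|(\bI-\bP_{V_G})\bY\|_2^2$; combining this with the basic inequality, writing $\bY=\bX\bbeta^*+\bZ\balpha^*+\bvarepsilon$ with $\bX\bbeta^*+\bZ\balpha^*\in V_{\mathrm{ol}}$, and expanding squared norms, one arrives at the deterministic-plus-Gaussian inequality
\begin{align*}
	\|\bb_G\|_2^2+2\langle\bb_G,\bvarepsilon\rangle+\|\bP^{\mathrm{ol}}\bvarepsilon\|_2^2\le\|\bP_{V_G}\bvarepsilon\|_2^2 .
\end{align*}
Two geometric facts feed into this. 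First, the infimum of $\|\bX(\bbeta-\bbeta^*)+\bZ(\balpha-\balpha^*)\|_2^2$ over $\{\btheta:\mathbb{G}(\bbeta)=G\}$ equals $\|\bb_G\|_2^2$, and every such $\btheta$ lies in $\bTheta(K_0,s_0)$ with $\mathbb{G}(\bbeta)\neq\mathbb{G}(\bbeta^*)$ and $d(\bbeta,\bbeta^*)=d$, so the definition \eqref{eq:dos} of $c_{\min}$ gives $\|\bb_G\|_2^2\ge n\,c_{\min}\,d$. Second, letting $C$ be the set of features labelled consistently by $G$ and $\mathbb{G}(\bbeta^*)$ under an optimal matching in $\mcF$, a counting argument shows the inconsistently labelled features form a set $T$ with $|T|\le2d$, and every $G$-collapsed column lies in $V_{\mathrm{ol}}+\mathrm{span}(\bX_T)$; consequently $V_G+V_{\mathrm{ol}}=V_{\mathrm{ol}}\oplus W_G$ with $\dim W_G\le2d$, and therefore $\|\bP_{V_G}\bvarepsilon\|_2^2-\|\bP^{\mathrm{ol}}\bvarepsilon\|_2^2\le\|\bP_{W_G}\bvarepsilon\|_2^2$.

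Plugging the second fact into the displayed inequality, on $\{\mathbb{G}(\hat\bbeta^{\mathrm g})=G\}$ we have $\|\bb_G\|_2^2+2\langle\bb_G,\bvarepsilon\rangle\le\|\bP_{W_G}\bvarepsilon\|_2^2$ with $\|\bb_G\|_2^2\ge nc_{\min}d$ and $\dim W_G\le2d$. Writing $\xi:=\langle\bb_G,\bvarepsilon\rangle/\|\bb_G\|_2\sim\cN(0,\sigma^2)$, this event is contained in $\{\xi<-\|\bb_G\|_2/4\}\cup\{\|\bP_{W_G}\bvarepsilon\|_2^2\ge\|\bb_G\|_2^2/2\}$; by a Gaussian tail bound the first has probability at most $\exp(-nc_{\min}d/32\sigma^2)$, and by the $\chi^2_{2d}$ moment generating function the second is at most $\mbP(\sigma^2\chi^2_{2d}\ge nc_{\min}d/2)\le\exp(-c_0\,nc_{\min}d/\sigma^2)$ for an absolute constant $c_0$, once $nc_{\min}/\sigma^2$ exceeds an absolute constant (the explicit constants $3/40$, $27$, $12$ in the statement come from optimising this split and the two tail estimates). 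Finally I would sum over the groupings $G$ at distance $d$: each is determined by the $O(d)$ touched true blocks, the $\le2d$ inconsistent features, and their reassignment, so $\log\#\{G:d(G,\mathbb{G}(\bbeta^*))=d\}\lesssim d\log(pK_0)$, and summing the per-$G$ bound over $G$ and then over $d\ge1$ gives a geometric series that converges whenever $c_{\min}\gtrsim\sigma^2\log(pK_0)/n$, yielding the stated bound and hence $\cL_g(\hat\btheta^{\mathrm g};\btheta^*)\to0$.

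I expect the geometric/combinatorial step to be the main obstacle: establishing $\dim W_G\le2d$ and, in tandem, the sharp lower bound $\|\bb_G\|_2^2\ge nc_{\min}d$ with a linear --- rather than $\max(d,1)$ --- dependence on $d$ requires a careful analysis of the optimal matching realising $d(G,\mathbb{G}(\bbeta^*))$, a bound on the symmetric difference of the two supports, and handling the measure-zero case in which the least-squares fit on the $G$-collapsed design corresponds to a strictly coarser grouping. Bookkeeping the absolute constants so that the tail and union-bound steps reproduce the exact form in the statement is the remaining, more mechanical, source of care.
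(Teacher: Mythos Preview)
Your proposal is correct and follows essentially the same argument as the paper. Both proofs (i) reduce $\{\hat\btheta^{\mathrm g}\neq\hat\btheta^{\mathrm{ol}}\}$ to the event of a wrong grouping, (ii) for each candidate grouping derive the basic inequality $\|\bb_G\|_2^2+2\langle\bb_G,\bvarepsilon\rangle\le\bvarepsilon^\top(\bP_{V_G}-\bP^{\mathrm{ol}})\bvarepsilon$, (iii) split this into a Gaussian--tail piece and a chi--square piece using the lower bound $\|\bb_G\|_2^2\ge nc_{\min}d$ from the definition of $c_{\min}$, and (iv) finish with a union bound over groupings whose log--cardinality is $O(d\log(pK_0))$. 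The paper carries out (iii) via an explicit $\delta$--split and MGF computations with the specific choices $t_1=1/5$, $t_2=3/8$, $\delta=4/5$ that produce the constants $3/40$, $27$, $12$; your $1/4$--$1/2$ split with standard Gaussian and $\chi^2$ tail bounds is the same device. The only substantive presentational difference is the rank control for the chi--square term: the paper bounds $\bvarepsilon^\top(\bP_{\mathbb{G}(\bbeta)}-\bP_{\mathbb{G}(\bbeta^*)})\bvarepsilon$ by the projection onto those \emph{group--sum} columns of $\bX_{\mathbb{G}(\bbeta)}$ that are not already columns of $\bX_{\mathbb{G}(\bbeta^*)}$, and then parameterises the union bound by a pair $(i,j)$ (mislabelled true features, swapped--in null features) with $j\le i=d$; you instead project onto the span of the individual feature columns indexed by the inconsistently--labelled set $T$ and argue $|T|\le 2d$. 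Both routes give a rank linear in $d$ and feed the same geometric series.
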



Define the sure screening event as $\cE := \{\cS^0 \subseteq \wt{\cS}\}$. Let $\hat{\btheta}^{\mathrm{sg}}$ denote the solution under the ``\textit{screen then group}'' strategy. The following corollary says that as long as $\cE$ enjoys sure screening with high probability, $\hat\btheta ^ {\mathrm{sg}}$ also consistently recovers $\hat\btheta ^ {\mathrm{ol}}$. Many variable screening techniques provably yield such a sure screening set under reasonable assumptions on the signal and design, e.g., Sure Independence Screening \citep{fan2008sure}, LASSO \citep[][Theorem~7.21]{wainwright2019high}, CoSaMP \citep[][Theorem~3.1]{fan2020best}, etc. In the subsequent numerical study, we choose CoSaMP to pre-screen variables for $L_0$-Fusion given its robustness against design collinearity. 

\begin{coro}\label{cor:high-d}
    When $K = K_0$, $s=s_0$ and $c_{\min} > \frac{\sigma^2}{n}  \paren[\Big]{27\log(pK_0)+12}$, 
	we have 
	\[
	    \mbP(\hat{\btheta}^{\mathrm{sg}} = \hat\btheta^{\mathrm{ol}}) \ge 1 - 6\exp\brak[\bigg]{-\frac{3n}{40\sigma^2}\brac[\bigg]{c_{\min}-\frac{\sigma^2}{n}  \paren[\Big]{27\log(pK_0)+12}}} - \mbP(\cE^c).
	\]
\end{coro}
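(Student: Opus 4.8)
The plan is to deduce Corollary~\ref{cor:high-d} from Theorem~\ref{thm:suff} by conditioning on the sure screening event $\cE = \{\cS^0 \subseteq \wt{\cS}\}$. First I would split the failure event,
\[
\mbP(\hat{\btheta}^{\mathrm{sg}} \neq \hat\btheta^{\mathrm{ol}}) \le \mbP\paren[\big]{\{\hat{\btheta}^{\mathrm{sg}} \neq \hat\btheta^{\mathrm{ol}}\}\cap\cE} + \mbP(\cE^c),
\]
so that it only remains to bound the first term on the right by the exponential appearing in the statement.

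On $\cE$ every nonzero coordinate of $\bbeta^*$ is retained, hence the grouping stage is exactly problem~\eqref{eq:prob} with the design $\bX$ replaced by the sub-design $\bX_{\wt{\cS}}$, which has $|\wt{\cS}| \le p$ columns. I would then verify that Theorem~\ref{thm:suff} applies verbatim to this reduced problem, using three observations. (i) The oracle least squares estimator is unchanged, because the groups $\mathbb{G}(\bbeta^*)$ only involve columns indexed in $\cS^0 \subseteq \wt{\cS}$, so the collapsed matrix $\bX_{\mathbb{G}(\bbeta^*)}$ — and hence $\hat\btheta^{\mathrm{ol}}$ — is the same whether built from $\bX$ or from $\bX_{\wt{\cS}}$. (ii) Restricting the design to the columns in $\wt{\cS}$ only shrinks the feasible set in the minimization~\eqref{eq:dos}, so the grouping sensitivity of the reduced problem obeys $c_{\min}(\btheta^*, \bX_{\wt{\cS}}, \bZ) \ge c_{\min}(\btheta^*, \bX, \bZ) = c_{\min}$, and $\btheta^*$ itself is feasible for the reduced parameter space precisely because we are on $\cE$. (iii) $|\wt{\cS}| \le p$ gives $\log(|\wt{\cS}| K_0) \le \log(pK_0)$.

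Applying the probability bound of Theorem~\ref{thm:suff} to the reduced \lof problem then yields
\[
\mbP\paren[\big]{\{\hat{\btheta}^{\mathrm{sg}} \neq \hat\btheta^{\mathrm{ol}}\}\cap\cE} \le 6\exp\brak[\bigg]{-\frac{3n}{40\sigma^2}\brac[\bigg]{c_{\min}-\frac{\sigma^2}{n}\paren[\big]{27\log(|\wt{\cS}|K_0)+12}}},
\]
and since the bracketed quantity is nonincreasing in $|\wt{\cS}|$, replacing $|\wt{\cS}|$ by $p$ only enlarges the right-hand side, which is therefore at most the exponential stated in the corollary; combining this with the first display finishes the proof.

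The main obstacle I anticipate is not any new inequality but the bookkeeping around the dependence between the random screening set $\wt{\cS}$ and the sample entering the grouping stage: the tail bound of Theorem~\ref{thm:suff} is established for a fixed design under Gaussian noise, so to invoke it ``given $\wt{\cS}$'' on the event $\cE$ I would run the screening on an independent subsample, after which, conditionally on $\wt{\cS}$, the grouping-stage data is still an i.i.d.\ draw from model~\eqref{model} and the argument of Theorem~\ref{thm:suff} goes through unchanged.
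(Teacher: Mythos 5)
Your decomposition $\mbP(\hat{\btheta}^{\mathrm{sg}} \neq \hat\btheta^{\mathrm{ol}}) \le \mbP(\{\hat{\btheta}^{\mathrm{sg}} \neq \hat\btheta^{\mathrm{ol}}\}\cap\cE) + \mbP(\cE^c)$ is the same union bound the paper uses, but the way you control the first term is genuinely different, and it is where the trouble lies. You propose to apply Theorem \ref{thm:suff} to the \emph{reduced} problem on $\bX_{\wt{\cS}}$, which requires you to (a) re-verify the theorem's hypotheses on the sub-design (your monotonicity claim $c_{\min}(\btheta^*,\bX_{\wt{\cS}},\bZ)\ge c_{\min}(\btheta^*,\bX,\bZ)$ and $\log(|\wt{\cS}|K_0)\le\log(pK_0)$ are both fine), and (b) justify invoking a fixed-design tail bound for a design whose column set $\wt{\cS}$ was chosen using the same noise. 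You correctly identify (b) as the obstacle, but your fix --- running the screening on an independent subsample --- changes the procedure: the corollary is stated for the ``screen then group'' estimator as defined in the paper, with no sample splitting, so your argument proves a statement about a different estimator. That residual dependence is a genuine gap in the proof as written.

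The paper sidesteps all of this with a pathwise set inclusion: it observes that $\{\hat{\btheta}^{\mathrm{g}} = \hat{\btheta}^{\mathrm{ol}}\}\cap\cE \subseteq \{\hat{\btheta}^{\mathrm{sg}} = \hat{\btheta}^{\mathrm{ol}}\}$, essentially because on $\cE$ the oracle is feasible for the reduced problem and the reduced feasible set is contained in the full one, so if the full-design minimizer is the oracle then so is the reduced-design minimizer. Hence
\[
\mbP(\hat{\btheta}^{\mathrm{sg}} = \hat{\btheta}^{\mathrm{ol}}) \ge 1 - \mbP(\hat{\btheta}^{\mathrm{g}} \ne \hat{\btheta}^{\mathrm{ol}}) - \mbP(\cE^c),
\]
and Theorem \ref{thm:suff} is applied only to $\hat{\btheta}^{\mathrm{g}}$, i.e.\ to the full, deterministic design. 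This event-level argument holds for every realization of the data regardless of how $\wt{\cS}$ correlates with the noise, so no conditioning, no sample splitting, and no re-derivation of the tail bound on the random sub-design are needed. If you replace your step (b) with this inclusion, your proof collapses to the paper's.
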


\subsubsection{Necessary condition}\label{sec:nece}
For $\ell >0$, consider the following subspace of $\bTheta(K_0, s_0)$:
\[
    \bTheta_c(K_0, s_0, \ell) := \left\{\btheta: \btheta \in \bTheta(K_0, s_0), c_{\min}(\btheta, \bX, \bZ) \geq \ell \right\}. 
\]
We now present a lower bound for the minimax $0$-$1$ grouping risk over $\bTheta_c(K_0, s_0, \ell)$, which enables us to deduce the necessity of the lower bound of $c_{\min}$ in Theorem \ref{thm:suff} (up to a universal constant) in connection to a theoretical guarantee for the selection and grouping consistency simultaneously. For notational convenience, define the following subspace $\wt\bTheta(K_0, s_0)$ of $\bTheta(K_0, s_0)$ with well separated signal strengths across groups and balanced group sizes: 
\[
\begin{aligned}
\wt \bTheta & (K_0, s_0) := \\
& \big\{\btheta ^ * \in \bTheta(K_0, s_0):  {|\beta^*_j-\beta^*_{j'}|\geq 1, \forall \beta^*_j \neq \beta^*_{j'} }, |\mG(\bbeta ^ *, \gamma_k ^ *)| \le 2|\mG(\bbeta ^ *, \gamma_{k'} ^ *)|, \forall k, k' \in [K_0] \big \}. 
\end{aligned}
\]



\begin{thm}
	\label{the:nece}
    Define $$r(\bX, \bZ, K_0, s_0) := \frac{\max_{1 \leq j \leq p}n^{-1}\|\bX_j\|^2_2} {\min_{\substack{{\btheta^* \in \wt\bTheta(K_0, s_0)}}}c_{\min}(\btheta^*, \bX, \bZ)}.$$
	For any $K_0\geq 1$, $p\geq s_0\geq K_0$ and $\ell>0$, we have
    \begin{align*}
		\inf_{\hat\btheta}\sup_{\btheta^*\in \bTheta_c(K_0, s_0, \ell)} \cL_g(\hat{\btheta};\btheta^*)\geq 1-\frac{2nr(\bX,\bZ,K_0, s_0)\ell+\sigma^2\log 2}{\sigma^2\log(\floor{\frac{K_0+3}{4}}p)}.
    \end{align*}	
    Consequently, if $\inf_{\hat\btheta}\sup_{\btheta^* \in \bTheta_c(K_0, s_0, \ell)}\cL_g(\hat{\btheta};\btheta^*) \to 0,\text{as } n, p \to \infty$, one must have that $$\ell \geq \frac{\sigma^2\big\{\log\bigl(\floor{\frac{K_0+3}{4}}p\bigr)-\log 2\big\}}{2nr(\bX, \bZ, K_0, s_0)}.$$ 
\end{thm}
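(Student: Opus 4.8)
The plan is to prove the displayed minimax lower bound by Fano's method applied to a purpose‑built finite sub‑family of $\bTheta_c(K_0,s_0,\ell)$, and then read off the ``consequently'' clause by solving that inequality for $\ell$. Because $\cL_g(\hat\btheta;\btheta^*)=\mbP\bigl(\mathbb{G}(\hat\bbeta)\neq\mathbb{G}(\bbeta^*)\bigr)$ only charges grouping mistakes, the standard reduction to testing applies: if we can exhibit $M:=\floor{\frac{K_0+3}{4}}p$ parameters $\btheta^{(1)},\dots,\btheta^{(M)}\in\bTheta_c(K_0,s_0,\ell)$ with pairwise distinct groupings, then any estimator induces a test of the index (output some $m$ with $\mathbb{G}(\bbeta^{(m)})=\mathbb{G}(\hat\bbeta)$, or any $m$ otherwise), whence $\inf_{\hat\btheta}\sup_{\btheta^*\in\bTheta_c(K_0,s_0,\ell)}\cL_g(\hat\btheta;\btheta^*)\ge\inf_{\psi}\max_{m}\mbP_{\btheta^{(m)}}(\psi\neq m)$, and Fano's inequality gives
\[
  \inf_{\hat\btheta}\sup_{\btheta^*\in\bTheta_c(K_0,s_0,\ell)}\cL_g(\hat\btheta;\btheta^*)\ \ge\ 1-\frac{\max_{m\neq m'}\KL(\mbP_{\btheta^{(m)}}\|\mbP_{\btheta^{(m')}})+\log 2}{\log M}.
\]
Since $\varepsilon\sim\cN(0,\sigma^2)$ and the designs are fixed, $\KL(\mbP_{\btheta^{(m)}}\|\mbP_{\btheta^{(m')}})=\tfrac{1}{2\sigma^2}\|\bX(\bbeta^{(m)}-\bbeta^{(m')})+\bZ(\balpha^{(m)}-\balpha^{(m')})\|_2^2$, so the whole task reduces to a packing with (i) $M=\floor{\frac{K_0+3}{4}}p$ members, (ii) pairwise distinct groupings, (iii) membership in $\bTheta_c(K_0,s_0,\ell)$, and (iv) every pairwise prediction gap at most $4nr\ell$ (equivalently $\max_{m\neq m'}\KL\le 2nr\ell/\sigma^2$).

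The device that delivers (iii) and (iv) is the scaling identity $c_{\min}(t\btheta,\bX,\bZ)=t^2\,c_{\min}(\btheta,\bX,\bZ)$ for $t>0$, which follows from \eqref{eq:dos} since the grouping $\mathbb{G}(\cdot)$, the grouping distance $d(\cdot,\cdot)$, the sparsity and $|\mathbb{G}(\cdot)|$ are all scale‑invariant. Put $c^*:=\min_{\btheta^*\in\wt\bTheta(K_0,s_0)}c_{\min}(\btheta^*,\bX,\bZ)$, so that $r(\bX,\bZ,K_0,s_0)=\max_j n^{-1}\|\bX_j\|_2^2/c^*$. I will take $\btheta^{(m)}=t\,\wt\btheta^{(m)}$ with $t=\sqrt{\ell/c^*}$ and each $\wt\btheta^{(m)}\in\wt\bTheta(K_0,s_0)$ (sharing a common $\wt\balpha$); then $c_{\min}(\btheta^{(m)},\bX,\bZ)=t^2 c_{\min}(\wt\btheta^{(m)},\bX,\bZ)\ge t^2 c^*=\ell$, so $\btheta^{(m)}\in\bTheta_c(K_0,s_0,\ell)$ with no further work, giving (iii); and if the unscaled family is arranged so that $\wt\bbeta^{(m)}-\wt\bbeta^{(m')}$ is always a $\pm1$ vector supported on at most two coordinates, then $\|\bX(\bbeta^{(m)}-\bbeta^{(m')})\|_2^2=t^2\|\bX(\wt\bbeta^{(m)}-\wt\bbeta^{(m')})\|_2^2\le t^2\cdot4\max_j\|\bX_j\|_2^2=4nr\ell$, giving (iv). What remains is the combinatorial core: to produce, inside $\wt\bTheta(K_0,s_0)$, a family of $\floor{\frac{K_0+3}{4}}p$ configurations with pairwise distinct groupings, any two of which differ by a unit move on two coordinates. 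The construction groups the $K_0$ unit‑separated nonzero values into $\floor{\frac{K_0+3}{4}}$ blocks of at most four consecutive values and, within each block, generates $p$ configurations by relabeling one feature between two value‑adjacent groups (or toggling one feature between value $\gamma_1$ and $0$), always combined with one fixed compensating move so as to preserve $\|\wt\bbeta\|_0\le s_0$, the balanced‑group‑size condition, unit separation of the values, and $|\mathbb{G}(\wt\bbeta)|=K_0$ (in particular, no group collapses); since relative to one common reference configuration each gadget is a single‑coordinate, unit‑magnitude change, any two configurations then differ by a $\pm1$ vector on two coordinates, as required.

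Combining: because $\{\btheta^{(m)}\}\subseteq\bTheta_c(K_0,s_0,\ell)$, feeding $\max_{m\neq m'}\KL\le 2nr\ell/\sigma^2$ and $\log M=\log(\floor{\frac{K_0+3}{4}}p)$ into the Fano bound yields exactly
\[
  \inf_{\hat\btheta}\sup_{\btheta^*\in\bTheta_c(K_0,s_0,\ell)}\cL_g(\hat\btheta;\btheta^*)\ \ge\ 1-\frac{2nr(\bX,\bZ,K_0,s_0)\ell+\sigma^2\log 2}{\sigma^2\log(\floor{\frac{K_0+3}{4}}p)},
\]
which is the asserted inequality. For the consequence, if the left side $\to 0$ as $n,p\to\infty$ then the right side cannot stay bounded below by a positive constant, forcing $2nr\ell+\sigma^2\log 2\ge(1-o(1))\,\sigma^2\log(\floor{\frac{K_0+3}{4}}p)$; dropping the $o(1)$ and rearranging gives $\ell\ge\sigma^2\{\log(\floor{\frac{K_0+3}{4}}p)-\log 2\}/\{2nr(\bX,\bZ,K_0,s_0)\}$.

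I expect the main obstacle to be the packing construction itself: one must hit the count $\floor{\frac{K_0+3}{4}}p$ on the nose while keeping every configuration inside $\wt\bTheta(K_0,s_0)$ (balanced groups with no accidental collapse, unit‑separated nonzero values, sparsity $\le s_0$, exactly $K_0$ groups) and keeping every pairwise coefficient difference $\pm1$ on two coordinates — this last property being precisely what pins down the sharp constant ``$2$'' multiplying $nr\ell$, and also what makes unit moves between value‑adjacent groups scarce, so that the block/gadget bookkeeping demands care. A secondary, routine loose end is the degenerate regime (e.g.\ $\ell$ large enough that the claimed right side is $\le 0$, or $\floor{\frac{K_0+3}{4}}p<2$) in which the inequality holds vacuously and Fano is not invoked.
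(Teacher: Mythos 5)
Your overall strategy is the same as the paper's: a Fano-type reduction to testing over a finite packing of $\floor{\frac{K_0+3}{4}}p$ distinct groupings, Gaussian KL controlled through $\max_j\|\bX_j\|_2^2$ and converted to $r(\bX,\bZ,K_0,s_0)$, and a quadratic rescaling to place the packing inside $\bTheta_c(K_0,s_0,\ell)$. (The paper uses Birg\'e's averaged-KL version of Fano and tunes a scale parameter $\gamma_{\min}$ by continuity so that $\min_{\btheta:\bbeta\in\cS_{\gamma_{\min}}}c_{\min}=\ell$, whereas you scale by $t=\sqrt{\ell/c^*}$ directly; that difference is cosmetic.)

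The genuine gap is the packing construction, which you defer but which is the entire technical content of the paper's proof (five explicit steps plus figures). Worse, the constraints you impose on it are mutually inconsistent at the required cardinality. You need \emph{every pairwise} difference $\wt\bbeta^{(m)}-\wt\bbeta^{(m')}$ to be a $\pm1$ vector on at most two coordinates in order to get $\|\bX(\wt\bbeta^{(m)}-\wt\bbeta^{(m')})\|_2^2\le 4\max_j\|\bX_j\|_2^2$ and hence the constant $2nr\ell$ in the final bound. Pairwise two-coordinate unit differences essentially force every element to be a single adjacent-value relabeling of one common reference; but a single coordinate can only move to one of at most two unit-adjacent values, so such a family has cardinality $O(p)$, which falls short of $\floor{\frac{K_0+3}{4}}p\asymp K_0p/4$ once $K_0$ is large. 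Your ``fixed compensating move per block'' repair destroys the property you need: two configurations in different blocks then differ on up to four coordinates with $\ell_1$-distance up to $4$, inflating the KL bound to $8nr\ell/\sigma^2$ and the numerator of the final bound to $8nr\ell$ rather than $2nr\ell$. The paper's construction accepts exactly this structure (each element is within two coordinates of $\bbeta^{(0)}$, with $\ell_1$-distance at most $2\gamma_{\min}/K_0$ from the reference, hence up to $4\gamma_{\min}/K_0$ pairwise), so if you follow that route you must either track the resulting constant honestly or exploit the averaged form of the KL term in Birg\'e's lemma; as written, your argument does not deliver the stated constant, and the combinatorial core that would decide the issue is missing. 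A further small point: membership of every packing element in $\wt\bTheta(K_0,s_0)$ (balanced group sizes, no group collapsing, $\|\bbeta\|_0\le s_0$) is asserted rather than verified, and it is exactly what makes the reduction to $c^*=\min_{\wt\bTheta}c_{\min}$, and hence to $r$, legitimate.
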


Quantity $r(\bX, \bZ, K_0 ,s_0)$ plays an important role in the lower bound above, which deserves some discussion.  We conjecture that under an restricted eigenvalue (RE) assumption \citep{BRT09, VBu09, NRW12} and an assumption of bounded marginal variance of the features, $r(\bX, \bZ, K_0 ,s_0) \lesssim 1$. Specifically, write $\wt{\bX} = (\bX, \bZ)$. Under the RE condition that $n^ {-1} \|\wt{\bX} \bv\|_2 ^ 2 \ge \kappa \ltwonorm{\bv} ^ 2$ for any $\bv \in \RR ^ {p + q}$ with $\|\bv\|_0 \le (2s_0 + q)$ and some $\kappa > 0$, we have that 
\[
    \min_{\substack{{\btheta^* \in \wt\bTheta(K_0, s_0)}}} c_{\min}(\btheta ^ *, \bX, \bZ) \ge \min_{\substack{{\btheta^* \in \wt\bTheta(K_0, s_0)}}} \min_{\substack{{\btheta \in \bTheta(|\mathbb{G}(\bbeta ^ *)|, \|\bbeta ^ *\|_0)} \\ {\mathbb{G}(\bbeta) \neq \mathbb{G}(\bbeta ^ *)}}}\frac{\kappa \|\bbeta - \bbeta ^ *\|_2 ^ 2}{\max(d(\bbeta, \bbeta ^ *), 1)}. 
\]
The following proposition considers a special case of two groups ($K_0 = 2$) and shows that the RHS of the inequality above is well bounded from below, so that $r(\bX, \bZ, K_0 ,s_0) \lesssim 1$ if $n ^ {-1}\max_{j \in [p]} \|\bX_j\|_2 ^ 2 \lesssim 1$. 
\begin{prop}\label{prop:beta}
Under the RE condition above, we have for any $s_0 \ge 2$ that
\[
    \min_{\substack{{\btheta^* \in \wt\bTheta(2, s_0)}}} \min_{\substack{{\btheta \in \bTheta(|\mathbb{G}(\bbeta ^ *)|, \|\bbeta ^ *\|_0)} \\ {\mathbb{G}(\bbeta) \neq \mathbb{G}(\bbeta ^ *)}}}\frac{ \|\bbeta - \bbeta ^ *\|_2 ^ 2}{\max(d(\bbeta, \bbeta ^ *), 1)} \gtrsim 1.
\]
\end{prop}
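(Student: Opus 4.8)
The plan is to prove the uniform pointwise bound
$\|\bbeta - \bbeta^*\|_2^2 \ge c\,\max\{d(\bbeta, \bbeta^*),\,1\}$
for a universal $c>0$, over all $\btheta^* \in \wt\bTheta(2, s_0)$ and all $\btheta \in \bTheta(\abs{\mathbb{G}(\bbeta^*)}, \norm{\bbeta^*}_0)$ with $\mathbb{G}(\bbeta) \neq \mathbb{G}(\bbeta^*)$; taking infima then yields the proposition. Write $\mathbb{G}(\bbeta^*) = \{G_1^*, G_2^*\}$ for the two nonzero groups of $\bbeta^*$, $G_\ell^*$ carrying the common value $\gamma_\ell^*$, and $G_0^* = \mG(\bbeta^*;0)$. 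From the definition of $\wt\bTheta(2,s_0)$ I extract two facts used throughout: the \emph{separation} $\min\{|\gamma_1^*|,|\gamma_2^*|,|\gamma_1^*-\gamma_2^*|\}\ge 1$ (apply the gap requirement to coordinate pairs straddling $G_1^*$–$G_2^*$ and, since $G_0^*\neq\emptyset$, $G_1^*$–$G_0^*$), and the \emph{balance} $\min\{|G_1^*|,|G_2^*|\}\ge\abs{G_1^*\cup G_2^*}/3$; I also record the crude bound $d(\bbeta,\bbeta^*)\le\abs{G_1^*\cup G_2^*}$. The estimate is assembled from two sub-bounds: \textbf{(A)} $\|\bbeta-\bbeta^*\|_2^2\ge\tfrac14$, which covers $d(\bbeta,\bbeta^*)\le 1$, and \textbf{(B)} $\|\bbeta-\bbeta^*\|_2^2\gtrsim d(\bbeta,\bbeta^*)$.

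For (A), argue by contradiction: if $\|\bbeta-\bbeta^*\|_2^2<\tfrac14$ then $|\beta_j-\beta_j^*|<\tfrac12$ for every $j$, so by separation the open $\tfrac12$-interval about $\beta_j^*$ meets $\{0,\gamma_1^*,\gamma_2^*\}$ only at $\beta_j^*$. Since $G_1^*,G_2^*\neq\emptyset$, $\bbeta$ must carry a nonzero value $u_1$ within $\tfrac12$ of $\gamma_1^*$ and a value $u_2$ within $\tfrac12$ of $\gamma_2^*$, with $u_1\neq u_2$ because $|\gamma_1^*-\gamma_2^*|\ge 1$; as $\bbeta$ admits at most two nonzero values these must be exactly $u_1,u_2$. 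The displacement constraint then pins $\beta_j=u_1$ on $G_1^*$, $\beta_j=u_2$ on $G_2^*$, $\beta_j=0$ on $G_0^*$, so $\mathbb{G}(\bbeta)=\mathbb{G}(\bbeta^*)$ — a contradiction.

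For (B), split on $\abs{\mathbb{G}(\bbeta)}$. The degenerate cases $\bbeta=\bzero$ and $\abs{\mathbb{G}(\bbeta)}=1$ are immediate, since each dislocated coordinate is charged at least $\min\{(\gamma_1^*)^2,(\gamma_2^*)^2\}\ge 1$ or at least $\tfrac14$ (a single $\bbeta$-value cannot be within $\tfrac12$ of both $\gamma_1^*$ and $\gamma_2^*$), and a direct count gives $\|\bbeta-\bbeta^*\|_2^2\gtrsim d$. In the main case $\mathbb{G}(\bbeta)=\{H_1,H_2\}$ with values $\gamma_1,\gamma_2$, put $a_{k\ell}=|H_k\cap G_\ell^*|$ and $b_\ell=|G_\ell^*|-a_{1\ell}-a_{2\ell}$ (coordinates of $G_\ell^*$ zeroed by $\bbeta$); unwinding the definition of $d$ gives $d(\bbeta,\bbeta^*)=b_1+b_2+\min\{a_{11}+a_{22},\,a_{12}+a_{21}\}$, while $\|\bbeta-\bbeta^*\|_2^2\ge (b_1+b_2)+a_{11}p+a_{12}q+a_{21}r+a_{22}t$ with $p=(\gamma_1-\gamma_1^*)^2$, $q=(\gamma_1-\gamma_2^*)^2$, $r=(\gamma_2-\gamma_1^*)^2$, $t=(\gamma_2-\gamma_2^*)^2$, and separation forces $p+q\ge\tfrac12$, $r+t\ge\tfrac12$. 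Casing on which of $p,q,r,t$ drop below $\tfrac14$ — at most one of $\{p,q\}$ and one of $\{r,t\}$ can — one checks that in every configuration except ``$p,r$ both small'' and ``$q,t$ both small'' some sub-collection of the four weighted terms already sums to at least $\tfrac14\min\{a_{11}+a_{22},\,a_{12}+a_{21}\}$, so that $\|\bbeta-\bbeta^*\|_2^2\ge\tfrac14 d$.

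The remaining configuration is the crux: both $\bbeta$-values lie within $\tfrac12$ of a \emph{single} $\bbeta^*$-value, say $\gamma_1,\gamma_2\in(\gamma_1^*-\tfrac12,\gamma_1^*+\tfrac12)$. Then none of $\gamma_1,\gamma_2,0$ lies within $\tfrac12$ of $\gamma_2^*$, so \emph{every} coordinate of $G_2^*$ is charged at least $\tfrac14$, and hence $\|\bbeta-\bbeta^*\|_2^2\ge\tfrac14|G_2^*|\ge\tfrac{1}{12}\abs{G_1^*\cup G_2^*}\ge\tfrac{1}{12}\,d(\bbeta,\bbeta^*)$ — here the balance $|G_1^*|\le 2|G_2^*|$ is exactly what forces $|G_2^*|\ge\abs{G_1^*\cup G_2^*}/3$. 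This is the only place the full definition of $\wt\bTheta$ enters, and it is the obstacle the whole argument turns on: without balanced group sizes $G_2^*$ could be a vanishing fraction of $G_1^*\cup G_2^*$ and no uniform constant would survive. Collecting (A) and (B) across all cases completes the proof.
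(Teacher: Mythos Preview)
Your proof is correct and takes a genuinely different route from the paper's. The paper's argument fixes the partition $\mathbb{G}(\bbeta)$ and first minimizes $\|\bbeta'-\bbeta^*\|_2^2$ over all $\bbeta'$ sharing that partition (the optimum being the groupwise average of $\bbeta^*$), derives closed-form expressions for this minimum and for $d(\bbeta,\bbeta^*)$ in terms of the intersection sizes, and then lower-bounds the ratio via an asymptotic case split on whether these sizes are $o(s_0)$ or $\asymp s_0$. You instead work with the given $\bbeta$ directly and run a finite case analysis on which of the four cross-gaps $(\gamma_k-\gamma_\ell^*)^2$ fall below $1/4$, using only the separation $|\gamma_1^*-\gamma_2^*|\ge 1$ to force $p+q,\,r+t\ge 1/2$; the balance condition then enters exactly once, in your ``crux'' configuration. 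This buys you explicit constants valid for every $s_0\ge 2$ (no asymptotics), and you also handle the degenerate regimes $|\mathbb{G}(\bbeta)|\le 1$ that the paper's calculation does not treat. One small caveat: your deduction $|\gamma_\ell^*|\ge 1$ rests on $G_0^*\neq\emptyset$, which need not hold when $s_0=p$ and $\|\bbeta^*\|_0=p$; the paper's proof shares this blind spot (it restricts to $\bbeta$ with full support and exactly two groups), and indeed without some separation of the $\gamma_\ell^*$ from zero the assertion can fail, so this is a common implicit hypothesis rather than a defect particular to your argument.
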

We emphasize that the well-separated signals and balanced group sizes in the definition of $\wt\bTheta$ are both essential to guarantee the conclusion of Proposition \ref{prop:beta}. Violating either of the two constraints in the definition of $\wt\bTheta$ can let the double minimum above vanish asymptotically as $s_0 \to \infty$.

\section{Mixed integer optimization formulation}\label{sec:mixed}
Given the strong statistical guarantee established for \lof in the previous section, we now switch our focus to the computational aspect of the problem.
In this section, we leverage mixed integer optimization techniques to solve the combinatorial problem \eqref{eq:prob}. Recently, \cite{bertsimas2016best} proposed a MIO approach to solve the best subset selection problem of a remarkably enhanced scale. This inspires us to formulate \lof as a MIO problem, for which we can resort to modern MIO solvers. In Section \ref{sec:mio_l0}, we introduce the MIO formulation of \lof. Then we present a warm start algorithm in Section \ref{sec:warm} to further accelerate the MIO solver.


\subsection{MIO formulations for homogeneity fusion}\label{sec:mio_l0}

Generally speaking, a MIO problem is formulated as follows:
\begin{eqnarray}
\label{eq:mio}
&& \hspace{-0.3cm}\min_{\balpha \in \RR ^ m} \hspace{0.06cm} \balpha^{\top}\bQ\balpha + \balpha^{\top}\ba \\
&&\text{s.t.} \hspace{0.2cm} \bA\balpha \leq \bb, \nonumber\\
&&\hspace{0.7cm} \alpha_j \in \{0, 1\}, \hspace{0.3cm} j \in \mathcal{I}, \nonumber\\
&&\hspace{0.7cm} \alpha_j \geq 0, \hspace{0.9cm} j \notin \mathcal{I}, \nonumber 
\end{eqnarray} 
where $\ba \in \mbR ^ {m}, \bA \in \mbR^{h \times m}$, $\bb \in \mbR^{h}$, and $\bQ \in \mbR^{m \times m}$ is positive semi-definite.  The symbol ``$\leq$" represents element-wise inequalities. $\mcI$, an index subset of $[m]$, identifies the binary components of $\balpha$. The mixture of discrete and continuous components of $\balpha$ justifies the name of mixed integer programming. For more comprehensive background of MIO, we refer the readers to \cite{bertsimas2005optimization} and \cite{junger2013facets}. Some popular MIO solvers include CPLEX, GLPK, MOSEK and GUROBI. Thanks to the branch-and-bound techniques \citep{cook1995combinatorial}, these solvers can provide both feasible solutions and lower bounds of the optimal objective value, from which we can learn how far a current solution is from the global optimum. 


Now we  introduce the MIO formulation for problem (\ref{eq:prob}):
\begin{eqnarray}
\label{eq:problem}
&&\hspace{-1.03cm}\min_{\substack{\balpha \in \RR ^ q, \bbeta \in \RR ^ p, \\ \bgamma \in \RR ^ {K}, \bOmega \in \{0, 1\} ^ {p \times (K + 1)}}}\sum_{i=1}^n(y_i - \bx_i^{\top}\bbeta - \bz_i^\top \balpha)^2, \\
&& \text{subject to:} \hspace{0.5cm} \omega_{jk} \in \{0, 1\}, ~~\forall k \in \{0\} \cup[K], j \in [p] \nonumber\\
&&\hspace{2.1cm} \omega_{jk}(\beta_j - \gamma_k) = 0, ~~\forall k \in [K], j \in [p] \nonumber\\
&&\hspace{2.1cm} \omega_{j0}\beta_j = 0, ~~\forall j \in [p] \nonumber\\
&&\hspace{2.1cm} \gamma_k < \gamma_{k+1}, ~~\forall k \in [K - 1] \nonumber\\
&&\hspace{2.1cm} \sum_{k = 0}^{K}\omega_{jk} = 1, ~~\forall j \in [p] \nonumber\\ 
&&\hspace{2.1cm} \sum_{j = 1}^p \omega_{j0} \geq p - s. \nonumber
\end{eqnarray} 
Here the number of groups $K$ and the sparsity $s$ are prespecified, which will be tuned by, for example, cross-validation. For any $j \in [p]$ and $k \in \{0\} \cup [K]$, we use $\omega_{jk}$ to denote the $(j, k + 1)$ entry of $\bOmega$. For any $j \in [p]$ and $k \in [K]$, $\omega_{jk} = 1$ ($\omega_{jk} = 0$) means that the $j$th covariate is (not) in the $k$-th group. To see why this is true, note that  $\omega_{jk}(\beta_j - \gamma_k) = 0$ enforces $\beta_j = \gamma_k$ when $\omega_{jk} = 1$. Similarly, $\omega_{j0} = 1$ implies that $\beta_j = 0$, given the constraint that  $\omega_{j0}\beta_j = 0$. These types of constraints correspond to Specially Ordered Sets of type 1 (SOS-1) in \cite{beale1970special} and can be replaced by linear constraints \citep{vielma2011modeling,markowitz1957solution,dantzig1960significance}. The constraint $\gamma_k < \gamma_{k+1}$ resolves the identifiability issue so that $\{\gamma_{k}\}_{k \in [K]}$ can be uniquely determined. $\sum_{k=1}^{K}\omega_{jk} = 1$ implies that each covariate belongs to exactly one group. Finally, $\sum_{j=1}^p\omega_{j0} \geq p - s$ ensures the size of the zero-valued group to be bigger than $p - s$, thereby constraining the sparsity of $\bbeta$ below $s$. It is noteworthy that the  solution of problem~\eqref{eq:problem} can have fewer than $K$ groups.  


Problem \eqref{eq:problem} can be easily extended to accommodate prior knowledge regarding  group structures. For instance, some covariates are known in advance to be in the same group, \textit{say}, $\beta_{j \in \mathcal{J}}$ are equal for a set $\cJ \subset [p]$. Then we can incorporate this information into \eqref{eq:problem} by adding the constraint that $\omega_{j_1k} = \omega_{j_2k}, \forall j_1, j_2 \in \cJ, j_1 \neq j_2, k \in \{0\} \cup [K]$. Another example is that we know no pair of covariates among $\{X_j\}_{j\in \mathcal{J}}$ should belong to the same group. Then we can add the constraint that $\sum_{j \in \mathcal{J}}\omega_{jk} \leq 1, k = \{0\} \cup [K]$.

\subsection{Warm start algorithm}\label{sec:warm}
This section introduces a discrete first-order algorithm to provide a warm start for the MIO problem \eqref{eq:problem}.  Our algorithm is inspired by \cite{bertsimas2016best}, who proposed a similar algorithm to initialize a MIO solver to solve the BSS problem. Since this algorithm is not limited to the square loss objective in the \lof problem, we extend the original \lof problem to embrace a wider range of objective functions. 

Suppose we are interested in a convex objective function $g(\btheta)$ satisfying that: 
\begin{enumerate}[label=(\roman*)]
    \item $g(\btheta) \geq C_2 > -\infty$ for some universal constant $C_2$;\label{cond:bounded}
    \item $g(\btheta)$ has Lipschitz continuous gradient, i.e., $\|\nabla g(\btheta) - \nabla g(\tilde{\btheta})\|_2 \leq l\|\btheta - \tilde{\btheta}\|_2$ for some positive $l$ and any $\btheta, \tilde\btheta \in \bTheta(K, s)$, which is defined in the beginning of Section \ref{sec:stat_theory}.\label{cond:Lipschitz}
\end{enumerate}
 Consider the following generalized \lof problem: 
\begin{eqnarray}
\label{eq:problemws}
&&\hspace{-1.4cm}\min_{\btheta \in \bTheta(K, s)} \hspace{0.02cm}~ g(\btheta). 
\end{eqnarray} 
We propose an algorithm to attain a feasible point close to the solution of problem \eqref{eq:problemws}, based on ideas from projected gradient descent methods \citep{nesterov2004introductory,nesterov2013gradient}. Note that this point can serve as a starting point for MIO solvers and the objective function value at this point is an upper bound of the global minimum. To do so, we construct a curve $h_L(\btheta,\btheta')$ defined in the following proposition, which lies above $g(\btheta)$ and is tangent to $g(\btheta)$ at $\btheta'$: 
\begin{prop}[\cite{nesterov2004introductory,nesterov2013gradient}]
\label{prop:GCurve}
For a convex function $g(\btheta)$ satisfying \ref{cond:Lipschitz}, and for any $L\geq l$, we have:
\begin{equation}
    g(\btheta)\leq h_L(\btheta,\btheta'):=g(\btheta')+(\btheta-\btheta')^\top
    \nabla g(\btheta')+\frac{L}{2}\norm{\btheta-\btheta'}_2^2\label{def:GCurve}
\end{equation}
for all $\btheta,\btheta'$ with equality holding at $\btheta=\btheta'$.
\end{prop}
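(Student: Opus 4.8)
\emph{Proof plan.} The inequality \eqref{def:GCurve} is the classical ``descent lemma'' (quadratic upper bound): convexity of $g$ is not actually needed for it, only the Lipschitz continuity of $\nabla g$ with the constant $L\ge l$. The plan is to show that the gap
\[
h_L(\btheta,\btheta')-g(\btheta)=g(\btheta')+(\btheta-\btheta')^\top\nabla g(\btheta')+\tfrac{L}{2}\norm{\btheta-\btheta'}_2^2-g(\btheta)
\]
is nonnegative, by representing $g(\btheta)-g(\btheta')$ as an integral of $\nabla g$ along the segment from $\btheta'$ to $\btheta$ and then estimating the integrand.

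First I would parametrize the segment by $\btheta_t:=\btheta'+t(\btheta-\btheta')$, $t\in[0,1]$, and use the fundamental theorem of calculus to write $g(\btheta)-g(\btheta')=\int_0^1\nabla g(\btheta_t)^\top(\btheta-\btheta')\,dt$. Subtracting the linear term yields
\[
g(\btheta)-g(\btheta')-\nabla g(\btheta')^\top(\btheta-\btheta')=\int_0^1\bigl(\nabla g(\btheta_t)-\nabla g(\btheta')\bigr)^\top(\btheta-\btheta')\,dt.
\]
Next I would bound the integrand by Cauchy--Schwarz and then by condition \ref{cond:Lipschitz} together with $L\ge l$: $\bigl(\nabla g(\btheta_t)-\nabla g(\btheta')\bigr)^\top(\btheta-\btheta')\le\norm{\nabla g(\btheta_t)-\nabla g(\btheta')}_2\,\norm{\btheta-\btheta'}_2\le L\norm{\btheta_t-\btheta'}_2\,\norm{\btheta-\btheta'}_2=Lt\,\norm{\btheta-\btheta'}_2^2$. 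Integrating $\int_0^1 Lt\,dt=L/2$ gives exactly $g(\btheta)-g(\btheta')-\nabla g(\btheta')^\top(\btheta-\btheta')\le\tfrac{L}{2}\norm{\btheta-\btheta'}_2^2$, which is \eqref{def:GCurve}. Equality at $\btheta=\btheta'$ is immediate, since both $g(\btheta)$ and $h_L(\btheta,\btheta')$ collapse to $g(\btheta')$ there.

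The one point requiring care — and the only non-routine step — is that condition \ref{cond:Lipschitz} is imposed for $\btheta,\tilde\btheta\in\bTheta(K,s)$, whereas the auxiliary segment $\{\btheta_t:t\in(0,1)\}$ need not lie in $\bTheta(K,s)$, which is a non-convex union of coordinate subspaces; hence the integral argument is not, as literally stated, self-contained. I would resolve this by observing that in every case of interest the objective is in fact smooth on all of $\RR^{p+q}$: for the square loss in \eqref{eq:prob}, and more generally for any $g$ obtained by composing a globally $C^1$, $L$-smooth map with the linear map $\btheta=(\bbeta^\top,\balpha^\top)^\top\mapsto\bX\bbeta+\bZ\balpha$, the gradient $\nabla g$ is $L$-Lipschitz on $\RR^{p+q}=\mathrm{conv}(\bTheta(K,s))$, so the estimate along $\btheta_t$ goes through verbatim. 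Equivalently, condition \ref{cond:Lipschitz} may be read as holding on a convex domain containing $\bTheta(K,s)$. With that understood, the remaining steps are the routine one-dimensional estimates above and there is no further obstacle.
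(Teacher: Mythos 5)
Your proof is correct and is exactly the standard integral (descent-lemma) argument contained in the Nesterov references that the paper cites; the paper itself offers no proof of this proposition, deferring entirely to the citation. Your observation that condition \ref{cond:Lipschitz} as literally stated only covers the non-convex set $\bTheta(K,s)$, so the segment argument requires Lipschitz continuity of $\nabla g$ on a convex superset (which holds for the square loss, and indeed for any smooth loss composed with the linear design map, on all of $\RR^{p+q}$), is a genuine subtlety in the paper's formulation that you resolve correctly rather than a gap in your argument.
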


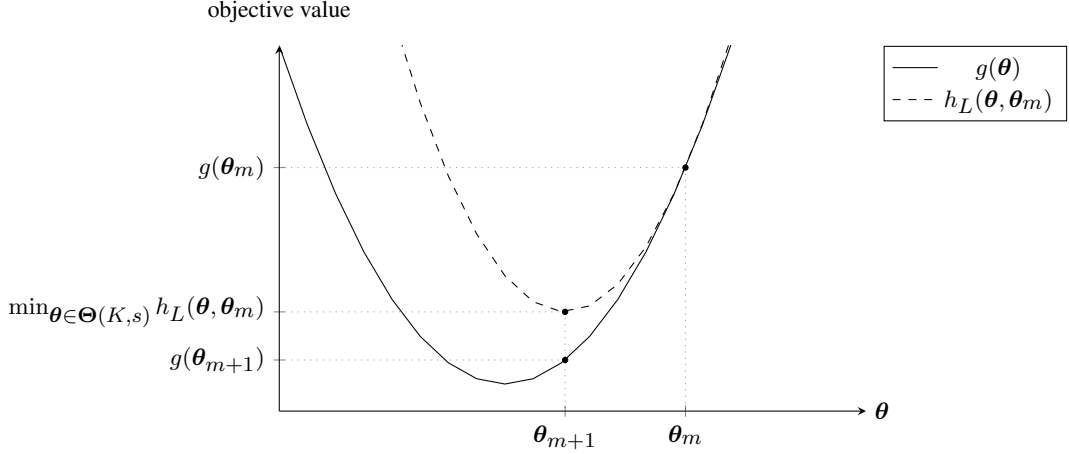
\begin{figure}[h]
\begin{tikzpicture}
\begin{axis}[
    x=0.6cm,
    y=0.18cm,
    xtick={4/3,4}, 
    xticklabels={$\btheta_{m+1}$,$\btheta_m$},
    ytick={106/9,46/3,26}, 
    yticklabels={$g(\btheta_{m+1})$,$\min_{\btheta\in\bTheta(K,s)}h_L\paren{\btheta,\btheta_m}$,$g(\btheta_m)$},
    xlabel=$\btheta$, 
    ylabel={objective value},
    xmin=-5,
    xmax=8,
    ymin=8,
    ymax=35,
    axis lines=left,
    every axis x label/.style={at={(current axis.right of origin)},anchor=west},
every axis y label/.style={at={(current axis.north west)},above=2mm},
    legend pos=outer north east
    ]
  \addplot [solid][domain=-5:10] {x^2+10};
  \addlegendentry{$g(\btheta)$}
  \addplot [dashed][domain=-5:10] {1.5*x^2-4*x+18};
  \addlegendentry{$h_L(\btheta,\btheta_m)$}
  
  \addplot [only marks,black,mark size=1pt] coordinates {(4,26)};
  \addplot [only marks,black,mark size=1pt] coordinates {(4/3,46/3)};
  \addplot [only marks,black,mark size=1pt] coordinates {(4/3,106/9)};
  \draw [dotted, opacity=0.4] (axis cs:{4,26}) -- (axis cs:{4,0});
  \draw [dotted, opacity=0.4] (axis cs:{4/3,46/3}) -- (axis cs:{4/3,0});
  \draw [dotted, opacity=0.4] (axis cs:{4,26}) -- (axis cs:{-5,26});
  \draw [dotted, opacity=0.4] (axis cs:{4/3,46/3}) -- (axis cs:{-5,46/3});
  \draw [dotted, opacity=0.4] (axis cs:{4/3,106/9}) -- (axis cs:{-5,106/9});
\end{axis}
\end{tikzpicture}

    \caption{An illustration of $g(\btheta)$ and $h_L(\btheta,\btheta_m)$ in Proposition~\ref{prop:GCurve}. The solid curve is $g(\btheta)$ and the dashed curve is $h_L(\btheta,\btheta_m)$.}
    \label{fig:GCurve}
\end{figure}

As illustrated in Figure~\ref{fig:GCurve}, given a point $\btheta_m$, we can always improve the current objective value $g(\btheta_m)$ through the descending route: 
\begin{equation}
g(\btheta_{m+1})\leq h_L(\btheta_{m+1},\btheta_m)\leq h_L(\btheta_{m},\btheta_{m})=g(\btheta_m), \label{eq:decreasingSequence}
\end{equation}
where 
\[
\btheta_{m+1}\in\argmin_{\btheta\in\bTheta(K,s)}h_L(\btheta,\btheta_m)=\argmin_{\btheta\in\bTheta(K,s)}\norm[\Big]{\btheta-\paren[\big]{\btheta_m-\frac{1}{L}\nabla g(\btheta_m)}}_2^2.
\] For convenience, for any constant vector $\bc = (c_1, \ldots, c_{p + q})^\top$, define
\[
    \cH_{K, s}(\bc) := \argmin_{\btheta \in \bTheta(K, s)} \|\btheta - \bc\|_2 ^ 2. 
\]
Then $\btheta_{m+1}\in\cH_{K, s}(\btheta_m-\frac{1}{L}\nabla g(\btheta_m))$.
By doing this improvement iteratively, we implement Algorithm \ref{alg:warm_start} below that supplies our warm starts to solve problem \eqref{eq:problemws}.

\begin{center}
    \begin{algorithm}[h]
        \caption{Warm Start}
        \label{alg:warm_start}
        \KwIn{Loss function $g(\btheta)$, number of groups $K$, sparsity constraint $s$, step size parameter $L$ and convergence tolerance $\varepsilon$.}
        \begin{algorithmic}[1]
            \State Initialize with $\btheta_1 \in \mathbb{R}^{p+q}$.
            \State For $m \geq 1$, $\btheta_{m+1} \in \cH_{K, s}(\btheta_m - \frac{1}{L}\nabla g(\btheta_m))$.
            \State Repeat Step~2 until $g(\btheta_m) - g(\btheta_{m+1}) \leq \varepsilon$.
        \end{algorithmic}
        \KwOut{$\btheta_{m+1}$.}
    \end{algorithm}
\end{center}

Algorithm \ref{alg:warm_start} is essentially a projected gradient descent algorithm: In each iteration, we perform a gradient descent step followed by projection onto $\cH_{K,s}$. 
To obtain an element in $\cH_{K, s}(\bc)$ for any $\bc\in\RR^{p+q}$, we can exploit the subroutine Algorithm~\ref{alg:0} in Appendix~\ref{subsection:appendixAlg}, which is a generalization of the segment neighbourhood method \citep{auger1989algorithms} with sparsity constraint. To investigate the algorithmic convergence of Algorithm \ref{alg:warm_start}, we first define the first-order stationary points of problem \eqref{eq:problemws} as follows.
\begin{defn}[First-order stationary point]
	\label{def:1}
	We say a vector $\btheta \in \bTheta(K, s)$ is a first-order stationary point for problem \eqref{eq:problemws} if $\btheta \in \cH_{K, s}(\btheta - \frac{1}{L}\nabla g(\btheta))$ for some positive constant $L \geq l$. 
\end{defn}
The following proposition establishes two important properties of the first-order stationary points that underpin the effectiveness and stability of our warm start Algorithm~\ref{alg:warm_start}. 
\begin{prop}
	\label{prop:1}
	Suppose a positive constant $L > l$. 
	\begin{itemize}
		\item[1.] If $\btheta$ is a solution to problem~\eqref{eq:problemws}, then it is a first-order stationary point. 
		\item[2.] If $\btheta$ is a first-order stationary point, then the set $\cH_{K, s}(\btheta - \frac{1}{L}\nabla g(\btheta))$ has exactly one element $\btheta$.
	\end{itemize}
\end{prop}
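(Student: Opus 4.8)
Everything funnels through one reformulation of the projected‑gradient step. The plan is to note first that, for any $\btheta'\in\bTheta(K,s)$ and any $L>0$, completing the square gives
\[
\Big\|\btheta-\big(\btheta'-\tfrac{1}{L}\nabla g(\btheta')\big)\Big\|_2^2=\tfrac{2}{L}\big(h_L(\btheta,\btheta')-g(\btheta')\big)+\tfrac{1}{L^2}\|\nabla g(\btheta')\|_2^2,
\]
with $h_L$ the majorant of Proposition~\ref{prop:GCurve}; since the subtracted/added terms do not involve $\btheta$,
\[
\cH_{K,s}\big(\btheta'-\tfrac{1}{L}\nabla g(\btheta')\big)=\argmin_{\btheta\in\bTheta(K,s)}h_L(\btheta,\btheta').
\]
Thus a first‑order stationary point (Definition~\ref{def:1}) is exactly a $\btheta$ that minimizes its own surrogate $h_L(\cdot,\btheta)$ over $\bTheta(K,s)$. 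I would establish this identity first, since it is what ties the purely geometric stationarity condition back to the objective $g$.

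\emph{Part 1.} Fix $L>l$ so that Proposition~\ref{prop:GCurve} applies, and let $\btheta$ solve \eqref{eq:problemws}. For every $\btheta'\in\bTheta(K,s)$ I would chain
\[
h_L(\btheta',\btheta)\ \ge\ g(\btheta')\ \ge\ g(\btheta)\ =\ h_L(\btheta,\btheta),
\]
the first inequality by the majorization bound, the second by global optimality of $\btheta$, and the equality by the tangency $h_L(\btheta,\btheta)=g(\btheta)$. Hence $\btheta\in\argmin_{\btheta'\in\bTheta(K,s)}h_L(\btheta',\btheta)=\cH_{K,s}(\btheta-\tfrac1L\nabla g(\btheta))$, i.e.\ $\btheta$ is a first‑order stationary point. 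This part is essentially immediate.

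\emph{Part 2.} Let $\btheta$ be a first‑order stationary point with witnessing constant $L>l$, put $\bc:=\btheta-\tfrac1L\nabla g(\btheta)$, and let $\btheta^\dagger$ be any element of $\cH_{K,s}(\bc)$. Since $\btheta$ and $\btheta^\dagger$ both minimize $\|\cdot-\bc\|_2^2$ over $\bTheta(K,s)$, they are equidistant from $\bc$; expanding $\|\btheta^\dagger-\bc\|_2^2=\|\btheta-\bc\|_2^2$ yields $\langle\nabla g(\btheta),\btheta^\dagger-\btheta\rangle=-\tfrac{L}{2}\|\btheta^\dagger-\btheta\|_2^2$, while the identity above gives $h_L(\btheta^\dagger,\btheta)=h_L(\btheta,\btheta)=g(\btheta)$. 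I would then bring in the descent lemma with the genuine gradient‑Lipschitz constant $l<L$: if $\btheta^\dagger\neq\btheta$,
\[
g(\btheta^\dagger)\ \le\ g(\btheta)+\langle\nabla g(\btheta),\btheta^\dagger-\btheta\rangle+\tfrac{l}{2}\|\btheta^\dagger-\btheta\|_2^2\ <\ h_L(\btheta^\dagger,\btheta)\ =\ g(\btheta),
\]
the strict step using $\btheta^\dagger\ne\btheta$ together with $l<L$. So any competitor $\btheta^\dagger\ne\btheta$ in the projection set would strictly lower $g$; it remains to rule this out, using that $\btheta$ is not merely feasible but optimal for the strongly convex surrogate $h_L(\cdot,\btheta)$ over $\bTheta(K,s)$, and that $\bTheta(K,s)$ is a finite union of linear subspaces on each of which $h_L(\cdot,\btheta)$ has a unique minimizer; this forces $\cH_{K,s}(\bc)=\{\btheta\}$.

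\emph{Where the difficulty lies.} Part~1 and the reformulation are bookkeeping; the real work is the closing step of Part~2 — upgrading ``every alternative projection strictly decreases $g$'' into genuine uniqueness of the projection. This is precisely where the \emph{strict} slack $L>l$ is indispensable (at $L=l$ the projection can honestly be set‑valued), and it has to be married carefully to the piecewise‑subspace geometry of $\bTheta(K,s)$ and the combinatorial structure of the membership/sparsity constraints. I expect this to be the only non‑routine ingredient.
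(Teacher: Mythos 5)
Your Part~1 and the completing-the-square identity are correct and coincide with the paper's argument. The problem is Part~2, and the gap you flag at the end is not a deferrable detail --- it is fatal. Your own (correct) computation gives $(\btheta^\dagger-\btheta)^\top\nabla g(\btheta)=-\tfrac{L}{2}\norm{\btheta^\dagger-\btheta}_2^2$ and hence $g(\btheta^\dagger)<g(\btheta)$ for any tie $\btheta^\dagger\neq\btheta$; but a first-order stationary point is not assumed to be a global minimizer of $g$ over $\bTheta(K,s)$, so ``a competitor strictly lowers $g$'' contradicts nothing, and the piecewise-subspace geometry you invoke only rules out ties \emph{within} one subspace of the union, not ties \emph{across} two distinct subspaces. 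Indeed, no closing step of the kind you sketch exists: take $p=2$, $q=0$, $K=1$, $s=2$, $g(\bbeta)=\tfrac12\{(\beta_1-1)^2+(\beta_2+2)^2\}$ (so $l=1$), $L=2$, and $\btheta=(1,0)^\top$. Then $\btheta-\tfrac1L\nabla g(\btheta)=(1,-1)^\top$, whose squared distances to the three lines $\{(t,0)^\top\}$, $\{(0,t)^\top\}$, $\{(t,t)^\top\}$ constituting $\bTheta(1,2)$ are $1$, $1$, $2$ respectively, so $\cH_{1,2}\bigl((1,-1)^\top\bigr)=\{(1,0)^\top,(0,-1)^\top\}$: here $\btheta$ is a first-order stationary point with $L>l$, yet the projection set has two elements.

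The reason you could not reconstruct the paper's closing step is that the paper's proof turns on a sign at exactly this point: it asserts $(\tilde\btheta-\btheta)^\top\nabla g(\btheta)=+\tfrac{L}{2}\norm{\tilde\btheta-\btheta}_2^2>0$, whence convexity gives $g(\tilde\btheta)>g(\btheta)$, contradicting $g(\tilde\btheta)\le h_L(\tilde\btheta,\btheta)\le h_L(\btheta,\btheta)=g(\btheta)$. But expanding the equidistance relation $\norm{\tilde\btheta-\btheta+\tfrac1L\nabla g(\btheta)}_2^2=\tfrac{1}{L^2}\norm{\nabla g(\btheta)}_2^2$ yields the inner product with a \emph{minus} sign, exactly as you found, and the contradiction evaporates. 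So: your Part~1 stands; your Part~2 derivation is correct as far as it goes; but the uniqueness claim cannot be completed by your route (or the paper's as written). It does close immediately if one adds the hypothesis that $\btheta$ globally minimizes $g$ over $\bTheta(K,s)$ --- then your inequality $g(\btheta^\dagger)<g(\btheta)$ is itself the contradiction --- which is the version of the statement your argument actually proves.
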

\begin{proof}
    \begin{itemize}
        \item[1.] From \eqref{eq:decreasingSequence} and that $\btheta$ is a solution to problem~\eqref{eq:problemws}, we know $\min_{\btheta'\in\bTheta(K,s)}\allowbreak h_L(\btheta',\btheta)=h_L(\btheta,\btheta)$. Then $\btheta\in\cH_{K,s}(\btheta-\frac{1}{L}\nabla g(\btheta))$.
        \item[2.] Assume $\tilde\btheta\in\cH_{K,s}(\btheta-\frac{1}{L}\nabla g(\btheta))$ and $\tilde\btheta\neq\btheta$. Then $(\tilde\btheta-\btheta)^\top\nabla g(\btheta)=\frac{L}{2}\norm{\tilde\btheta-\btheta}_2^2>0$. Since $g$ is convex, we have $g(\tilde\btheta)>g(\btheta)$. This contradicts with \eqref{eq:decreasingSequence}.
    \end{itemize}
\end{proof}

Now we present the convergence property and convergence rate of Algorithm \ref{alg:warm_start} through Proposition~\ref{prop:3} and Theorem~\ref{the:conver}, respectively.
\begin{prop}
	\label{prop:3}
	For problem~\eqref{eq:problemws} and some positive constant $L>l$, let $\btheta_m, m \geq 1$ be the sequence generated by Algorithm \ref{alg:warm_start}. We have 
	\begin{enumerate}[label={\arabic*}.,ref=\arabic*]
		\item $g(\btheta_m) - g(\btheta_{m+1}) \geq \frac{L - l}{2}\|\btheta_m - \btheta_{m+1}\|^2_2$; \label{statement:gDifference} 
		\item  $\|\btheta_{m+1} - \btheta_{m}\|_2 \to 0$ as $m \to \infty$. \label{statement:shrink}
	\end{enumerate}
\end{prop}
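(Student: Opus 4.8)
\textbf{Proof proposal for Proposition~\ref{prop:3}.}

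The plan is to derive both statements from two facts already in hand: the quadratic sandwich of Proposition~\ref{prop:GCurve}, and the observation that, up to an additive constant independent of $\btheta$, the function $h_L(\btheta,\btheta_m)$ equals $\frac{L}{2}\|\btheta-(\btheta_m-\frac1L\nabla g(\btheta_m))\|_2^2$, so the update $\btheta_{m+1}\in\cH_{K,s}(\btheta_m-\frac1L\nabla g(\btheta_m))$ is precisely a minimizer of $h_L(\cdot,\btheta_m)$ over $\bTheta(K,s)$. In particular I will use the inequality $h_L(\btheta_{m+1},\btheta_m)\leq h_L(\btheta_m,\btheta_m)=g(\btheta_m)$ that this minimizing property gives for free.

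For Statement~\ref{statement:gDifference} I would first invoke Proposition~\ref{prop:GCurve} at the level $l$ (legitimate since it is stated for any $L\geq l$) to obtain $g(\btheta_{m+1})\leq h_l(\btheta_{m+1},\btheta_m)$, and then use the elementary identity $h_l(\btheta_{m+1},\btheta_m)=h_L(\btheta_{m+1},\btheta_m)-\frac{L-l}{2}\|\btheta_{m+1}-\btheta_m\|_2^2$. Combining this with $h_L(\btheta_{m+1},\btheta_m)\leq g(\btheta_m)$ from the previous paragraph yields $g(\btheta_{m+1})\leq g(\btheta_m)-\frac{L-l}{2}\|\btheta_{m+1}-\btheta_m\|_2^2$, which is exactly Statement~\ref{statement:gDifference}.

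For Statement~\ref{statement:shrink} I would argue by summability. Since $L>l$, Statement~\ref{statement:gDifference} shows $\{g(\btheta_m)\}_{m\geq1}$ is non-increasing, and condition~\ref{cond:bounded} ($g\geq C_2>-\infty$) bounds it below, so $g(\btheta_m)$ converges to a finite limit. Telescoping Statement~\ref{statement:gDifference} over $m=1,\dots,M$ gives $\frac{L-l}{2}\sum_{m=1}^{M}\|\btheta_{m+1}-\btheta_m\|_2^2\leq g(\btheta_1)-g(\btheta_{M+1})\leq g(\btheta_1)-C_2$, a bound uniform in $M$; letting $M\to\infty$ shows $\sum_{m\geq1}\|\btheta_{m+1}-\btheta_m\|_2^2<\infty$, so the general term tends to $0$, i.e.\ $\|\btheta_{m+1}-\btheta_m\|_2\to0$.

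I do not anticipate a substantive obstacle; the argument is essentially bookkeeping. The points that deserve care are: applying Proposition~\ref{prop:GCurve} with the constant $l$ rather than $L$, so that the surplus $\frac{L-l}{2}\|\btheta_{m+1}-\btheta_m\|_2^2$ actually materialises; using the strict inequality $L>l$ so that this coefficient is positive (with $L=l$ one only recovers monotonicity of $g(\btheta_m)$, not a quantitative decrease); and keeping straight that the projection operator $\cH_{K,s}$ used in Algorithm~\ref{alg:warm_start} is literally the $\argmin$ of $h_L(\cdot,\btheta_m)$ on the nonconvex set $\bTheta(K,s)$ — this identification is what converts the algorithm's update rule into the telescoping descent inequality.
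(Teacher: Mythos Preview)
Your proposal is correct and essentially matches the paper's proof. For Statement~\ref{statement:gDifference} you do exactly what the paper does: use the minimizing property $h_L(\btheta_{m+1},\btheta_m)\le h_L(\btheta_m,\btheta_m)=g(\btheta_m)$, split off $\frac{L-l}{2}\|\btheta_{m+1}-\btheta_m\|_2^2$, and apply Proposition~\ref{prop:GCurve} at level $l$. For Statement~\ref{statement:shrink} you telescope to get summability of $\|\btheta_{m+1}-\btheta_m\|_2^2$, whereas the paper simply observes that $g(\btheta_m)$ converges (monotone and bounded below) so $g(\btheta_m)-g(\btheta_{m+1})\to 0$, and then Statement~\ref{statement:gDifference} forces $\|\btheta_{m+1}-\btheta_m\|_2\to 0$; your route is a mild overkill but perfectly valid.
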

\begin{proof}
The first statement holds because
            \begin{align*}
                &g(\btheta_m)=h_L(\btheta_m,\btheta_m)\geq h_L(\btheta_{m+1},\btheta_m)\\
               =&g(\btheta_m)+(\btheta_m-\btheta_{m+1})^\top\nabla g(\btheta_m)+\frac{L}{2}\norm{\btheta_m-\btheta_{m+1}}_2^2~~~\text{(From \eqref{def:GCurve})}\\
               =&g(\btheta_m)+(\btheta_m-\btheta_{m+1})^\top\nabla g(\btheta_m)+\frac{l}{2}\norm{\btheta_m-\btheta_{m+1}}_2^2+\frac{L-l}{2}\norm{\btheta_m-\btheta_{m+1}}_2^2\\
               \geq& g(\btheta_{m+1})+\frac{L-l}{2}\norm{\btheta_m-\btheta_{m+1}}_2^2. ~~~ \text{(From Proposition~\ref{prop:GCurve})}
            \end{align*}
To prove the second statement, we note that from \eqref{eq:decreasingSequence} and Condition \ref{cond:bounded}, $\brac{g(\btheta_m)}_{m=1}^\infty$ is decreasing and bounded from below, so it is convergent. Then $\lim_{m\to\infty}\norm{g(\btheta_{m+1})-g(\btheta_m)}_2^2=0$. From Proposition~\ref{prop:3} Statement~\ref{statement:gDifference}, we have $\lim_{m\to\infty}\norm{\btheta_{m+1}-\btheta_m}_2^2=0$.

\end{proof}
\begin{thm}
	\label{the:conver}
	For the sequence $\{\btheta_m\}_{m=1}^\infty$ generated by Algorithm \ref{alg:warm_start}, if $L>l$, then there exists $c\in\mathbb{R}$, such that for any $M\in\mathbb{Z}^+$ we have
	$$
	    \min_{m = 1, \dots, M}\|\btheta_{m+1} - \btheta_{m}\|^2_2 \leq \frac{2(g(\btheta_1) - c)}{M(L - l)},
	$$
	where $g(\btheta_m)\downarrow c $ as $m \to \infty$. 
\end{thm}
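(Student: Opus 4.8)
\textbf{Proof proposal for Theorem~\ref{the:conver}.}

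The plan is to turn the per-step decrease established in Proposition~\ref{prop:3}, Statement~\ref{statement:gDifference}, into a bound on the smallest step size over the first $M$ iterations via a standard telescoping argument. First I would recall from \eqref{eq:decreasingSequence} that $\{g(\btheta_m)\}_{m=1}^\infty$ is non-increasing, and from Condition~\ref{cond:bounded} that it is bounded below by $C_2$; hence it converges to some limit, which I will call $c$. Since the sequence is non-increasing with limit $c$, we have $g(\btheta_m) \downarrow c$ and in particular $g(\btheta_{M+1}) \ge c$ for every $M$.

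Next, by Proposition~\ref{prop:3}, Statement~\ref{statement:gDifference}, for each $m \ge 1$ we have $g(\btheta_m) - g(\btheta_{m+1}) \ge \frac{L-l}{2}\|\btheta_m - \btheta_{m+1}\|_2^2$. Summing this inequality over $m = 1, \dots, M$ and telescoping the left-hand side gives
\[
    g(\btheta_1) - g(\btheta_{M+1}) \ge \frac{L-l}{2}\sum_{m=1}^M \|\btheta_m - \btheta_{m+1}\|_2^2 \ge \frac{M(L-l)}{2}\min_{m=1,\dots,M}\|\btheta_{m+1} - \btheta_m\|_2^2.
\]
Using $g(\btheta_{M+1}) \ge c$ on the left-hand side, we obtain $g(\btheta_1) - c \ge \frac{M(L-l)}{2}\min_{m=1,\dots,M}\|\btheta_{m+1} - \btheta_m\|_2^2$, and dividing through by $\frac{M(L-l)}{2}$ (which is positive since $L > l$ and $M \in \mathbb{Z}^+$) yields exactly the claimed bound.

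There is no real obstacle here; the argument is the routine telescoping-plus-lower-bound template for first-order stationarity rates, and every ingredient (monotone decrease, boundedness below, the quadratic per-step gain) has already been supplied by Proposition~\ref{prop:3} and Condition~\ref{cond:bounded}. The only point requiring a sentence of care is the identification of $c$ as $\lim_m g(\btheta_m)$ and the observation that monotonicity forces $g(\btheta_{M+1}) \ge c$, so that the telescoped difference can be replaced by $g(\btheta_1) - c$ without loss; this also delivers the asserted convergence $g(\btheta_m)\downarrow c$ as a byproduct of the same monotone-bounded-below reasoning already used in the proof of Proposition~\ref{prop:3}, Statement~\ref{statement:shrink}.
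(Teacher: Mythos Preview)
Your proposal is correct and follows essentially the same approach as the paper: establish convergence of $\{g(\btheta_m)\}$ via monotonicity and boundedness below, then invoke Proposition~\ref{prop:3}, Statement~\ref{statement:gDifference}. The paper's proof is terser (it simply says ``the conclusion follows directly from Proposition~\ref{prop:3} Statement~\ref{statement:gDifference}''), but your explicit telescoping and use of $g(\btheta_{M+1})\ge c$ spell out exactly what the paper leaves implicit.
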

\begin{proof}
    From \eqref{eq:decreasingSequence} and Condition \ref{cond:bounded}, we have the fact that $\brac{g(\btheta_m)}_{m=1}^\infty$ is decreasing and bounded from below, so it is convergent to some $c\in\mathbb{R}$. For this $c$, the conclusion follows directly from Proposition~\ref{prop:3} Statement~\ref{statement:gDifference}.   
\end{proof}
Finally, we show that Algorithm \ref{alg:warm_start} gives a feasible solution whose objective value is the same as some first-order stationary point under mild conditions:
\begin{prop}
 	\label{prop:4}
 	Consider problem~\eqref{eq:problemws} and some constant $L>l$, let $\btheta_m, m \geq 1$ be the sequence generated by Algorithm \ref{alg:warm_start}. Suppose $g$ satisfies the following conditions:  
	\begin{enumerate}
		\item
			$g$ has second-order derivative; 
		\item
			there exists $l'>0$ such that $l'\big\Vert \btheta-\tilde\btheta \big\Vert_2\leq \big\Vert \nabla g(\btheta)-\nabla g(\tilde\btheta) \big\Vert_2$ for any $\btheta,\tilde\btheta\in\Theta(K,s)$ satisfying $\mathbb{G}(\bbeta)=\mathbb{G}(\tilde\bbeta)$; 
		\item
			$\{\btheta \in \bTheta(K, s)\,|\,g(\btheta) \leq C\}$ is bounded for any $C \in \mathbb{R}$. 
	\end{enumerate}
	Then $g(\btheta_m)$ converges to $g(\btheta)$ where $\btheta$ is a first-order stationary point.
\end{prop}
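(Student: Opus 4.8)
The plan is to use Proposition~\ref{prop:3} and Theorem~\ref{the:conver} to trap the iterates in a compact region, extract a convergent subsequence, show its limit is a first-order stationary point, and identify the limit of $g(\btheta_m)$ with the value of $g$ there. First, by \eqref{eq:decreasingSequence} and Theorem~\ref{the:conver}, $\{g(\btheta_m)\}$ is nonincreasing and converges to some $c\in\mathbb{R}$; in particular $g(\btheta_m)\le g(\btheta_1)$ for all $m$, so the whole sequence lies in $\{\btheta\in\bTheta(K,s):g(\btheta)\le g(\btheta_1)\}$, which is bounded by the third condition of Proposition~\ref{prop:4}. Since $g$ is continuous and $\bTheta(K,s)$ is closed --- being a finite union of linear subspaces, one per choice of support $S\subseteq[p]$ with $|S|\le s$ and partition of $S$ into at most $K$ blocks --- this sublevel set is compact, so along a subsequence $\btheta_{m_j}\to\btheta^\star\in\bTheta(K,s)$. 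By Proposition~\ref{prop:3}, Statement~\ref{statement:shrink}, $\|\btheta_{m+1}-\btheta_m\|_2\to0$, hence $\btheta_{m_j+1}\to\btheta^\star$ as well.

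Next I would pass to the limit in the optimality condition defining the projection step. Write $\bc_m:=\btheta_m-\tfrac{1}{L}\nabla g(\btheta_m)$; since $\nabla g$ is continuous (it is Lipschitz by \ref{cond:Lipschitz}), $\bc_{m_j}\to\bc^\star:=\btheta^\star-\tfrac{1}{L}\nabla g(\btheta^\star)$. Because $\btheta_{m+1}\in\cH_{K,s}(\bc_m)$, for every fixed $\btheta\in\bTheta(K,s)$ we have $\|\btheta_{m_j+1}-\bc_{m_j}\|_2^2\le\|\btheta-\bc_{m_j}\|_2^2$, and letting $j\to\infty$ gives $\|\btheta^\star-\bc^\star\|_2^2\le\|\btheta-\bc^\star\|_2^2$ for every $\btheta\in\bTheta(K,s)$. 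As $\btheta^\star\in\bTheta(K,s)$, this is exactly $\btheta^\star\in\cH_{K,s}(\btheta^\star-\tfrac{1}{L}\nabla g(\btheta^\star))$, i.e.\ $\btheta^\star$ is a first-order stationary point in the sense of Definition~\ref{def:1} for the prescribed $L>l$. Finally, continuity of $g$ yields $g(\btheta^\star)=\lim_j g(\btheta_{m_j})=c$, and since $g(\btheta_m)\downarrow c$ this shows $g(\btheta_m)\to g(\btheta^\star)$ with $\btheta^\star$ first-order stationary, as claimed.

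The step I expect to need the most care is the limiting argument above: $\cH_{K,s}$ is the Euclidean projection onto the \emph{nonconvex} set $\bTheta(K,s)$, so it is set-valued and discontinuous, and one cannot simply assert ``$\btheta^\star\in\cH_{K,s}(\lim_j\bc_{m_j})$'' by continuity of the projection map. The remedy is exactly what is done above --- pass to the limit only in the one-sided minimality inequality, and use closedness of $\bTheta(K,s)$ so that the limiting inequality still certifies membership in the argmin. The argument above appeals only to Proposition~\ref{prop:3}, the third condition of Proposition~\ref{prop:4}, and continuity of $g$ and $\nabla g$; the first two conditions of the proposition add further structure that one can exploit if a stronger conclusion is wanted. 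Indeed, co-coercivity of $\nabla g$ (from \ref{cond:Lipschitz} and convexity) together with the lower Lipschitz bound $l'$ shows that $g$ is $((l')^2/l)$-strongly convex on each grouping subspace, so $\btheta^\star$ is the unique minimizer of $g$ within its own grouping subspace; since there are only finitely many grouping subspaces and $\|\btheta_{m+1}-\btheta_m\|_2\to0$, one can then argue that $\mathbb{G}(\bbeta_m)$ eventually stabilizes and, with additional work, upgrade the subsequential convergence to $\btheta_m\to\btheta^\star$.
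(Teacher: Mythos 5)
Your proof is correct, and it takes a genuinely different route from the paper's. The paper proves Proposition~\ref{prop:4} via Proposition~\ref{prop:2}, a case analysis on $\liminf_m\rho_m$ and $\liminf_m\tau_m$ (the minimal gap between distinct nonzero values and the minimal nonzero magnitude of $\bbeta_m$), with explicit comparisons of $h_L$ at hand-crafted regroupings $\tilde\btheta$ in each case; conditions 1 and 2 of the proposition are invoked there to force $\mathbb{G}(\bbeta_m)$ to stabilize and to get contraction within a fixed grouping subspace. You instead use compactness of the sublevel set $\{\btheta\in\bTheta(K,s):g(\btheta)\le g(\btheta_1)\}$ (condition 3 plus closedness of $\bTheta(K,s)$ as a finite union of subspaces), extract a subsequential limit $\btheta^\star$, and pass to the limit in the one-sided minimality inequality $\|\btheta_{m_j+1}-\bc_{m_j}\|_2^2\le\|\btheta-\bc_{m_j}\|_2^2$ to certify $\btheta^\star\in\cH_{K,s}(\btheta^\star-\tfrac{1}{L}\nabla g(\btheta^\star))$; you are right that this is the delicate step, since $\cH_{K,s}$ is a set-valued projection onto a nonconvex set, and your handling of it (limit the inequality, not the argmin map) is sound. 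Your argument is shorter, avoids the case analysis entirely, and in fact shows that the stated conclusion needs only condition 3 together with continuity of $g$ and $\nabla g$ — conditions 1 and 2 are superfluous for this statement. What the paper's longer route buys is finer information: in the case $\liminf\rho_m>0$, $\liminf\tau_m>0$ it establishes convergence of the whole sequence $\btheta_m$ (not just of $g(\btheta_m)$ and a subsequence of iterates), which is where conditions 1 and 2 earn their keep — consistent with the upgrade you sketch at the end of your proposal.
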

The detailed proof of Proposition~\ref{prop:4} is given in Proposition~\ref{prop:2} and Remark~\ref{remark:1} in Appendix~\ref{app:1}.


\section{Numerical studies}\label{sec:num}
We conduct a variety of numerical experiments to assess the performance of \lof\hspace{-.15cm}. 
We use the \textit{normalized mutual information} (NMI, \cite{ana2003robust}) to evaluate grouping accuracy. Specifically, given $\mathbb{G}_1 = \{G_1^{(1)},G_1^{(2)},... \}$ and $\mathbb{G}_2 = \{G_2^{(1)}, G_2^{(2)}, ...\}$ as two sets of disjoint clusters of $[p]$, define the mutual information $I(\mathbb{G}_1;\mathbb{G}_2)$ between $\mathbb{G}_1$ and $\mathbb{G}_2$ as
\[
I(\mathbb{G}_1;\mathbb{G}_2) := \sum_{i \in [|\mathbb{G}_1|], j \in [|\mathbb{G}_2|]}\frac{\big|G_1^{(i)}\cap G_2^{(j)}\big|}{p}\log\bigg(\frac{p\big|G_1^{(i)}\cap G_2^{(j)}\big|}{\big|G_1^{(i)}\big|\big|G_2^{(j)}\big|}\bigg), 
\]
and define the entropy of $\mathbb{G}_1$ as
\[
    H(\mathbb{G}_1) := I(\mathbb{G}_1;\mathbb{G}_1) = - \sum_{i \in [|\mathbb{G}_1|]}\frac{|G_1^{(i)}|}{p}\log \bigg(\frac{|G_1^{(i)}|}{p}\bigg). 
\]
Now we are ready to define the NMI between $\mathbb{G}_1$ and $\mathbb{G}_2$ as
\[
    \text{NMI}(\mathbb{G}_1,\mathbb{G}_2) := \frac{I(\mathbb{G}_1; \mathbb{G}_2)}{\{H(\mathbb{G}_1) + H(\mathbb{G}_2)\}/2}. 
\]
Note that if $\mathbb{G}_1$ and $\mathbb{G}_2$ share the same group structure, we have $\text{NMI}(\mathbb{G}_1,\mathbb{G}_2) =1$.

The rest of the section is organized as follows. Section \ref{sec:nmi} compares \lof with its competitors in terms of grouping accuracy and investigates the effectiveness of the warm start Algorithm~\ref{alg:warm_start} under low-dimensional regimes. Section \ref{sec:ultra} implements the ``\textit{screening then grouping}'' strategy discussed in Section \ref{sec:ultra_met} to perform homogeneity fusion under ultrahigh-dimensional sparse setups. Finally, Section \ref{sec:real} applies \lof to group lipids in a study of metabolomic effects on body mass index (BMI).

\subsection{Low-dimensional regime}\label{sec:nmi}
We consider a collection of low-dimensional setups where the design vectors $\{\bx_{i}\}_{i=1}^{n}$ are independent realizations from a $p$-dimensional multivariate normal distribution   $\cN(\mathbf{0}, \bSigma)$ with mean zero and covariance matrix $\bSigma := (\Sigma_{ij})$. 
We adopt the autoregressive design in the sense that $\Sigma_{ij} = \rho^{|i-j|}$
with $\rho\in\{0,0.5\}$. In particular, $\rho=0$ gives the independent design.
For each fixed $\bX$, 
we generate the responses $\by = \bX\bbeta^{*} + \bepsilon$ with $\bepsilon\sim\cN(0,\bI)$. Throughout this section, we always set the group number $K_0 = 4$, while $n, p$ and $\bbeta^{*}$ are specified in the following subsections.

In Sections \ref{sec:eq_group} and \ref{sec:ueq_group}, we compare six methods when group sizes are equal and unequal respectively: \lof\hspace{-.1cm}, ordinary least squares (OLS), fused LASSO (fLASSO, \cite{tibshirani2005sparsity}),  pairwise fusion (pairReg, \cite{ma2017concave}), feature grouping and selection over an undirected graph (FGSG, \cite{zhu2013simultaneous}) and clustering algorithm in regression via data-driven segmentation (CARDS, \cite{ke2016structure}). For \lof\hspace{-.12cm}, CARDS and fLASSO, the tuning parameters are chosen via Bayesian Information Criterion (BIC). For OLS, FGSG and pairReg, we first tune the parameters (if any) in these methods via 10-fold cross-validation in terms of mean squared error (MSE). Note that these methods encourage coefficients within the same group to be close but not exactly the same. To derive grouping structures and gauge their accuracy,  we perform \textit{k-means} clustering on the solutions of OLS, FGSG, pairReg with oracle cluster number $K=4$. We use OLS+, FGSG+, pairReg+ to represent the corresponding post-clustering results. In Section \ref{sec:warm_exp}, we present the NMI of the warm-start solution in Section \ref{sec:warm} with varying $n$ and $p$, and illustrate how warm starts help the convergence of \lof, especially in the early stage. All the results are based on $200$ independent Monte Carlo experiments.

\subsubsection{Equal group sizes}\label{sec:eq_group}

We start with the case where all the coefficient groups have equal sizes. Specifically, we let $p=80$ and have $4$ coefficient groups of size $20$, which take values $-2r, -r,r,2r$ respectively with $r \in\{ 0.5, 0.8\}$.  Figure \ref{fig:equ_nmi} displays the boxplots of NMI for correlation coefficient $\rho\in\{0,0.5\}$ and signal strength $r\in\{0.5,0.8\}$. The results are based on $n=120$ observations. We have the following observations: 
\begin{itemize}
    \item[(i)] \lof exhibits significantly higher NMI than the other methods under all the cases, even though OLS+, pairReg+ and FGSG+ have oracle knowledge of the true number of groups.
    \item[(ii)] Nearly all the methods yield higher grouping accuracy when $r$ is larger (compare left red and right blue boxplots) or $\rho$ is smaller (compare panels (a) and (b)). 
\end{itemize}

\begin{figure}
      \centering
      \setlength\tabcolsep{0pt}
      \renewcommand{\arraystretch}{-5}   
      \begin{tabular}{c}
      \includegraphics[width = 11cm]{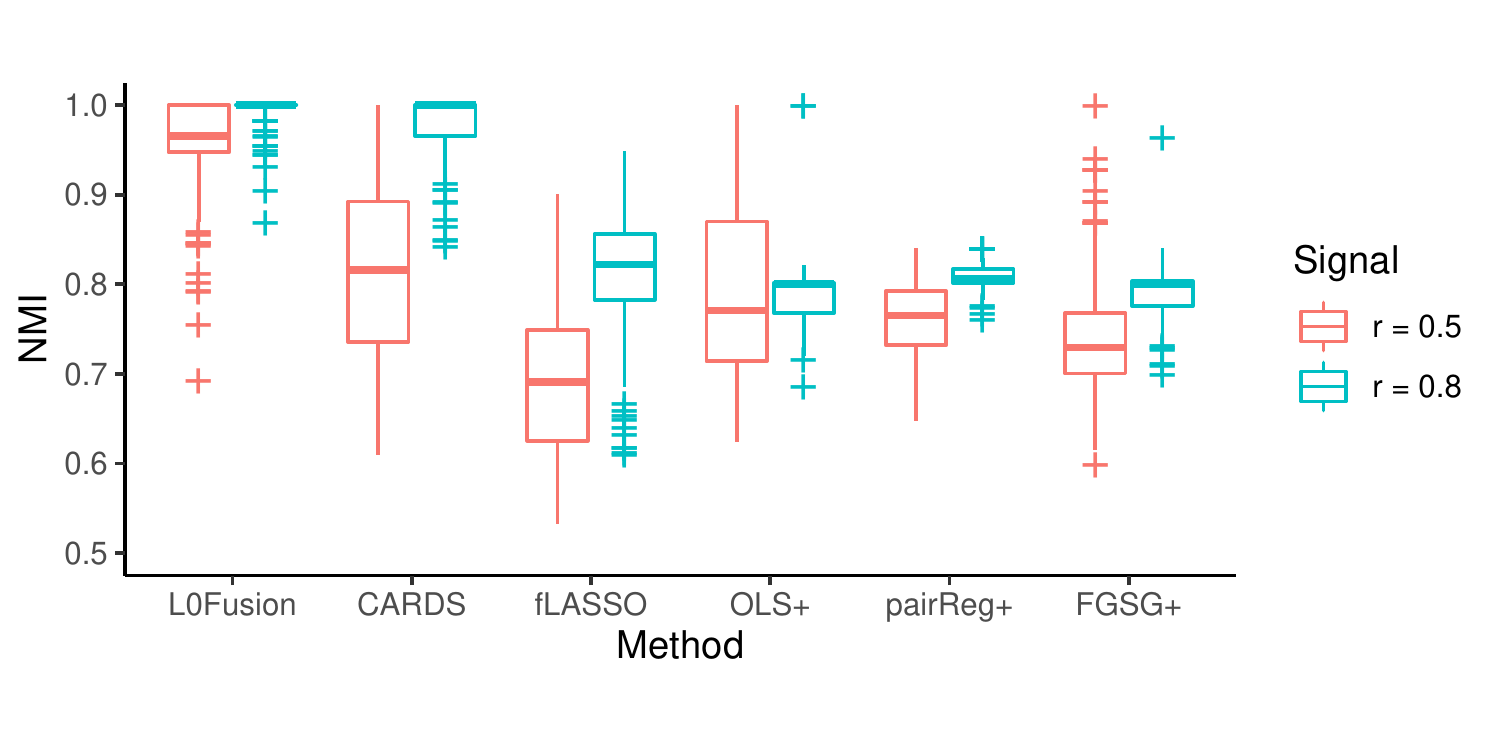} \\
      (a) Independent design \\
      \includegraphics[width = 11cm]{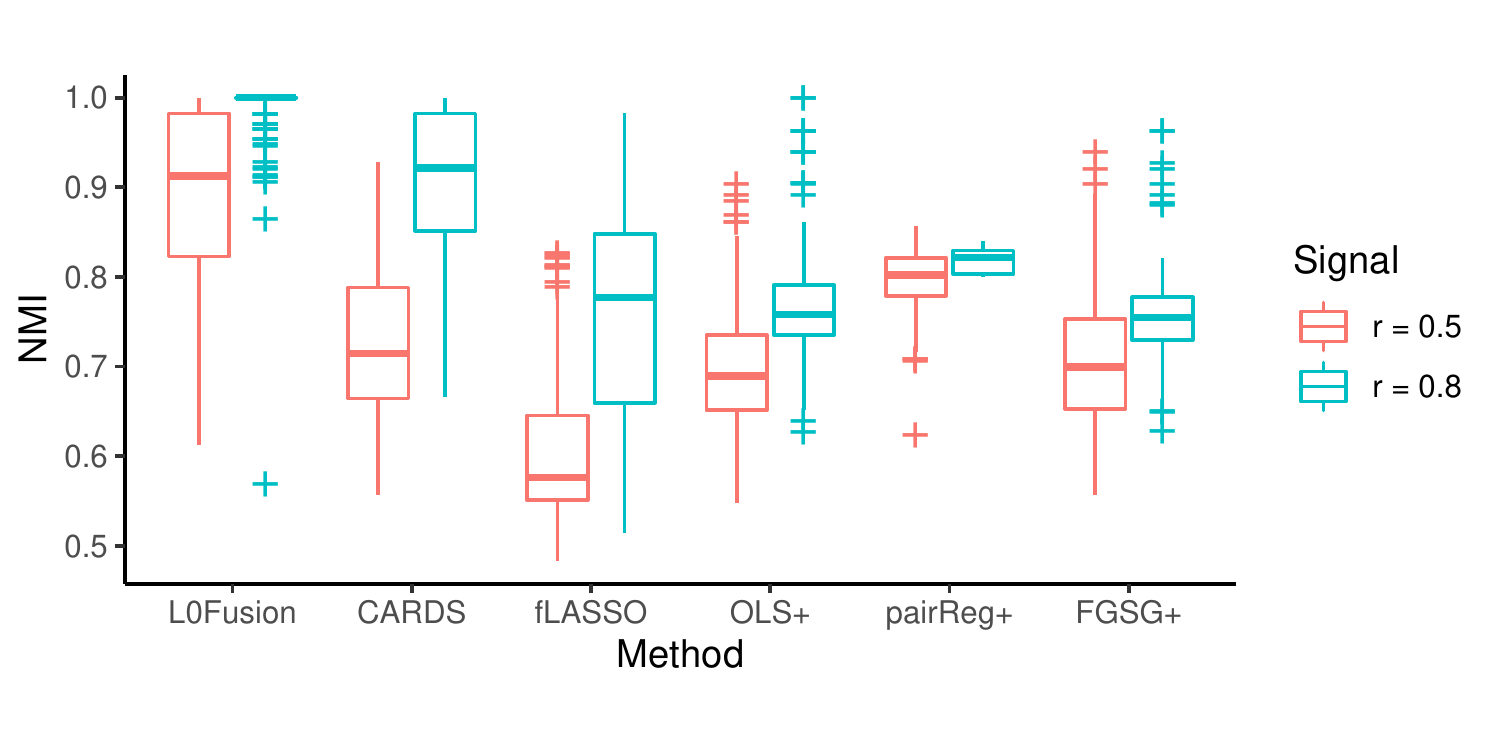}  \\
      (b) Autoregressive design with $\rho = 0.5$
       \end{tabular}
      \caption{Grouping accuracy with equal group sizes under different covariance designs and signal strengths. We set the correlation coefficient $\rho \in\{0,0.5\}$ and signal strength $r\in\{0.5,0.8\}$.  }
      \label{fig:equ_nmi}
\end{figure}

\subsubsection{Unequal group sizes} \label{sec:ueq_group}
 Now we consider groups of different sizes. Particular challenges can arise from identifying small groups whose collective explanation power is typically weak. To assess the capability of detecting small groups, we let $p=80$ and divide the true predictors into $4$ groups of sizes $1,20,20,39$, whose coefficient values are $-4r, -r,r,2r$ respectively.  Figure \ref{fig:unequ_nmi} shows the boxplots of NMI of all the aforementioned approaches for $\rho\in\{0,0.5\}$ and $r\in\{0.5,0.8\}$. The results are based on $n=120$ observations. We have the following observations: 
 \begin{itemize}
    \item[(i)] Similarly to Section \ref{sec:eq_group}, \lof outperforms the competing methods in terms of NMI uniformly under all the cases.
    \item[(ii)] Similarly to Section \ref{sec:eq_group}, all the methods yield higher grouping accuracy when $r$ is larger (compare red and blue boxplots) or $\rho$ is smaller (compare panels (a) and (b)). 
    \item[(iii)] The performance gap between \lof and fLASSO is further enlarged here compared with the case of equal group sizes, which suggests the robustness of \lof with respect to group size heterogeneity. 
\end{itemize}

\begin{figure}
      \centering
      \begin{tabular}{c}
      \includegraphics[width = 11cm]{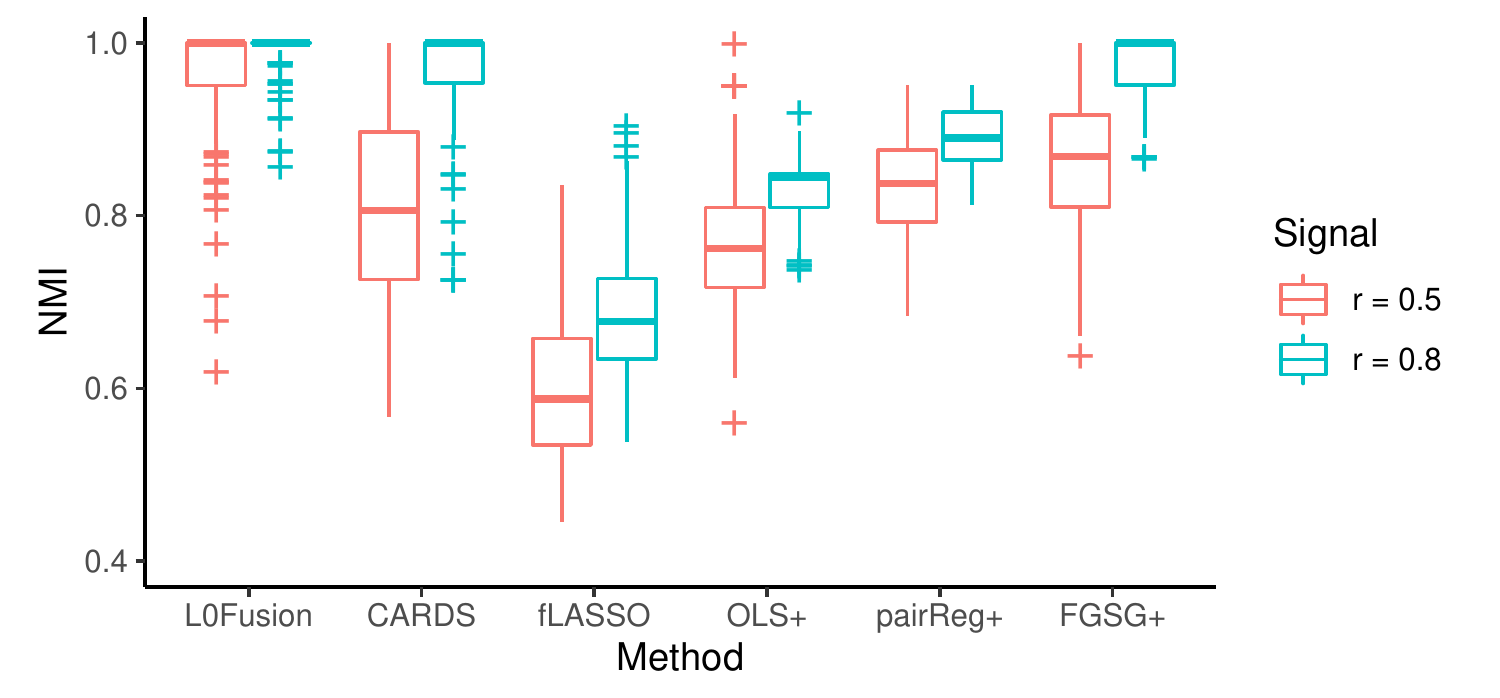}  \\  
      (a) Independent design \\
      \includegraphics[width = 11cm]{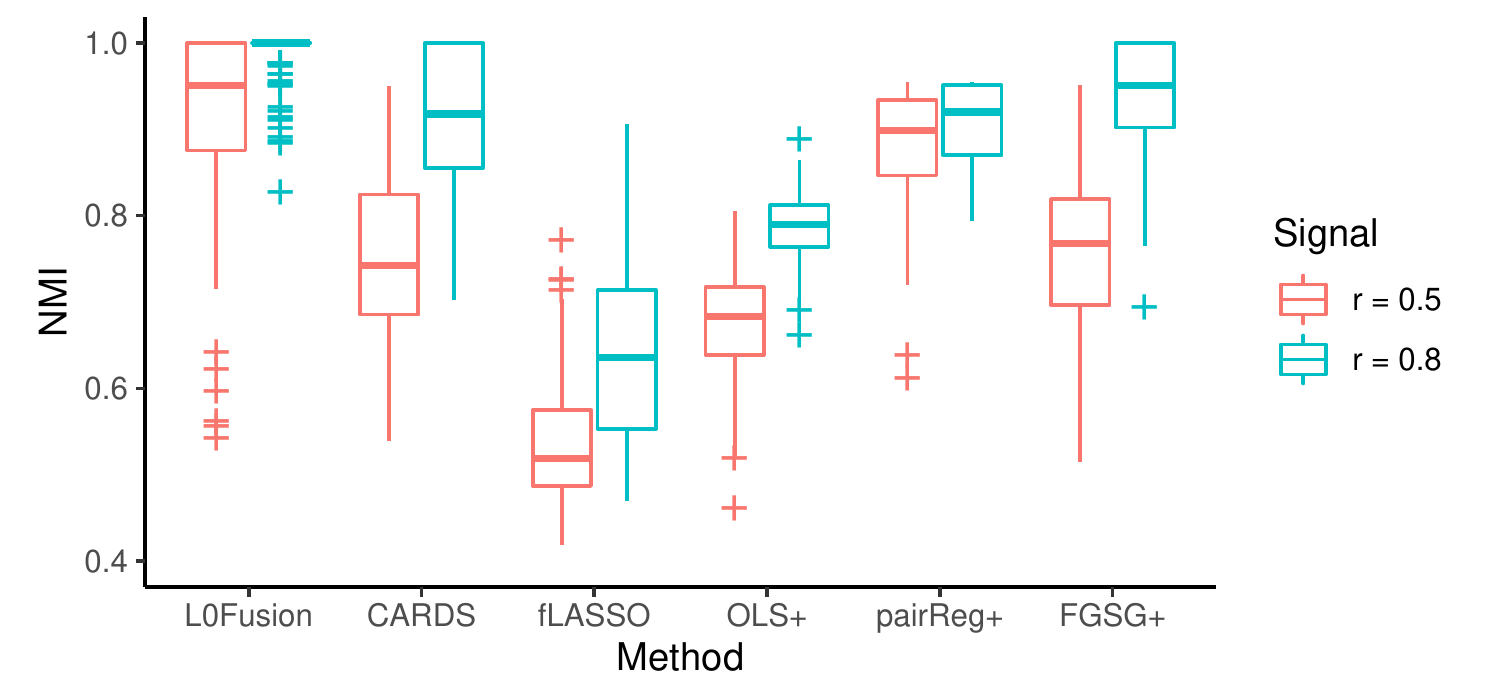}  \\
      
      (b) Autoregressive design with $\rho = 0.5$
       \end{tabular}
      \caption{Grouping accuracies with unequal group sizes under different covariance designs and signal strengths. We set the correlation coefficient $\rho \in\{0,0.5\}$ and signal strength $r\in\{0.5,0.8\}$.}
      \label{fig:unequ_nmi}
\end{figure}

\subsubsection{Warm-start algorithm}\label{sec:warm_exp}
We first assess the grouping accuracy of the solution of the discrete first-order algorithm  introduced in Section \ref{sec:warm} with initial value $\btheta_0=\mathbf{0}_{p+q}$ in simulation. 
Figure \ref{fig:warm_sample} presents the NMI of this algorithm with oracle $K$ as $n$ and $p$ vary. The plot shows its deteriorating performance as $p$ grows or $n$ decreases. However, it is clear that this warm-start algorithm is capable of recovering the group structure with a sufficiently large sample.

\begin{figure}
      \centering
      \setlength\tabcolsep{0pt}
      \includegraphics[width=10.5cm]{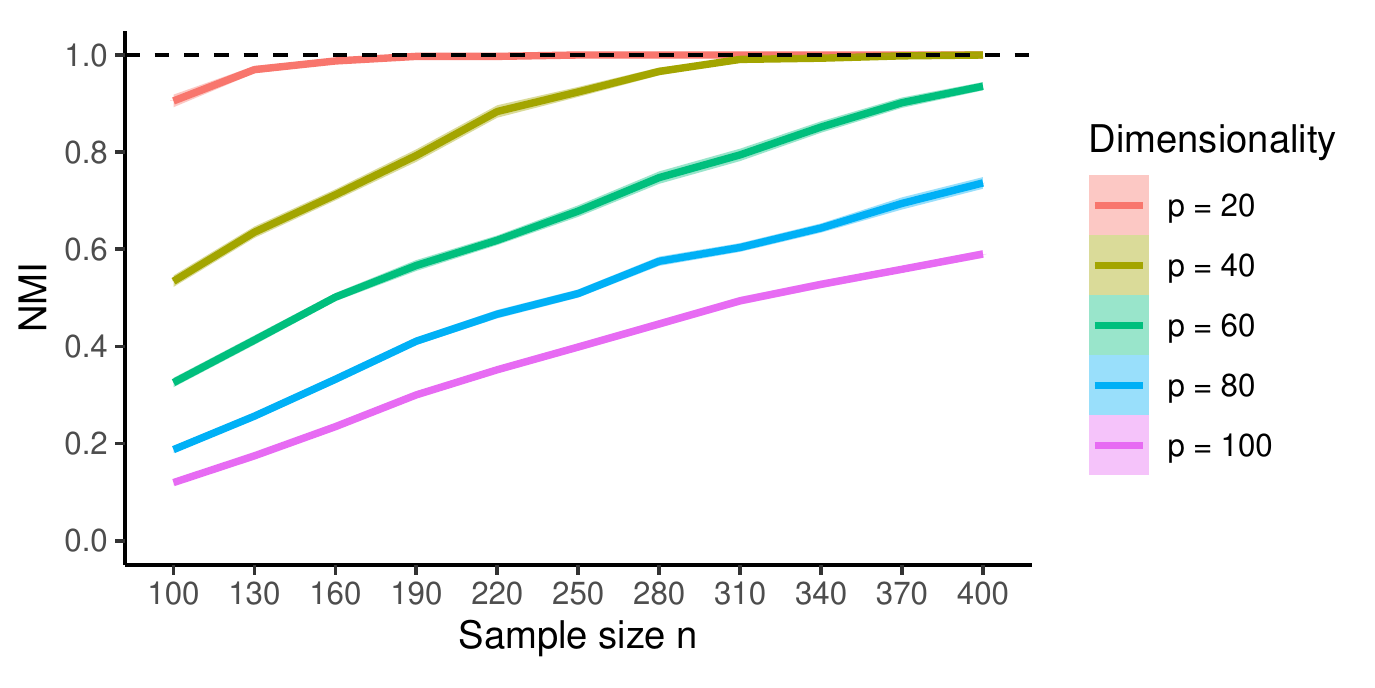} 
      \caption{Error bands for NMIs of the discrete first order algorithm with respect to different sample size and dimensionality. Here $s_0 = p$, $q=0$ and $K_0=4$. Rows of 
 design matrix $\bX$ are i.i.d. from $\cN(\textbf{0},\bI_n)$.
 The true predictors are divided into $4$ groups of size $(p/4,p/4,p/4,p/4)$. True coefficients within each group are $(-1, -0.5,0.5,1 )$ respectively. }
      \label{fig:warm_sample}
\end{figure}

Next, we exploit the discrete first-order algorithm to provide a warm start for $L_0$-Fusion. The MIO solver in \textit{Gurobi} \citep{gurobi} terminates when the gap between the lower and upper objective bounds is less than the Mixed-Integer Programming (MIP) Gap (a user-determined parameter between $0$ and $1$) times the absolute value of the incumbent objective value. More precisely, let $z_P$ be the incumbent primal objective value, which is an upper bound for the global minimum, and $z_D$ be the dual objective value, which is a lower bound for the global minimum. Then the MIP Gap is defined as
$\vert z_P - z_D\vert / \vert z_P\vert$. Figure \ref{fig:warm250} tracks the MIP Gap and the NMI of \lof against its running time on the University of Michigan High Performance Linux cluster. Each job uses $4$ CPUs and $16$ GB memory, which can be satisfied on most personal computers.

\begin{figure}
      \centering
      \setlength\tabcolsep{0pt}
	\renewcommand{\arraystretch}{-5}      
      \begin{tabular}{cc}
      \includegraphics[width=12cm]{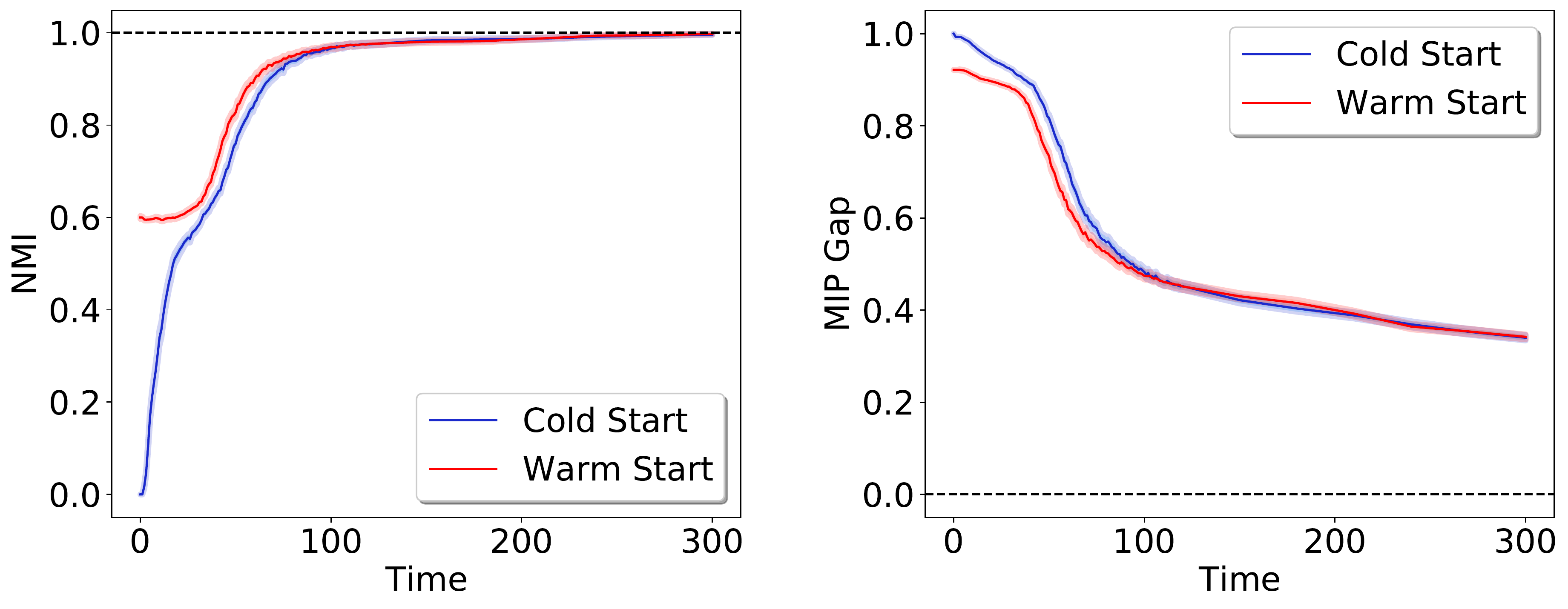} 
      \end{tabular}
      \caption{Error bands for comparing warm start and cold start after $200$ Monte Carlo repetitions. Here we set $n=250$, $p=120$, $s_0 = 120$ and $K_0=4$. Rows of 
 design matrix $\bX$ are i.i.d. from $\cN(\textbf{0},\bI_n)$.
 The true predictors are divided into $4$ groups of size $(30,30,30,30)$. True coefficients within each group are $(-1, -0.5,0.5,1 )$. }
      \label{fig:warm250}
\end{figure}

We have the following three observations from the plots above: 
\begin{itemize}
    \item[(i)] The warm-start solution yields $\mathrm{NMI} = 0.6$ , which is plausible but far from optimal. 
    \item[(ii)] \lof with a warm start yields significantly higher NMI than that with a cold start within the first 50 seconds. 
    \item[(iii)] Even when the MIP Gap is not exactly $0$, \lof can achieve near perfect group recovery. Therefore, one can still expect decent grouping results even if the algorithm has to halt before the MIP Gap vanishes.
\end{itemize}

\subsection{Ultra-high dimensional regime}\label{sec:ultra}

For ultra-high dimensional cases, we let $p = 20,000$, $s_0 = 60$, $n=\lfloor 2s\log p\rfloor$ and $K_0=4$.  All the entries of the design matrix $\bX\in\RR^{n\times p}$ are independent standard Gaussian random variables. The true predictors are set as the first $60$ predictors and divided into $4$ groups of size $15$ with coefficient values $-2r, -r,r,2r$ respectively, where $r\in\{0.15, 0.2,0.25, 0.3\}$. All the results in this section are based on $100$ independent Monte Carlo repetitions.

In the screening step, we first estimate the true sparsity $s_0$ by the size of the model from MCP \citep{zhang2010nearly} that yields the lowest 10-fold cross validation (CV) MSE. Given that the following \lof algorithm can hardly handle hundreds of dimensions, we truncate our sparsity estimator below $100$. Denote the resulting sparsity estimator by $\wh s$. Then we use CoSaMP with projection size $\pi = \wh s$ and expansion size $l = \lceil\wh s / 2\rceil$ to generate a screening set $\wh \cS$ of size $\wh s$. To evaluate the quality of the screened set $\wh \cS$, in Figure \ref{fig:screen} we investigate the cardinality and true positive proportion (TPP) of $\wh \cS$, the latter of which is defined as
\begin{equation}\label{equ:def_tpp}
    \text{TPP}(\wh\cS) := \frac{|\wh\cS\cap \cS^0|}{|\cS^0|},
\end{equation}
where $\cS^0$ denotes the true support set. 
\begin{figure}
      \centering
      \begin{tabular}{c}
    \includegraphics[width = 8cm]{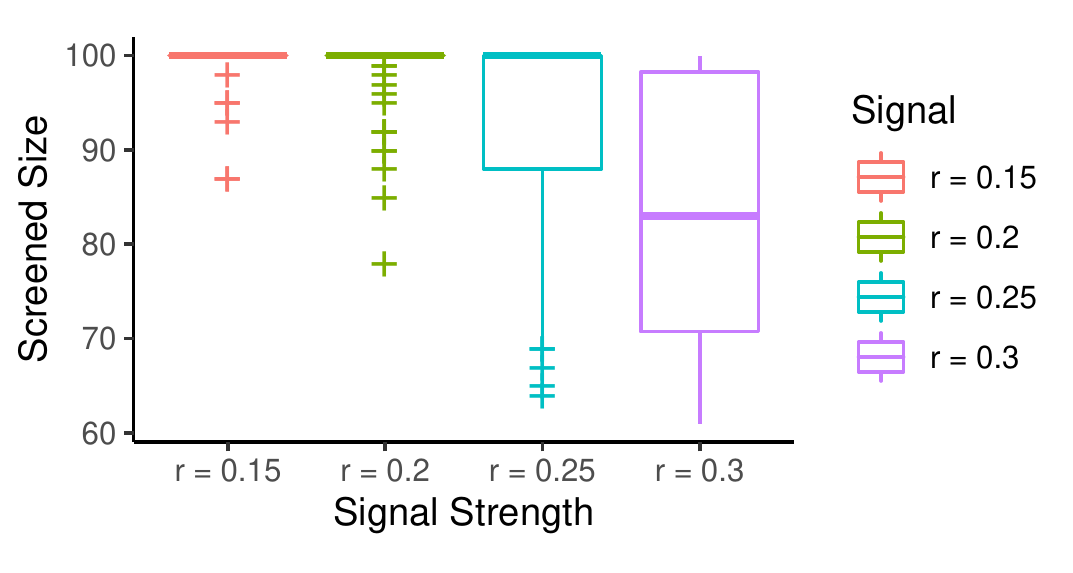} \\ \includegraphics[width = 8cm]{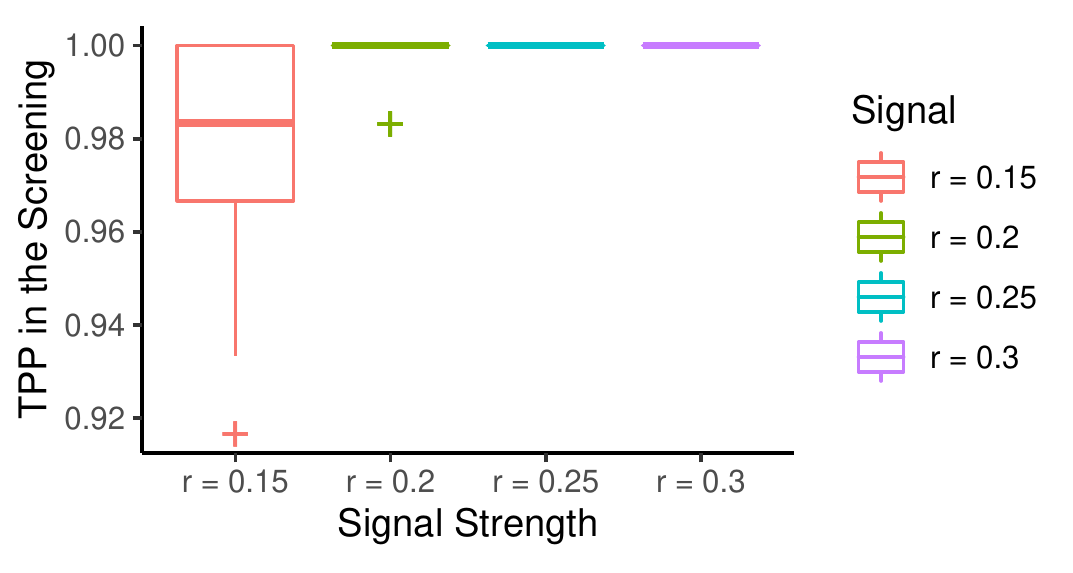}
       \end{tabular}
      \caption{Screening results for ultra-high dimensional problems. Here $p = 20,000$, $s_0 = 60$, $n=\lfloor 2s\log p\rfloor$.}
      \label{fig:screen}
\end{figure}
We then perform grouping on the reduced design. The implementations for all the grouping methods are similar as those in Section \ref{sec:nmi}. All the zero coefficients are considered as forming one group when we calculate the NMI. Figure \ref{fig:nmi_ultra} reports the NMI of \lof, CARDS, fLASSO, OLS+, pairReg+ and FGSG+.
We have the following observations from the two figures:
\begin{itemize}
    \item[(i)] Figure \ref{fig:screen} shows that as signal strength grows, CoSaMP yields higher TPP and smaller screening sizes, meaning that both  accuracy and efficiency of CoSaMP improve.
    \item[(ii)] Combining Figures \ref{fig:screen} and \ref{fig:nmi_ultra}, we observe that when $r\in\{0.2,0.25,0.3\}$, CoSaMP achieves sure screening while the grouping can be far from the truth. This suggests that the grouping error is due to the grouping stage rather than the screening stage.
    \item[(iii)] \lof still outperforms all the competing methods on the reduced designs.
\end{itemize}

\begin{figure}
      \centering
      \includegraphics[width = 14cm]{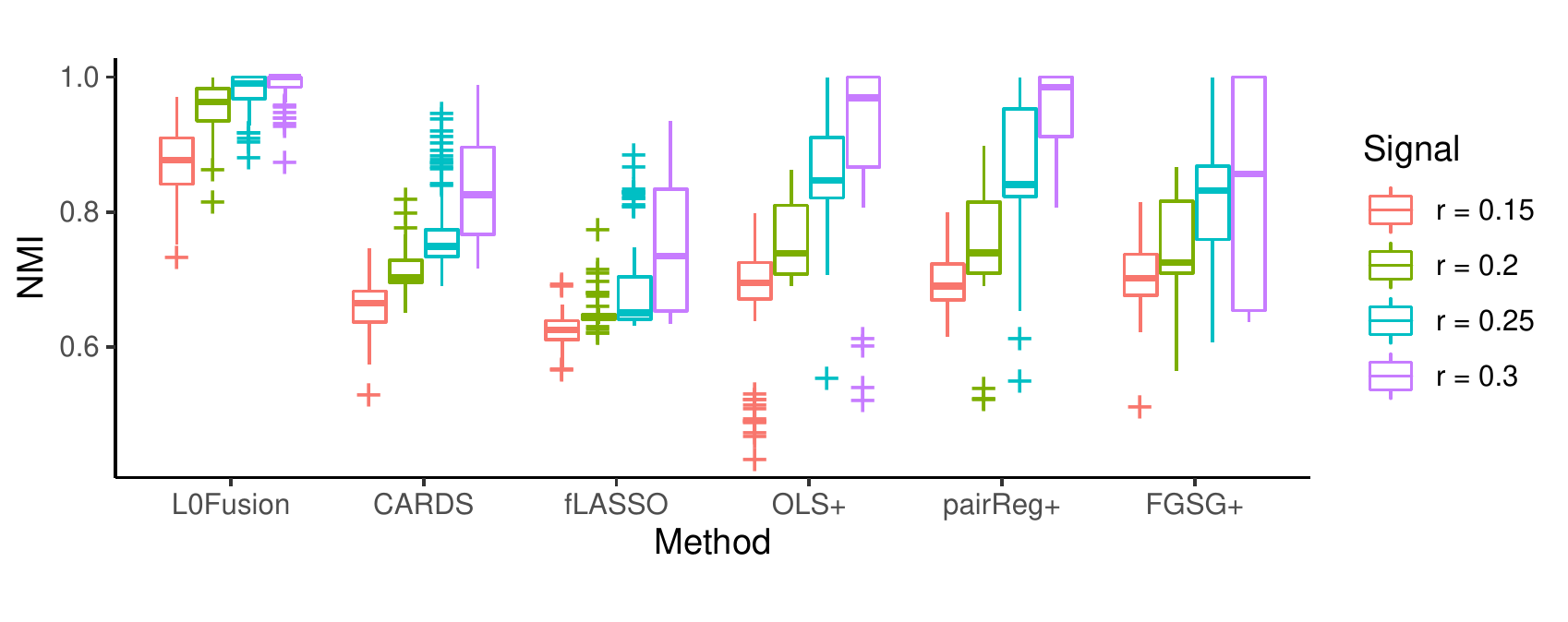} 
      \caption{Grouping results for ultra-high dimensional problems after $100$ Monte Carlo repetitions.}
      \label{fig:nmi_ultra}
\end{figure}

\subsection{Real data analysis}\label{sec:real}
We further illustrate the proposed method by an empirical study of metabolomics data collected from $n = 397$ adolescents consisting of $197$ boys and $200$ girls aged 8 to 18 years during a critical period of growth and sexuality maturation. Early onset of obesity in the adolescent years has been found to be associated with an increased risk of many diseases (e.g. hypertension, diabetics, and cancer) during adulthood. Thus, it is of great scientific interest to detect key groups of lipids (largest metabolites among metabolomics) that predict body mass index (BMI), adjusted by age and sex (1 for boy and 0 for girl). 
We investigate a total of $p = 234$ lipids to determine the number of lipid groups, group memberships and associated average contribution of a group predictor to BMI.  We fit the following linear model with a group homogeneity pursuit on the outcome of BMI:
\begin{equation}
    \label{eq:bmi}
\mbox{BMI} = \alpha_0 + \alpha_1 \mbox{age} + \alpha_2 \mbox{sex} +  \sum_{j=1}^{234} \beta_j \,\mbox{lip}_j + \varepsilon, ~\beta_j \in \{0, \gamma_1, \gamma_2, \dots, \gamma_K\}, ~\forall j \in [p],
\end{equation} 
where the number of signal groups ($K$) as well as  the group of null lipids with $\gamma_0 =0$ is determined by 10-fold CV. Each group represents a subset of lipids  with a shared nonzero effect size $\gamma_k, k=1,\ldots, K$. We normalize the design matrix to ensure mean $0$ and variance $1$ except intercept and sex. To calibrate the effect sizes with respect to different group sizes, for each group of lipids, we use their average measurement as the group's overall measurement; therefore, the corresponding group-level effect size is $\gamma_k$ multiplied by the group size.    

Here we adopt the aforementioned ``\textit{screen then group}'' strategy. Similarly to Section \ref{sec:ultra}, we first estimate the true sparsity $s$ by MCP with 10-fold CV and apply CoSaMP to identify promising individual lipids from the pool of 234 lipids. This screening step selects $18$ potential lipids together with intercept, age and gender. In the second phase, we perform \lof on these selected lipids. Through $10$-fold cross-validation over  $K=\{1, \ldots, 10\}$, we detect six groups with nonzero effect sizes.
It is noteworthy that GUROBI solves the \lof problem within few seconds.  The results are summarized in Table \ref{tab:real}.  For group 1 consisting of 7 similar lipids, the group-level average lipid measurement has a 4.0 effect size on BMI. 


\begin{table}
	\centering
	\begin{tabular}{l  crr}
		\hline
		\hline
		Group~(size)  & Features &\makecell[c]{Group-level \\ effect size}   \\
		\hline
         \makecell{} & Intercept & 21.88\\
         \makecell{} & Boys versus girls & -1.05 \\
         \makecell{} & Age & 0.62 \\
         \hline
         \makecell[c]{{Group 1}~{(7)}} & ``cholesterol biosynthesis"               & 4.00  \\
         \hline
         \makecell[c]{{Group 2}~{(6)}}& 
         ``nutritional energy support and regulation"
         & -3.13 \\
         \hline
         \makecell[c]{{Group 3}~{(2)}} & ``energy transport" & 4.44\\
         \hline
         \makecell[c]{{Group 4}~{(1)}} & ``diet signaling" & -2.00 \\
         \hline
         \makecell[c]{{Group 5}~{(1)}} & ``energy production" & -1.28\\
         \hline
         \makecell[c]{{Group 6}~{(1)}} & ``peptide hormones on food consumption"  &  0.99 \\
		\hline
	\end{tabular}
	\caption{The analysis results of lipid groups and effect sizes among $18$ promising metabolites from preliminary screening by CoSaMP.} 
	\label{tab:real}
\end{table}



We also conduct a confirmatory semi-simulation using the metabolomics design matrix of this real dataset. Assume that a variable of interest $y$ relates to the metabolomics as follows:
\begin{equation}\label{equ:semi-simu}
y = \alpha_1 \mbox{age} + \alpha_2 \mbox{sex} +  \sum_{j=1}^{234} \beta_j \,\mbox{lip}_j + \varepsilon, ~\beta_j \in \{0, \gamma_1, \gamma_2, \dots, \gamma_K\}, ~\forall j \in [p].
\end{equation}
We set $\alpha_1 = 0.5$, $\alpha_2 = 1$ and randomly assign the coefficients $\bbeta$ with a sparse ($20$ nonzero values) and grouped ($4$ groups of size $5$) structure, where the true coefficients within each group are equal to $-2r,-r,r,2r$ respectively with $r\in\{0.5,1\}$. Then we generate $n=397$ responses from the metabolomics design according to \eqref{equ:semi-simu}, where $\epsilon$'s are i.i.d. $\cN(0,1)$. We assess the prediction performance with/without group structure along with the grouping accuracy by randomly splitting the observations into a training set and a testing set at each repetition. We then implement both the screening procedure alone and the \textit{screening then grouping} procedure on the training set and compare their prediction accuracy in terms of MSE on the testing set. 
Table \ref{tab:91} reports the testing MSE with standard error as well as the quantiles of NMI based on $100$ independent Monte Carlo repetitions. It is clear from the table that leveraging the existing group structure by \lof improves the prediction accuracy. 

\begin{table}
	\centering
	\begin{tabular}{c|c|c|c}
		\hline
		\hline
		\makecell[c]{Signal\\ Strength }  & \makecell[c]{Prediction Error \\ (with grouping)} & \makecell[c]{Prediction Error \\ (without grouping)} &
		\makecell[c]{NMI \\ (quantiles)} \\
		\hline
         \makecell{$r = 1$} & \makecell[c]{\textbf{1.092(0.031)}} & \makecell[c]{1.186(0.029)} 
         &
         \makecell[l]{ Max: 1.00 \\  3rd Qu: 1.00 \\  Median: 1.00\\  1st Qu:1.00 \\  Min: 0.75 }\\
         \hline
         \makecell{$r = 0.5$} & \makecell[c]{\textbf{1.120(0.023)}} & \makecell[c]{1.223(0.001)} 
         &
         \makecell[l]{ Max: 1.00 \\  3rd Qu: 1.00 \\  Median: 0.91\\  1st Qu: 0.85\\  Min: 0.58 } \\
         \hline
	\end{tabular}
	\caption{
    Results of grouping and prediction accuracy for the semi-simulation.} 
	\label{tab:91}
\end{table}

\section{Discussion}

This paper studies a combinatorial approach called \lof that enables simultaneous operation of clustering and estimation for regression coefficients in a linear model. This analytic task addresses a practical need for learning  homogeneous groups of nonzero regression coefficients in a regression analysis to assess the relationship between outcomes and clustered signal features. We propose to formulate the \lof problem as a mixed integer optimization (MIO) problem and then leverage modern MIO solvers to compute the corresponding estimators. When the dimension is too high for the MIO solver to handle, we invoke CoSaMP as a preliminary variable screening procedure to reduce dimension prior to the $L_0$-Fusion.  As shown theoretically and numerically in Section \ref{sec:ultra}, such a ``screen then group'' strategy dramatically broadens the applicability of the homogeneity fusion technique, which can scale \lof up to the ambient dimension $p = 20,000$ with high accuracy of recovering the true group structure of regression coefficients. This level of methodological capacity allows to handle a large number of modern biomedical datasets. Thus, this two-stage approach as well as its variants provide efficient toolboxes to solve many homogeneity fusion problems on large-scale datasets. 

Theoretically, we establish grouping consistency of the \lof estimator, for which the sample size $n$ only needs to grow at the same rate as the sum of logarithms of the true sparsity and true group number, i.e., $\log(pK)$. This sample size requirement is also shown to be necessary for any procedure to achieve grouping consistency. These technical results are not only of theoretical interest, but also useful to guide practical work such as sample size determination in a study design. 


An important future work concerns statistical inference after the operation of $L_0$-Fusion. A thorough investigation on the influence of selection errors on statistical inference, in both aspects of finite-sample and large-sample properties, is of great interest. This \lof may be extended to other regression problems with the framework of generalized linear models where iterative procedures used in the parameter estimation rely on weighted least squares objective functions. Thus, this extension is technically manageable but may require substantial computational effort. Also, we would consider an extension of this method to the setting of estimating equations, which could cover a broad range of important statistical models, such as GEE regression, Cox regression and quantile regression. 

\section*{Acknowledgements}
This research is supported by a National Institutes of Health grant R01ES024732 and two National Science Foundation grants DMS2113564 and DMS2015366. We are grateful to
Dr. Zheng Tracy Ke and Dr. Xiaotong Shen for providing their R codes for CARDS and FGSG methods.

\renewcommand{\thethm}{{\sc A.\arabic{theorem}}}
\renewcommand{\thelemma}{A.\arabic{lemma}}\setcounter{lemma}{0}
\renewcommand{\thesection}{{A.\arabic{section}}}

\renewcommand{\theequation}{{A.\arabic{equation}}}
\renewcommand{\thesubsection}{{\it A.\arabic{subsection}}}
\setcounter{equation}{0}
\setcounter{subsection}{0}

\newpage
\section*{Appendices}
\subsection{Segment neighborhood method}
\label{subsection:appendixAlg}

Algorithm \ref{alg:0} is a generalization of the segment neighbourhood method \citep{auger1989algorithms} with sparsity constraint.

\begin{center}
    \begin{algorithm}[h]
        \caption{Segment neighborhood method $\Omega(Ks^2+p\log p+q)$ }
        \label{alg:0}
        \KwIn{  $\bc\in\mathbb{R}^{q+p}$, the number of groups $K$ and the sparsity restriction $s$ }
        \KwOut{a member in $H_{K,s}(c)$}
        \begin{algorithmic}[1]
            \State $\hat\balpha=(c_{1}, c_{2}, \hdots, c_{q})^\top$.
            \State  Let $\delta$ be a bijection on $\{q+1,\hdots,q+p\}$ such that $c_{\delta(q+1)}\leq c_{\delta(q+2)}\leq\hdots\leq c_{\delta(q+p)}$.
            \State  set $x_{lk},y_{lk},x'_{lk}$ and $y'_{lk}$ to $0$ for $l=0,\hdots,p+1$ and $k=0,\hdots,K$.
            \State For $l$ from $1$ to $s$: \\
					\forceindent\forceindent For $k$ from $1$ to $K$: \\
					\forceindent\forceindent\forceindent\forceindent $x_{lk}=\argmax_{1 \leq i\leq l}\{y_{i-1,k-1}+\frac{(\sum_{j=i}^l c_{\delta(q+j)})^2}{l-i+1}\}$.\\
					\forceindent\forceindent\forceindent\forceindent $y_{lk}=\max_{1 \leq i\leq l}\{y_{i-1,k-1}+\frac{(\sum_{j=i}^l c_{\delta(q+j)})^2}{l-i+1}\}$.
			\State 	If $s<p$:\\
					\forceindent\forceindent For $l$ from $p$ to $p-s+1$: \\
					\forceindent\forceindent\forceindent\forceindent For $k$ from $1$ to $K$: \\
					\forceindent\forceindent\forceindent\forceindent\forceindent\forceindent $x'_{lk}=\argmax_{l \leq i\leq p}\{y'_{i+1,k-1}+\frac{(\sum_{j=l}^i c_{\delta(q+j)})^2}{i-l+1}\}$.\\
					\forceindent\forceindent\forceindent\forceindent\forceindent\forceindent $y'_{lk}=\max_{l \leq i\leq p}\{y'_{i+1,k-1}+\frac{(\sum_{j=l}^i c_{\delta(q+j)})^2}{i-l+1}\}$.\\
			\State 	$
					(k^*,l^*,m^*)=\argmax_{\substack{0\leq k\leq K\\0\leq l\leq s\\p-min(s,p)+l+1\leq m \leq p }} y_{kl}+y'_{K-k,m}
				$
			\State For $l$ from $l^*+1$ to $m^*-1$:\\
				\forceindent\forceindent $\hat\beta_{\delta(q+l)}=0$.
			\State Set $t=l^*$\\
				For $k$ from $k^*$ to $1$:\\
				\forceindent\forceindent For $l$ from $t$ to $x_{t,k}$:\\
				\forceindent\forceindent\forceindent\forceindent $\hat\beta_{\delta(q+l)}=\frac{\sum_{j=t}^{x_{tk}}c_{\delta(q+j)}}{t-x_{tk}+1}$.\\
				\forceindent\forceindent $t=x_{tk}-1$.
			\State 	Set $t=m^*$\\
				For $k$ from $K-k^*$ to $1$:\\
				\forceindent\forceindent For $l$ from $t$ to $x'_{tk}$:\\
				\forceindent\forceindent\forceindent\forceindent $\hat\beta_{\delta(q+l)}=\frac{\sum_{j=t}^{x'_{tk}}c_{\delta(q+j)}}{x'_{tk}-t+1}$.\\
				\forceindent\forceindent $t=x'_{tk}+1$.
			\State Return $(\hat\balpha^\top,\hat\bbeta^\top)^\top$.
        \end{algorithmic}
    \end{algorithm}
\end{center}

\subsection{Proofs of Propositions and Theorems}
\subsubsection{Proof of Proposition~\ref{prop:4}}
\label{app:1}

 \begin{prop}
 	\label{prop:2}
 	Consider problem~\eqref{eq:problemws} and some constant $L>l$. Let $\{\btheta_m\}_{m=1}^\infty$ be the sequence generated by Algorithm~\ref{alg:warm_start}. Define 
 	\begin{eqnarray*}
 	&&\rho_m = \left\{ \begin{array}{ll}
 	\min_{\substack{\beta_{m, j} \neq \beta_{m, j'}\\ {\beta_{m, j}, \beta_{m, j'} \neq 0}}}|\beta_{m, j} - \beta_{m, j'}|, & \text{if there are $K$ distinct non-zero values in $\bbeta_m$},\\
 	0,& otherwise;
 	\end{array}\right. \nonumber\\
 	&&\tau_m = \left\{ \begin{array}{ll}
 	\min_{\beta_{m, j} \neq 0} |\beta_{m, j}|, & \text{if there is some non-zero value in $\bbeta_{m}$}, \\
 	0,  & \text{otherwise}.
 	\end{array} 	
 	\right.  	
 	\end{eqnarray*}
 	The following properties hold.    
	\begin{enumerate}
		\item 
			When $\liminf_{m\to\infty}\rho_m>0$ and $\liminf_{m\to\infty}\tau_m>0$, we have
			\begin{enumerate}
				\item 
					$\mathbb{G}(\bbeta_m)$ converges. \label{statement:converge1}
				\item 
					If $g$ has second order derivative and there exists $l'>0$ such that $l'\left\Vert \btheta-\tilde\btheta \right\Vert_2\leq \left\Vert \nabla g(\btheta)-\nabla g(\tilde\btheta) \right\Vert_2$ for any $\btheta,\tilde\btheta\in\bTheta(K,s)$ satisfying $\mathbb{G}(\bbeta)=\mathbb{G}(\tilde\bbeta)$, then the sequence $\btheta_m$ is bounded and converges to a first-order stationary point.\label{statement:converge2}
			\end{enumerate}
		\item 
			When $\liminf_{m\to\infty}\tau_m=0$, we have
			\begin{enumerate}
				\item 
					$\liminf_{m\to\infty}\left\Vert\nabla g(\btheta_m)\right\Vert_\infty=0$. \label{statement:converge3}
				\item 
					If there exists a convergent subsequence $\{\btheta_{f(m)}\}_{m=1}^\infty$ such that $\lim_{m\to\infty}\tau_{f(m)}=0$, then $\lim_{m\to\infty}g(\btheta_m)=\min_{\btheta\in\mathbb{R}^{p+q}}g(\btheta)$.\label{statement:converge4}
			\end{enumerate}
		\item 
			When $\liminf_{m\to\infty}\rho_m=0$ and $\liminf_{m\to\infty}\tau_m>0$, we have 
			\begin{enumerate}
				\item 
					$\mcG(\bbeta_m;0)$ converges and $\liminf_{m\to\infty}\max_{j\in [p]\backslash\mcG(\btheta_m;0)}\abs{\frac{\partial g(\btheta_m)}{\partial\beta_j}}
					=0$.\label{statement:converge5}
				\item 
					If there exists a convergent subsequence $\{\btheta_{f(m)}\}_{m=1}^\infty$ such that $\lim_{m\to\infty}\rho_{f(m)}=0$, then $\btheta_{f(m)}$ converges to a first-order stationary point.\label{statement:converge6}
			\end{enumerate}
	\end{enumerate}
 \end{prop}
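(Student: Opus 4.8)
The plan is to run everything off the two facts from Proposition~\ref{prop:3} --- $g(\btheta_m)\downarrow c$ for some finite $c$, and $\norm{\btheta_{m+1}-\btheta_m}_2\to 0$ --- together with one soft property of the projection: the set-valued map $\bc\mapsto\cH_{K,s}(\bc)$ has a closed graph, because $\bTheta(K,s)$ is a finite union of linear subspaces (hence closed) and $\btheta\mapsto\norm{\btheta-\bc}_2^2$ is continuous. Combined with continuity of $\nabla g$, this yields the reusable fact I will call $(\star)$: \emph{if $\btheta_{f(m)}\to\btheta^\infty$ along a subsequence, then (since $\norm{\btheta_{f(m)+1}-\btheta_{f(m)}}_2\to 0$) also $\btheta_{f(m)+1}\to\btheta^\infty$, and passing to the limit in $\btheta_{f(m)+1}\in\cH_{K,s}(\btheta_{f(m)}-\tfrac1L\nabla g(\btheta_{f(m)}))$ gives $\btheta^\infty\in\cH_{K,s}(\btheta^\infty-\tfrac1L\nabla g(\btheta^\infty))$, i.e.\ $\btheta^\infty$ is a first-order stationary point in the sense of Definition~\ref{def:1}.} Most ``convergence to a stationary point'' conclusions will be read off from $(\star)$ once the right subsequence is identified.

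First, Statement~\ref{statement:converge1} and the ``$\mcG(\bbeta_m;0)$ converges'' half of Statement~\ref{statement:converge5}: here I would use that the separations stay bounded below. Pick $N$ and $\rho_0,\tau_0>0$ with $\rho_m\ge\rho_0$ (resp.\ $\tau_m\ge\tau_0$) for all $m\ge N$, and enlarge $N$ so that $\norm{\btheta_{m+1}-\btheta_m}_\infty<\tfrac14\min(\rho_0,\tau_0)$ for $m\ge N$. If a pair of coordinates changed its ``grouped / not grouped'' (resp.\ ``zero / nonzero'') status between $\btheta_m$ and $\btheta_{m+1}$, then the triangle inequality would force a gap of at least $\min(\rho_0,\tau_0)$ between two coordinates of $\btheta_{m+1}$ that differ by at most $2\norm{\btheta_{m+1}-\btheta_m}_\infty<\tfrac12\min(\rho_0,\tau_0)$ --- a contradiction. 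Hence $\mathbb{G}(\bbeta_m)$ (resp.\ $\mcG(\bbeta_m;0)$) is eventually constant. For Statement~\ref{statement:converge2}, once $\mathbb{G}(\bbeta_m)\equiv\mathbb{G}^*$ for $m\ge N'$, the feasible set locally collapses to the linear subspace $V=\{\btheta:\mathbb{G}(\bbeta)=\mathbb{G}^*\}\subseteq\bTheta(K,s)$, and since $\btheta_m\in V$ the update reads $\btheta_{m+1}=P_V(\btheta_m-\tfrac1L\nabla g(\btheta_m))=\btheta_m-\tfrac1L P_V\nabla g(\btheta_m)$ --- plain projected gradient descent for $g$ restricted to $V$. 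Co-coercivity of the $l$-smooth convex $g$ together with the lower-Lipschitz hypothesis makes $\nabla g$ strongly monotone on $V$, so, since $L>l$, the projected-gradient iteration is a contraction on $V$; therefore $\btheta_m$ converges linearly to the unique minimizer of $g|_V$ --- so $\{\btheta_m\}$ is bounded --- and $(\star)$ upgrades this limit to a first-order stationary point of the full problem. (Second-order differentiability of $g$ merely streamlines the strong-monotonicity step.)

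Next, the $\tau_m\to 0$ regime. The key observation for Statement~\ref{statement:converge4} is that a convergent subsequence $\btheta_{f(m)}\to\btheta^\infty$ with $\tau_{f(m)}\to 0$ must have $\bbeta^\infty=\bzero$, since a coordinate with $\beta^\infty_j\ne 0$ would force $\tau_{f(m)}\le\abs{\beta_{f(m),j}}\to\abs{\beta^\infty_j}>0$. At such a limit, $(\star)$ gives $\btheta^\infty\in\cH_{K,s}(\btheta^\infty-\tfrac1L\nabla g(\btheta^\infty))$; but for a point with $\bbeta=\bzero$ this forces $\nabla g(\btheta^\infty)=\bzero$, because the unconstrained $\balpha$-block gives $\nabla_\balpha g(\btheta^\infty)=\bzero$, while comparing the all-zero $\bbeta$-partition against the competitor that puts the coordinate maximizing $\abs{\partial_j g(\btheta^\infty)}$ into its own singleton group forces $\nabla_\bbeta g(\btheta^\infty)=\bzero$. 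By convexity $\btheta^\infty$ is then the global minimizer and $g(\btheta_m)\downarrow g(\btheta^\infty)=\min_{\btheta}g(\btheta)$. Statement~\ref{statement:converge3} is the ``one step earlier'' version of the same computation: when $\bbeta_{f(m)}=\bzero$, optimality of the all-zero $\bbeta$-partition for the preimage $\btheta_{f(m)-1}-\tfrac1L\nabla g(\btheta_{f(m)-1})$ forces its $\bbeta$-block to vanish, which with $\norm{\btheta_{f(m)}-\btheta_{f(m)-1}}_2\to 0$ yields $\nabla g(\btheta_{f(m)-1})\to\bzero$; the remaining branch, where $\bbeta_{f(m)}$ carries a nonzero coordinate of magnitude $\tau_{f(m)}\to 0$, is handled by combining the normal equations of the group containing that coordinate with $\norm{\btheta_{f(m)}-\btheta_{f(m)-1}}_2\to 0$. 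Finally, for the $\rho_m\to 0$ regime (Statements~\ref{statement:converge5} and~\ref{statement:converge6}): the zero pattern was already shown to converge; a convergent subsequence with $\rho_{f(m)}\to 0$ has a limit that is merely feasible with two nonzero group values collapsed, and $(\star)$ makes it first-order stationary, giving Statement~\ref{statement:converge6}; and the residual claim $\liminf_m\max_{j\notin\mcG(\btheta_m;0)}\abs{\partial_j g(\btheta_m)}=0$ follows from the same boundary analysis as Statement~\ref{statement:converge3}, now comparing $\btheta_{f(m)}$ with the feasible competitor that merges the two near-equal nonzero groups into their pooled value.

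I expect the genuine obstacle to be exactly this last kind of ``boundary analysis'' behind Statement~\ref{statement:converge3} and the gradient half of Statement~\ref{statement:converge5}: turning a near-degeneracy of the combinatorial constraint ($\tau_m$ or $\rho_m$ small) into a quantitative bound on gradient coordinates requires writing out the optimality conditions of the \emph{non-convex} projection $\cH_{K,s}$ --- keeping track of how perturbing one group's value moves all of its members, and how this interacts with the sparsity budget $\norm{\bbeta}_0\le s$ and the ordering constraint $\gamma_k<\gamma_{k+1}$ --- and then arguing that these local conditions control the entire restricted gradient, not merely one direction. Everything else reduces to the elementary ``separations pin down the pattern'' argument and the closed-graph fact $(\star)$.
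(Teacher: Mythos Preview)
Your closed-graph principle $(\star)$ is a genuinely cleaner device than what the paper does: for Statement~\ref{statement:converge6} the paper reassembles first-order stationarity of the limit by hand --- it passes the perturbation inequality $|c_{f(m)-1,j'}|\le|c_{f(m)-1,j''}|$ (for $\beta_{f(m),j'}=0$, $\beta_{f(m),j''}\ne0$) to the limit and combines it with the gradient half of~\ref{statement:converge5} --- whereas your $(\star)$ gets $\btheta^\infty\in\cH_{K,s}(\btheta^\infty-\tfrac1L\nabla g(\btheta^\infty))$ in one stroke. Your treatment of~\ref{statement:converge1},~\ref{statement:converge2} and the zero-pattern half of~\ref{statement:converge5} matches the paper's in substance (the paper writes the contraction for~\ref{statement:converge2} explicitly via the idempotent averaging matrix and the mean-value form $(\bI-\tfrac1L\nabla^2 g)$, but the content is the same projected-gradient contraction on the fixed subspace).

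There is, however, a real error in your argument for Statement~\ref{statement:converge4}. The claim ``$\tau_{f(m)}\to 0$ forces $\bbeta^\infty=\bzero$'' is false: $\tau_{f(m)}\le|\beta_{f(m),j}|$ is only an \emph{upper} bound, so $\tau_{f(m)}\to 0$ is perfectly compatible with $|\beta_{f(m),j}|\to|\beta^\infty_j|>0$ (e.g.\ $\bbeta_{f(m)}=(1,1/m,0,\ldots,0)$). Without $\bbeta^\infty=\bzero$, $(\star)$ only yields that $\btheta^\infty$ is first-order stationary in $\bTheta(K,s)$, which does \emph{not} imply $\nabla g(\btheta^\infty)=\bzero$ or global optimality over $\RR^{p+q}$. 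The paper bypasses this entirely: its perturbation analysis for~\ref{statement:converge3} actually proves the stronger statement that along any subsequence with $\tau_{f(m)}\to0$ one has $\|\nabla g(\btheta_{f(m)-1})\|_\infty\to0$, and~\ref{statement:converge4} then follows from convexity and continuity of $\nabla g$. The crucial trick you are missing in your ``boundary analysis'' sketch is this: merging the $0$-group with the $\tau_{f(m)}$-group \emph{frees one group slot}, which lets you single out \emph{any} coordinate $j$ (not just those in the small group) into its own group; comparing $h_L$-values with this competitor bounds $|\beta_{f(m),j}-c_{f(m)-1,j}|$, hence $|\partial g(\btheta_{f(m)-1})/\partial\beta_j|$, uniformly in $j$. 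The same ``merge two near-equal groups to free a slot, then split off an arbitrary coordinate'' is exactly the mechanism behind the gradient half of~\ref{statement:converge5}.
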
 

	\begin{rem}
		\label{remark:1}
		The convergent subsequence condition could be satisfied under some weak conditions such as $\{\btheta \in \bTheta(K, s) |~ g(\btheta) \leq C\}$ being bounded for any $C\in\mathbb{R}$.
	\end{rem}

	\begin{proof}
	In the following proof,  $\beta_{m,j}$ denotes the $j$th element in $\bbeta_m$  where $\btheta_m=(\bbeta_m^\top,\balpha_m^\top)^\top$. Likewise, $c_{m,j}$ denotes the $j$th element in $\bc_m:=  \btheta_{m}-\frac{1}{L}\nabla g(\btheta_m)$. 
		\begin{enumerate}
			\item
				\begin{enumerate}
					\item
						For large enough $m$, if $\mathbb{G}(\bbeta_m)\neq\mathbb{G}(\bbeta_{m+1})$, then 
						\[
						\left\Vert\bbeta_{m+1}-\bbeta_{m}\right\Vert_2>\min(\liminf_{m\to\infty}\rho_m,\liminf_{m\to\infty}\tau_m)/\sqrt{2},
						\]
						in contradiction to Proposition~\ref{prop:3} Statement~\ref{statement:shrink}. 
					\item
						Due to Statement~\ref{statement:converge1}, there exists $M$ such that for any $m\geq M$, $\mathbb{G}(\bbeta_m)$ are the same. Then for any $m>M$, we have
						\begin{align*}
							&\left\Vert \btheta_{m+2}-\btheta_{m+1} \right\Vert_2 
							=\left\Vert 
							\begin{pmatrix}
								\textbf{A}&0\\
								0&\textbf{I}_q
							\end{pmatrix}
							\brac[\Big]{(\btheta_{m+1}-\btheta_{m})-\frac{1}{L}
						\paren[\Big]{\nabla g(\btheta_{m+1})-\nabla g(\btheta_{m})}}\right\Vert_2\\
				=&\left\Vert
						\begin{pmatrix}
							\textbf{A}&0\\
							0&\textbf{I}_q
						\end{pmatrix}
						(I-\frac{1}{L}\nabla^2 g(\btheta'))
						(\btheta_{m+1}-\btheta_{m})\right\Vert_2
						\leq\sqrt{1-\frac{l'^2}{L^2}}\left\Vert \btheta_{m+1}-\btheta_{m}\right\Vert_2,
				\end{align*}
				where $\textbf{A}_{p\times p}$ is an idempotent matrix $\left(\frac{I(\beta_{m,j}=\beta_{m,j'})}{|\mathbb{G}(\bbeta_m;\beta_{m,j'})|}\right)_{j,j'\in[p]}$.
			Since $0<\frac{l'}{L}\leq 1$, $\btheta_m$ converges to a first order stationary point.
				\end{enumerate}
			\item
				\begin{enumerate}
						\item
							Since $\btheta_{m+1}-\btheta_{m}$ converges, we have $\lim_{m\to\infty}\left\Vert \frac{\partial g(\btheta_m)}{\partial \balpha}\right\Vert_\infty=0$. There exists a subsequence $\{\btheta_{f(m)}\}_{m=1}^\infty$ such that $\lim_{m\to\infty}\tau_{f(m)}=0$. Without loss of generality, we assume $|\mcG(\bbeta_{f(m)};\tau_{f(m)})|=t>0$. Fixing $m$, for any $j\in[p]$ such that $|\mcG(\bbeta_{f(m)};\beta_{f(m),j})|=t'>1$, we create $\tilde\btheta$ whose grouping is the same as $\btheta_{f(m)}$ except that the $0$-group and $\tau_{f(m)}$-group in $\btheta_{f(m)}$ are merged as the new $0$-group and that $\beta_{f(m),j}$ is singled out as a new group. Then 
						\begin{align*}
							&0\geq \frac{2}{L}\brac[\big]{h_L(\btheta_{f(m)},\btheta_{f(m)-1})-h_L(\tilde\btheta,\btheta_{f(m)-1})}\\
							&=\begin{cases}
							-t\tau^2_{f(m)}+c_{f(m)-1,j}^2, \hfill\text{ if }\beta_{f(m),j}=0 \text{ or }\tau_{f(m)};\\
    						-t\tau^2_{f(m)}+\frac{t'}{t'-1}(\beta_{f(m),j}-c_{f(m)-1,j})^2, \hfill\text{ otherwise}.
						    \end{cases}
						\end{align*}
						So for any $j\in [p]$, we have $\frac{1}{L}|\frac{\partial g(\theta_{f(m)-1})}{\partial \beta_{j}}|=|\beta_{f(m)-1,j}-c_{f(m)-1,j}|\leq|\beta_{f(m)-1,j}-\beta_{f(m),j}|+|\beta_{f(m),j}-c_{f(m)-1,j}|\leq \left\Vert \btheta_{f(m)}-\btheta_{f(m)-1} \right\Vert_2+(\sqrt{s}+1)\tau_{f(m)}$. Thus $\lim_{m\to\infty}\left\Vert \frac{\partial g(\btheta_{f(m)-1})}{\partial \bbeta}\right\Vert_\infty=0$. 
					\item 
						Due to Statement~\ref{statement:converge3}, we have $\lim_{m\to\infty}\left\Vert\nabla g(\btheta_{f(m)-1})\right\Vert_\infty=0$. Since $\lim_{m\to\infty}\btheta_{f(m)-1}=\btheta'$, we have $g(\btheta')=\min_{\btheta} g(\btheta)$. Since $g(\btheta_m)$ converges, we have $\lim_{m\to\infty}g(\btheta_m)=\min_{\btheta} g(\btheta)$.
				\end{enumerate}
			\item
				\begin{enumerate}
					\item 
						Due to the proof of Statement~\ref{statement:converge1}, if $\liminf_{m\to\infty}\rho_m=0 $, then $\mcG(\bbeta_m;0)$ converges. There exists sequences $\{\btheta_{f(m)}\}_{m=1}^\infty$, $\{j_m\}_{m=1}^\infty$ and $\{j'_m\}_{m=1}^\infty$ such that for any $m>0$ we have $\beta_{f(m),j_m}\neq 0$, $\beta_{f(m),j'_m}\neq 0$, $\abs{\beta_{f(m),j_m}-\beta_{f(m),j'_m}}=\rho_{f(m)}$ and $\lim_{m\to\infty}\beta_{f(m),j_m}-\beta_{f(m),j'_m}=0$. Fixing $m$, let $t$ and $t'$ denote $|\mcG(\bbeta_{f(m)};\beta_{f(m),j_m})|$ and $|\mcG(\bbeta_{f(m)};\beta_{f(m),j'_m})|$. For any $j''\in[p]$ such that $\beta_{f(m),j''}\neq 0$ and $t'':=|\mcG (\bbeta_{f(m)};\beta_{f(m),j''})|>1$, we create $\tilde\btheta$ whose grouping is the same as $\btheta_{f(m)}$ except that the $\beta_{f(m),j_m}$-group and $\beta_{f(m),j'_m}$-group in $\btheta_{f(m)}$ are merged as a new group and that $\beta_{f(m),j''}$ is singled out as a new group. Then 
						\begin{align*}
							&0\geq \frac{2}{L}\brac[\big]{h_L(\btheta_{f(m)},\btheta_{f(m)-1})-h_L(\tilde\btheta,\btheta_{f(m)-1})}\\
							=&\begin{cases}
							-(\frac{1}{t}+\frac{1}{t'})^{-1}(\beta_{f(m),j_m}-\beta_{f(m),j'_m})^2+\frac{t+t'}{t+t'-1}(\frac{t\beta_{f(m),j_m}+t'\beta_{f(m),j'_m}}{t+t'}-c_{f(m)-1,j''})^2,\\
							\hfill\text{if }\beta_{f(m),j''}=\beta_{f(m),j_m} \text{ or } \beta_{f(m),j'_m};\\
							\\
							-(\frac{1}{t}+\frac{1}{t'})^{-1}(\beta_{f(m),j_m}-\beta_{f(m),j'_m})^2+\frac{t''}{t''-1}(\beta_{f(m),j''}-c_{f(m)-1,j''})^2,\hfill\text{otherwise}.
						\end{cases}
					\end{align*}
					So for any $j''\in[p]$ such that $\beta_{f(m),j''}\neq 0$, we have $\frac{1}{L}|\frac{\partial g(\btheta_{f(m)-1})}{\partial \beta_{j''}}|=|\beta_{f(m)-1,j''}-c_{f(m)-1,j''}|\leq|\beta_{f(m)-1,j''}-\beta_{f(m),j''}|+|\beta_{f(m),j''}-c_{f(m)-1,j''}|\leq \left\Vert\btheta_{f(m)-1}-\btheta_{f(m)}\right\Vert_2+(s+1)\rho_{f(m)}$. 
				\item
					On top of the proof of Statement~\ref{statement:converge5}, for fixed $m$ and any $j',j''\in[p]$ such that $\beta_{f(m),j'}=0$ and $\beta_{f(m),j''}\neq 0$, we create $\tilde\btheta$ whose grouping is the same with $\btheta_{f(m)}$ except that the $\beta_{f(m),j_m}$-group and $\beta_{f(m),j'_m}$-group in $\btheta_{f(m)}$ are merged as a new group and that $\beta_{j'}$ is singled out as a new non-zero group and $\beta_{j''}$ is put in $0$-group. Let $t''$ denote $|\mcG(\bbeta_{f(m)};\beta_{f(m),j''})|$. Then 
					\begin{align*}
						&0\geq \frac{2}{L}\brac[\big]{h_L(\btheta_{f(m)},\btheta_{f(m)-1})-h_L(\tilde\btheta,\btheta_{f(m)-1})}\\
						=&\begin{cases}
						-(\frac{1}{t}+\frac{1}{t'})^{-1}(\beta_{f(m),j_m}-\beta_{f(m),j'_m})^2+\frac{t+t'}{t+t'-1}(\frac{t\beta_{f(m),j_m}+t'\beta_{f(m),j'_m}}{t+t'}-c_{f(m)-1,j''})^2\\
						-c^2_{f(m)-1,j''}+c^2_{f(m)-1,j'}, \hfill\text{if }\beta_{f(m),j''}=\beta_{f(m),j_m} \text{ or } \beta_{f(m),j'_m};\\
						\\
						-(\frac{1}{t}+\frac{1}{t'})^{-1}(\beta_{f(m),j_m}-\beta_{f(m),j'_m})^2+\frac{I(t''>1)t''}{t''-1}(\beta_{f(m),j''}-c_{f(m)-1,j''})^2\\
						-c^2_{f(m)-1,j''}+c^2_{f(m)-1,j'},\hfill\text{\ \ \ otherwise}.
					\end{cases}
					\end{align*}
					Thus $\abs{c_{f(m)-1,j'}}\leq \abs{c_{f(m)-1,j''}}$.\\
					
					Since $\lim_{m\to\infty}\btheta_{f(m)}=\btheta'$, we have $\lim_{m\to\infty}\btheta_{f(m)-1}=\btheta'$ and $\lim_{m\to\infty}\nabla g(\btheta_{f(m)-1})=\nabla g(\btheta')$. It is easy to check that $\bTheta(K,s)$ is a closed set, so $\btheta'\in\bTheta(K,s)$. And $\mcG(\bbeta';0)=\lim_{m\to\infty}\mcG(\bbeta_m;0)$ because of Statement~\ref{statement:converge5}. Therefore, we have \[
					    \abs{\beta'_{j'}-\frac{1}{L}\frac{\partial g(\btheta')}{\partial \beta_{j'}}}=\lim_{m\to\infty}\abs{c_{f(m)-1,j'}}\leq \lim_{m\to\infty}\abs{c_{f(m)-1,j}}=\abs{\beta'_{j}-\frac{1}{L}\frac{\partial g(\btheta')}{\partial \beta_{j}}},
					\]
					for any $j,j'\in[p]$ such that $\beta'_j=0$ and $\beta'_{j'}\neq 0$.  Due to  Statement~\ref{statement:converge5}, we have $\frac{\partial g(\btheta')}{\partial \beta_j}=0 $ for any $j\in[p]$ such that $\beta'_j=0$. So $\btheta'\in\cH_{K,s}(\btheta'-\frac{1}{L}\nabla g(\btheta'))$.
		\end{enumerate}
\end{enumerate}
	\end{proof}

\subsubsection{Proof of Theorem~\ref{thm:suff}}
\label{app:3}
	\begin{proof}
		For any grouping $\mathbb{G}(\btheta)$ such that  $\btheta\in\bTheta(K_0,s_0)$, define $\bP_{\mathbb{G}(\btheta)}$ as the projection matrix of $(\bX_{\mathbb{G}(\btheta)},\bZ)$. For any $\btheta$ satisfying $\btheta\in\bTheta(K_0,s_0)$ and $\mathbb{G}(\bbeta)\neq \mathbb{G}(\bbeta ^ *)$, we have: 
		\begin{align}
			&\mbP\paren[\Bigg]{
			\min_{\substack{\tilde\btheta\in\bTheta(K_0,s_0)\\ \mathbb{G}(\tilde\bbeta)=\mathbb{G}(\bbeta)}}\norm{ \bY-(\bX,\bZ)\tilde\btheta}_2^2<\norm{ \bY-(\bX,\bZ)\hat\btheta^{\mathrm{ol}}}_2^2}\nonumber\\
			=&\mbP\left(2\bvarepsilon^\top(\bI-\bP_{\mathbb{G}(\bbeta)})(\bX,\bZ)\btheta^* + \bigl\Vert(\bI-\bP_{\mathbb{G}(\bbeta)})(\bX,\bZ)\btheta^*\bigr\Vert_2^2-\bvarepsilon^\top(\bP_{\mathbb{G}(\bbeta)}-\bP_{\mathbb{G}(\bbeta ^ *)})\bvarepsilon<0\right). \label{Equation:suff:prob1}
		\end{align}
		For any $0<\delta<1$, we have: 
		\begin{align*}
			\text{Equation~(\ref{Equation:suff:prob1})}\leq&\mbP\left(2\bvarepsilon^\top(\bI-\bP_{\mathbb{G}(\bbeta)})(\bX,\bZ)\btheta^*+\delta\bigl\Vert(\bI-\bP_{\mathbb{G}(\bbeta)})(\bX,\bZ)\btheta^*\bigr\Vert_2^2<0\right)+\\
			&\mbP\left((1-\delta)\bigl\Vert(\bI-\bP_{\mathbb{G}(\bbeta)})(\bX,\bZ)\btheta^*\bigr\Vert_2^2-\bvarepsilon^\top(\bP_{\mathbb{G}(\bbeta)}-\bP_{\mathbb{G}(\bbeta ^ *)})\bvarepsilon<0\right).
		\end{align*}
		For any $t_1>0$, $t_2>0$ and by Markov's inequality, we have:
		\begin{align}
			\text{Equation~(\ref{Equation:suff:prob1})}\leq&\mbE\left[\exp{\brac[\bigg]{-\frac{2t_1 \bvarepsilon^\top(\bI-\bP_{\mathbb{G}(\bbeta)})(\bX,\bZ)\btheta^*}{\sigma^2}}}\right]\exp{\brac[\Bigg]{-\frac{t_1\delta\bigl\Vert(\bI-\bP_{\mathbb{G}(\bbeta)})(\bX,\bZ)\btheta^*\bigr\Vert_2^2}{\sigma^2}}}+\label{Equation:suff:term1}\\
			&\mbE\left[\exp{\brac[\bigg]{\frac{t_2\bvarepsilon^\top(\bP_{\mathbb{G}(\bbeta)}-\bP_{\mathbb{G}(\bbeta ^ *)})\bvarepsilon}{\sigma^2}}}\right]\exp{\brac[\Bigg]{-\frac{t_2(1-\delta)\bigl\Vert(\bI-\bP_{\mathbb{G}(\bbeta)})(\bX,\bZ)\btheta^*\bigr\Vert_2^2}{\sigma^2} }}.\label{Equation:suff:term2}
		\end{align}
            By moment generating function, when $2t_1^2-t_1\delta<0$, the term in Equation~(\ref{Equation:suff:term1}) equals to:  
            \begin{align*}
            	&\exp{\brac[\Bigg]{\frac{2 t_1^2\bigl\Vert(\bI-\bP_{\mathbb{G}(\bbeta)})(\bX,\bZ)\btheta^*\bigr\Vert_2^2}{\sigma^2}}}\exp{\brac[\Bigg]{-\frac{t_1\delta\bigl\Vert(\bI-\bP_{\mathbb{G}(\bbeta)})(\bX,\bZ)\btheta^*\bigr\Vert_2^2}{\sigma^2}}},\\
            	\leq&\exp\brac[\bigg]{\frac{2t_1^2-t_1\delta}{\sigma^2}n d(\bbeta,\bbeta ^ *) c_{\min}}.
            \end{align*}
            By geometry interpretation of projection matrix, the term in Equation~(\ref{Equation:suff:term2}) is smaller than or equal to:    
            \begin{align*}
            	&\mbE\left[\exp{\brac[\bigg]{\frac{t_2\bvarepsilon^\top\bP_{\mathbb{G}(\bbeta)\backslash\mathbb{G}(\bbeta ^ *)}\bvarepsilon}{\sigma^2}}}\right]\exp{\brac[\Bigg]{-\frac{t_2(1-\delta)\bigl\Vert(\bI-\bP_{\mathbb{G}(\bbeta)})(\bX,\bZ)\btheta^*\bigr\Vert_2^2}{\sigma^2}}},\\
            	&=(1-2t_2)^{-\abs{\mathbb{G}(\bbeta)\backslash\mathbb{G}(\bbeta ^ *)}/2}\exp{\brac[\Bigg]{-\frac{t_2(1-\delta)\bigl\Vert(\bI-\bP_{\mathbb{G}(\bbeta)})(\bX,\bZ)\btheta^*\bigr\Vert_2^2}{\sigma^2}}},
            \end{align*}
            where $\bP_{\mathbb{G}(\bbeta)\backslash\mathbb{G}(\bbeta ^ *)}$ indicates the projection matrix of the columns in $X_{\mathbb{G}(\bbeta)}$ but not in $X_{\mathbb{G}(\bbeta ^ *)}$, and $\abs{\mathbb{G}(\bbeta)\backslash\mathbb{G}(\bbeta ^ *)}$ is the number of those columns.
            By the fact that $2t_2\geq -\log(1-2t_2)/2$ for any $0<t_2<0.398$, we can restrict $t_2\leq0.398$. Then, the term in Equation~(\ref{Equation:suff:term2}) is less than or equal to: 
            \begin{align*}
            	&
            	\exp{\brac[\Bigg]{2t_2\abs{\mathbb{G}(\bbeta)\backslash\mathbb{G}(\bbeta ^ *)}}}
            	\exp{\brac[\Bigg]{-\frac{t_2(1-\delta)\bigl\Vert(\bI-\bP_{\mathbb{G}(\bbeta)})(\bX,\bZ)\btheta^*\bigr\Vert_2^2}{\sigma^2}}}\\
            	\leq&\exp{\brac[\Bigg]{2t_2\abs{\mathbb{G}(\bbeta)\backslash\mathbb{G}(\bbeta ^ *)}-\frac{t_2(1-\delta)}{\sigma^2} n d(\bbeta,\bbeta ^ *) c_{\min}}}
            \end{align*}
            Combining the two terms and set $t_1=\frac{1}{5}, t_2=\frac{3}{8}, \delta=\frac{4}{5}$, we have that Equation~(\ref{Equation:suff:prob1}) is less than or equal to:
            \begin{align*}
            	&\exp\brac[\bigg]{\frac{2t_1^2-t_1\delta}{\sigma^2}n d(\bbeta,\bbeta ^ *) c_{\min}}+
            	\exp{\brac[\Bigg]{2t_2\abs{\mathbb{G}(\bbeta)\backslash\mathbb{G}(\bbeta ^ *)}-\frac{t_2(1-\delta)}{\sigma^2} n d(\bbeta,\bbeta ^ *) c_{\min}}},\\
            	=&2\exp\brac[\bigg]{-\frac{3}{40}\frac{n d(\bbeta,\bbeta ^ *)c_{\min}}{\sigma^2}+\frac{3}{4}\abs{\mathbb{G}(\bbeta)\backslash\mathbb{G}(\bbeta ^ *)}}.
            \end{align*}
            Finally, we bound the probability that the $L_0$ estimator fails to specify the true grouping: 
	    	\begin{align}
    			&\mbP(\hat\btheta\neq \hat\btheta^{\mathrm{ol}})\label{Equation:suff:prob2}\\
    			\leq &\sum_{\omega\in\{\mathbb{G}(\bbeta)|\btheta\in\bTheta(K_0,s_0),\mathbb{G}(\bbeta)\neq\mathbb{G}(\bbeta ^ *)\}}\mbP \paren[\bigg]{\min_{\substack{\btheta\in\bTheta(K_0,s_0)\\ \mathbb{G}(\bbeta)=\omega}}\norm{ \bY-(\bX,\bZ)\btheta}_2^2<\norm{ \bY-(\bX,\bZ)\hat\btheta^{\mathrm{ol}}}_2^2},\nonumber\\
    			\leq &\sum_{i=1}^{s_0}\sum_{j=0}^i {{s_0}\choose{i}}K_0^{i} {{p-s_0}\choose{j}}K_0^j\,2\exp\paren[\bigg]{-\frac{3n i c_{\min}}{40\sigma^2}+\frac{3}{4}(2i+j)},\nonumber\\  
    			\leq &\sum_{i=1}^{s_0}\sum_{j=0}^i (K_0 s_0)^{i} \paren[\big]{K_0(p-s_0)}^j\,2\exp\paren[\bigg]{-\frac{3n i c_{\min}}{40\sigma^2}+\frac{3}{4}(2i+j)},\nonumber\\  
    			= &\sum_{i=1}^{s_0}  \,2\exp\paren[\bigg]{-\frac{3n i c_{\min}}{40\sigma^2}+\frac{3}{2}i+i\log(K_0s_0)}\sum_{j=0}^i  \exp\brak[\Big]{j\brac[\big]{\frac{3}{4}+\log\paren[\big]{K_0(p-s_0)}}},\nonumber\\ 
    			\leq &\frac{2}{1-e^{-3/4}}\sum_{i=1}^{s_0} \,\exp\brac[\bigg]{-\frac{3n i c_{\min}}{40\sigma^2}+\frac{3}{2}i+i\log(K_0s_0)+\frac{3}{4}i+i\log\paren[\big]{K_0(p-s_0)}}.\nonumber
			\end{align}
			Due to $\log\paren[\big]{K_0(p-s_0)}+\log(K_0s_0)\leq \log\paren[\big]{\frac{K_0^2 p^2}{4}}\leq 2\paren[\big]{\log(K_0 p)-\log(2)}$, we have Equation~(\ref{Equation:suff:prob2}) less than or equal to: 
			\[
			\begin{aligned}
			    \frac{2}{1-e^{-3/4}}\sum_{i=1}^{s_0} \,\exp\brac[\bigg]{-\frac{3n i c_{\min}}{40\sigma^2}+\frac{9}{10}i+2i\log\paren[\big]{K_0 p}}.
		    \end{aligned}
		    \]
		When $c_{\min}\geq \frac{\sigma^2}{n}\paren[\Big]{27\log(K_0 p)+12}$, we have Equation~(\ref{Equation:suff:prob2}) less than or equal to: 
		\[
			\begin{aligned}
			    \frac{2}{1-e^{-3/4}} \frac{\exp\brac[\bigg]{-\frac{3n}{40\sigma^2}\paren[\big]{c_{\min}-\frac{80}{3}\sigma^2\log(K_0 p)/n-12\sigma^2/n}}}{1-\exp\brac[\bigg]{-\frac{3n}{40\sigma^2}\paren[\big]{c_{\min}-\frac{80}{3}\sigma^2\log(K_0 p)/n-12\sigma^2/n}}}.
		    \end{aligned}
		    \]
		Due to the fact that $\mbP(\hat\btheta\neq \hat\btheta^{\mathrm{ol}})\leq 1$ and that $\frac{2}{1-e^{-3/4}}\frac{x}{1-x}\leq 6x$ when $0\leq\frac{2}{1-e^{-3/4}}\frac{x}{1-x}\leq 1$ and $0<x<1$, we have Equation~(\ref{Equation:suff:prob2}) less than or equal to: 
		\[
		 6\exp\brak[\bigg]{-\frac{3n}{40\sigma^2}\brac[\bigg]{c_{\min}-\frac{\sigma^2}{n}  \paren[\Big]{27\log(pK_0)+12}}}.\] 
	\end{proof}

\subsubsection{Proof of Corollary~\ref{cor:high-d}}

\begin{proof}
    Note that 
    $
       \{ \hat{\btheta}^{\mathrm{sg}} = \hat{\btheta}^{\mathrm{ol}}  \} = \{ \hat{\btheta}^{\mathrm{g}} = \hat{\btheta}^{\mathrm{ol}} \}\cap \cE .
    $
    Then 
    \[
    \begin{aligned}
        \mbP (\hat{\btheta}^{\mathrm{sg}} = \hat{\btheta}^{\mathrm{ol}}) & = \PP\Bigl(  \{ \hat{\btheta}^{\mathrm{g}} = \hat{\btheta}^{\mathrm{ol}} \}\cap \cE   \Bigr) \\
        & = 1 - \PP \Bigl(  \{ \hat{\btheta}^{\mathrm{g}} \ne \hat{\btheta}^{\mathrm{ol}} \} \cup \cE^c        \Bigr) \\
        & \ge 1 - \PP(\hat{\btheta}^{\mathrm{g}} \ne \hat{\btheta}^{\mathrm{ol}}) - \PP(\cE^c).
    \end{aligned}
    \]
    The conclusion immediately follows by combining this with Theorem \ref{thm:suff}.
\end{proof}

\subsubsection{Proof of Theorem~\ref{the:nece}} 
\label{app:2}
	\begin{proof} 
	    Consider a measurable space $(\mathcal{X},\mathcal{A})$ and a measurable function class $\Psi_t := \{\psi: \cX \to [t]\}$. By Lemma 2.7 in \cite{birge1983approximation}: for any sequence of $t\geq 2$ probability distributions $\PP_1,\hdots,\PP_t$ on the same measurable space $(\mathcal{X},\mathcal{A})$, we have that 
		\[
			\inf_{\psi\in\Psi_t}\sup_{j=1,\hdots,t} \PP_j(\psi(x)\neq j)\geq 1-\frac{t^{-2}\sum_{1\leq j,k\leq t}\mathrm{KL}(\PP_j,\PP_k)+\log{2}}{\log(t-1)},
		\] 
		where $\mathrm{KL}(\PP_j,\PP_k)$ is the Kullback-Leibler information for distributions $P_j$ versus $P_k$.\\ 
		For any $\gamma_{\min} > 0$, we can construct a collection of parameters of distinct groupings $\cS_{\gamma_{\min}} :=\{\bbeta^{(j)}\}_{j=0}^{\floor{\frac{K_0+3}{4}}p}\subseteq \bTheta(K_0,s_0)$ satisfying that  
		\begin{enumerate}[label={\roman*.}]
		    \item each entry of $\bbeta^{(j)}$'s belongs to $\cV := \{-\floor{\frac{K_0}{2}}\frac{\gamma_{\min}}{K_0},\hdots,-\frac{\gamma_{\min}}{K_0},0,\frac{\gamma_{\min}}{K_0},\hdots,\floor{\frac{K_0+1}{2}}\frac{\gamma_{\min}}{K_0}\}$; 
	        \item for any $0< j \leq \floor{\frac{K_0+3}{4}}p$, we have $\|{\bbeta^{(j)}-\bbeta^{(0)}}\|_0 \leq 2$;
	        \item for any $0< j \leq \floor{\frac{K_0+3}{4}}p$, we have 
	        $\|{\bbeta^{(j)}-\bbeta^{(0)}}\|_{1} = \frac{1}{K_0}\gamma_{\min} \text{ or } \frac{2}{K_0}\gamma_{\min}$;   
    	    \item for any $0< j \leq \floor{\frac{K_0+3}{4}}p$, we have 
	        $\|{\bbeta^{(j)}-\bbeta^{(0)}}\|_{\infty} = \frac{1}{K_0}\gamma_{\min} \text{ or } \frac{2}{K_0}\gamma_{\min}$.   
		\end{enumerate}
		
		Below we give the detailed construction of $\cS_{\gamma_{\min}}\subseteq\{\bbeta^{(0)}\}\cup\widetilde\cS_{\gamma_{\min}}\cup\overline\cS_{\gamma_{\min}}\cup\check\cS_{\gamma_{\min}}\cup\{{\bbeta}^{(01)},{\bbeta}^{(02)}\}$ where ${\bbeta}^{(01)}$ and ${\bbeta}^{(02)}$ are two variants of $\bbeta^{(0)}$ defined blow. 
		\begin{enumerate}
		    \item Set $\bbeta^{(0)}$ as any parameter with components valued in $\cV$ such that $|\cG(\bbeta^{(0)};0)|=p-K_0+1$, $|\cG(\bbeta^{(0)};\frac{\gamma_{\min}}{K_0})|=0$ and each of the rest $K_0-1$ groups has only one covariate, as shown in Figure~\ref{Figure:ConstructionBeta0}. \label{Step:ConstructionBeta0}
		    
		    \item Consider multiset $\widetilde\cS_{\gamma_{\min}} := \{\tilde\bbeta ^ {(j, k)}\}_{j \in [p-K_0+1], k \in \brak[\big]{\floor{\frac{K_0+1}{2}}}}$. Each $\tilde\bbeta ^ {(j, 1)}$ is generated by modifying $\bbeta^{(0)}$ via moving the $j$th covariate of $0$-group in $\bbeta ^ {(0)}$, i.e., $\cG(\bbeta ^ {(0)}; 0)$, to $\frac{\gamma_{\min}}{K_0}$-group. Each $\tilde\bbeta^{(j,k)}$ with $k>1$ is created by modifying $\tilde\bbeta^{(j,1)}$ through moving the covariate in $\frac{k}{K_0}\gamma_{\min}$-group to $\frac{k-1}{K_0}\gamma_{\min}$-group. (see Figure~\ref{Figure:ConstructionStep2}) \label{Step:Construction2}
		    
		    \item When $K_0\geq 5$, consider multiset $\overline\cS_{\gamma_{\min}} := \{\bar\bbeta ^ {(j, k)}\}_{j\in\brak[\big]{\floor{\frac{K_0+1}{2}}-2},k\in\brak[\big]{\floor{\frac{K_0}{2}}+1}}$. Each $\bar\bbeta ^ {(j, 1)}$ is generated by modifying $\bbeta^{(0)}$ via moving the covariate in $\frac{j+2}{K_0}\gamma_{\min}$-group to $\frac{j+1}{K_0}\gamma_{\min}$-group. Each $\bar\bbeta^{(j,k)}$ with $k>1$ is created by modifying $\bar\bbeta^{(j,1)}$ through moving the covariate in $-\frac{k}{K_0}\gamma_{\min}$-group to $-\frac{k-1}{K_0}\gamma_{\min}$-group. (see Figure~\ref{Figure:ConstructionStep3}) \label{Step:Construction3}
		    
		    \item When $K_0\geq 2$, consider multiset $\check\cS_{\gamma_{\min}} := \{\check\bbeta ^ {(j)}\}_{j\in\brak[\big]{\floor{\frac{K_0}{2}}}}$. Each $\check\bbeta ^ {(j)}$ is generated by modifying $\bbeta^{(0)}$ via moving the covariate in $-\frac{j}{K_0}\gamma_{\min}$-group to $-\frac{j-1}{K_0}\gamma_{\min}$-group. (see Figure~\ref{Figure:ConstructionStep4}) \label{Step:Construction4}
		    
		    \item When $K_0\geq 2$, construct ${\bbeta}^{(01)}$ by modifying $\bbeta^{(0)}$ via moving one covariate in $0$-group to $-\frac{1}{K_0}\gamma_{\min}$-group. When $K_0\geq 4$, construct ${\bbeta}^{(02)}$  by modifying $\bbeta^{(0)}$ via moving one covariate in $0$-group to $-\frac{2}{K_0}\gamma_{\min}$-group. (see Figure~\ref{Figure:ConstructionStep5}) \label{Step:Construction5}

		   %
		\end{enumerate}
		
	    \input{Figures.tex}
	    
		Note that the constructed $\bbeta$'s in Steps~\ref{Step:ConstructionBeta0}-\ref{Step:Construction5} are distinct and with cardinality at least $\floor[\big]{\frac{K_0+3}{4}}p+1$.\\
		
		Then for any $0\leq j < j' \leq \floor[\big]{\frac{K_0+3}{4}}p$, we have
	    \[
			\begin{aligned}
				\mathrm{KL} & \{\cN(\bX\bbeta^{(j)}, \sigma^2 \bI_n), \cN(\bX\bbeta^{(j')},\sigma^2 \bI_n)\} =\frac{1}{2\sigma^2}\bigl\Vert \bX(\bbeta^{(j)}-\bbeta^{(j')})\bigr\Vert_2^2 \\
				& \leq\frac{2\max_{1\leq j\leq p}\left\Vert \bX_j\right\Vert_2^2\gamma_{\min}^2}{\sigma^2 K_0^2} = 
				\frac{2n\gamma^2_{\min} r(\bX, \bZ, K_0, s_0)}{\sigma ^ 2 K_0^2}\min_{\substack{{\btheta \in \bTheta(K_0, s_0)}\\ {|\beta_j-\beta_{j'}|\geq 1, \forall \beta_j \neq \beta_{j'} }}}c_{\min}(\btheta, \bX, \bZ) \\
				& = \frac{2nr(\bX, \bZ, K_0, s_0)}{\sigma ^ 2}\min_{\substack{{\btheta \in \bTheta(K_0, s_0)}\\ {|\beta_j-\beta_{j'}|\geq \gamma_{\min}/K_0, \forall \beta_j \neq \beta_{j'} }}}c_{\min}(\btheta, \bX, \bZ) \\
				& \leq \frac{2nr(\bX, \bZ, K_0, s_0)}{\sigma ^ 2}\min_{\btheta:\bbeta\in\cS_{\gamma_{\min}}}c_{\min}(\btheta, \bX, \bZ). 
				\end{aligned}
		\]
		For any estimator $\hat\bbeta$ of $\bbeta ^ *$, we can define $\hat\bbeta^\dagger=\left\{
		\begin{aligned}
		    &\hat\bbeta, &\text{when }\hat\bbeta\in S_{\gamma_{\min}},\\
		    &\text{uniform}(S_{\gamma_{\min}}), &\text{otherwise}.
		\end{aligned}
		\right.$ \\ Then we can apply Lemma 2.7 in \cite{birge1983approximation} to probability distributions $\{\cN(\bX\bbeta,\sigma^2 \bI_n)\,|\,\bbeta\in \cS_{\gamma_{\min}}\}$. It follows that
		\begin{align*}
		    &\inf_{\hat\bbeta}\sup_{\btheta^*\in \bTheta_c(K_0, s_0, \min_{{\btheta:\bbeta\in \cS_{\gamma_{\min}}}}c_{\min}(\btheta, \bX, \bZ))}\PP(\mathbb{G}(\hat\bbeta)\neq\mathbb{G}(\bbeta ^ *))\\
		    \geq &\inf_{\hat\bbeta}\sup_{\btheta^*:\bbeta\in\cS_{\gamma_{\min}} }\PP(\mathbb{G}({\hat\bbeta})\neq\mathbb{G}(\bbeta ^ *)) \ge \inf_{\hat\bbeta}\sup_{\btheta^*:\bbeta\in\cS_{\gamma_{\min}} }\PP(\mathbb{G}({\hat\bbeta ^ {\dagger}})\neq\mathbb{G}(\bbeta ^ *))\\
		    = & \inf_{\psi \in \Psi_{\floor{\frac{K_0+3}{4}}p+1}}\sup_{\btheta^*:\bbeta\in\cS_{\gamma_{\min}} }\PP(\mathbb{G}(\bbeta ^ {(\psi)})\neq\mathbb{G}(\bbeta ^ *)) \\
		    \geq & 1-\frac{2nr(\bX,\bZ,K_0,s_0)\min_{{\btheta:\bbeta\in \cS_{\gamma_{\min}}}}c_{\min}(\btheta, \bX, \bZ)+\sigma^2\log2}{\sigma^2 \log(\floor{\frac{K_0+3}{4}}p)}.
		\end{align*}
        When $\gamma_{\min}$ varies from $0$ to $\infty$, $\min_{\btheta:\bbeta\in \cS_{\gamma_{\min}}}c_{\min}(\btheta,\bX,\bZ)$ varies from $0$ to $\infty$. Then for any $\ell>0$ we have
        \begin{align*}
			\inf_{\hat\bbeta}\sup_{\btheta^*\in \bTheta_c(K_0, s_0, \ell)} \mbP\paren[\big]{\mathbb{G}(\hat\bbeta)\neq \mathbb{G}(\bbeta ^ *)}\geq 1-\frac{2nr(\bX,\bZ,s)\ell+\sigma^2\log 2}{\sigma^2\log(\floor{\frac{K_0+3}{4}}p)}.
		\end{align*}
		Then $\inf_{\hat\bbeta}\sup_{\btheta^*\in \bTheta_c(K_0,s_0,\ell)}\mbP\paren[\big]{\mathbb{G}(\hat\bbeta)\neq\mathbb{G}(\bbeta ^ *)}\to 0, \text{ as }n,p\to\infty$ implies 
		\[
		l\geq \frac{\sigma^2\paren[\big]{\log(\floor{\frac{K_0+3}{4}}p)-\log 2}}{2nr(\bX,\bZ,K_0,s_0)}. 
		\]
		\end{proof}

\subsection{Proof of Proposition \ref{prop:beta}}
\begin{proof}
    Consider $s_0=p$ and $K_0=2$. We have $\GG(\bbeta^*) = \{\cG_1^*,\cG_2^*\}$, where $\cG_1^* = \{j:\beta_j^* = \gamma^*_1\}$ and $\cG_2^* = \{j:\beta_j^* = \gamma^*_2\}$ with $|\gamma^*_1-\gamma^*_2| \ge 1$. Write $k^*_1 = |\cG_1^*|$ and $k^*_2 = |\cG_2^*|$. For any $\bbeta$ such that $\GG(\bbeta) = \{\cG_1,\cG_2\}\ne \GG(\bbeta^*)$, let $k_1=|\cG_1|$ and $k_2=|\cG_2|$. We now consider two possible grouping structures.
    
\noindent{Case (i): Without loss of generality, $\cG_1\subset\cG_1^* $ and $\cG_2^*\subset\cG_2$}. In this case, we have 
\begin{eqnarray*} 
        d(\bbeta,\bbeta^*)  =  |\cG_1^*\setminus\cG_1| & = & k_1^*-k_1, \\
       \argmin_{\bbeta'_{\cG_1} \text{ s.t. } \GG(\bbeta')=\GG(\bbeta)} \|\bbeta'_{\cG_1}-\bbeta^*_{\cG_1}\|_2^2 & = & \gamma_1^* \mathbf{1}_{|\cG_1|}, \\
       \argmin_{\bbeta'_{\cG_2} \text{ s.t. } \GG(\bbeta')=\GG(\bbeta)} \|\bbeta'_{\cG_2}-\bbeta^*_{\cG_2}\|_2^2 & = & \frac{(k_1^*-k_1)\gamma^*_1 + k_2^*\gamma^*_2}{k_2} \mathbf{1}_{|\cG_2|}.
    \end{eqnarray*} 
    Note that $k_1+k_2=s_0$ and $k_1^*+k_2^*=s_0$. Accordingly,
    \[
    \begin{aligned}
              & \min_{\bbeta' \text{ s.t. } \GG(\bbeta')=\GG(\bbeta) }\|\bbeta' - \bbeta^*\|_2^2 \\ 
              =~& (k_1^*-k_1)\biggl(\gamma_1^* - \frac{(k_1^*-k_1)\gamma^*_1 + k_2^*\gamma^*_2}{k_2} \biggr)^2 + k^*_2 \biggl(\gamma_2^* - \frac{(k_1^*-k_1)\gamma^*_1 + k_2^*\gamma^*_2}{k_2}\biggr)^2 \\
              = ~ & \frac{(k_1^*-k_1) (s_0 - k_1^*)^2(\gamma_1^*-\gamma_2^*)^2 }{(s_0 - k_1)^2} + \frac{(s_0-k_1^*)(k_1^*-k_1)^2(\gamma_1^*-\gamma_2^*)^2 }{(s_0-k_1)^2} \\
              = ~ &\frac{(k_1^*-k_1)(s_0-k_1^*)(\gamma_1^*-\gamma_2^*)^2}{s_0-k_1}. 
    \end{aligned}
    \]
    Since $C_l\le k_1^*/k_2^*\le C_h$ for some universal constants $C_h>C_l>0$, we always have that $k^*_1 = c^*_1s_0$ and $k^*_2 = (1-c_1^*)s_0$ for some constant $0<c_1^*<1$. 
    Then
    \[
    \begin{aligned}
               & \frac{\min_{\bbeta' \text{ s.t. } \GG(\bbeta')=\GG(\bbeta) }\|\bbeta' - \bbeta^*\|_2^2}{d(\bbeta,\bbeta^*)} \\
               = ~ & \frac{(s_0-k_1^*)(\gamma_1^*-\gamma_2^*)^2}{s_0-k_1} \ge \frac{(1-c_1^*)s_0(\gamma_1^*-\gamma_2^*)^2}{s_0} = (1-c_1^*)(\gamma_1^*- \gamma_2^*)^2 \ge 1-c_1^*.
    \end{aligned}
    \]
    
\noindent{Case (ii): 
    $\cG_1\cap\cG_1^*\ne \emptyset$ and $\cG_1\cap\cG_2^*\ne\emptyset$}. Let $d = |\cG_1^*\setminus \cG_1|$. Then 
    \[
       d(\bbeta,\bbeta^*) = \min\{ k_1-k_1^*+2d, s_0-(k_1-k_1^*+2d).  \}
    \]
   Also,
    \[
    \begin{aligned}
       & \argmin_{\bbeta'_{\cG_1} \text{ s.t. } \GG(\bbeta')=\GG(\bbeta)} \|\bbeta'_{\cG_1}-\bbeta^*_{\cG_1}\|_2^2 = \frac{(k_1^*-d)\gamma^*_1 + (k_1-k_1^*+d)\gamma^*_2}{k_1} \mathbf{1}_{|\cG_1|}, \\
       & \argmin_{\bbeta'_{\cG_2} \text{ s.t. } \GG(\bbeta')=\GG(\bbeta)} \|\bbeta'_{\cG_2}-\bbeta^*_{\cG_2}\|_2^2 = \frac{d\gamma^*_1+(k_2-d)\gamma^*_2}{k_2} \mathbf{1}_{|\cG_2|}.
    \end{aligned}
    \]
    Accordingly we have
    \[
    \begin{aligned}
           & \min_{\bbeta' \text{ s.t. } \GG(\bbeta')=\GG(\bbeta) }\|\bbeta' - \bbeta^*\|_2^2 \\ 
           = ~ & (k_1^*-d)\biggl( \gamma_1^* -\frac{(k_1^*-d)\gamma^*_1 + (k_1-k_1^*+d)\gamma^*_2}{k_1}\biggr)^2 \\
           & + (k_1-k_1^*+d)\biggl( \gamma^*_2 -  \frac{(k_1^*-d)\gamma^*_1 + (k_1-k_1^*+d)\gamma^*_2}{k_1}  \Bigr)^2 \\
           & + d\Bigl(\gamma_1^* - \frac{d\gamma^*_1+(k_2-d)\gamma^*_2}{k_2}\Bigr)^2 + (k_2-d)\Bigl( \gamma_2^* - \frac{d\gamma^*_1+(k_2-d)\gamma^*_2}{k_2}  \Bigr)^2 \\
           = ~ & \frac{(k_1^*-d)(k_1-k^*_1+d)(\gamma_1^*-\gamma_2^*)^2}{k_1} + \frac{d(k_2-d)(\gamma_1^*-\gamma_2^*)^2}{k_2}.
    \end{aligned}
    \]
    Similar to case (i), we have
    \begin{align}\label{equ:proof_betagap}
        & \frac{\min_{\bbeta' \text{ s.t. } \GG(\bbeta')=\GG(\bbeta) }\|\bbeta' - \bbeta^*\|_2^2}{d(\bbeta,\bbeta^*)} \\ 
        = ~& \frac{(k_1^*-d)(k_1-k^*_1+d)/k_1 + d(k_2-d)/k_2}{ \min\{ k_1-k_1^*+2d, s_0-(k_1-k_1^*+2d)  \} }(\gamma_1^*-\gamma_2^*)^2. \nonumber
    \end{align}
     The numerator term in \eqref{equ:proof_betagap} equals to
     \[
     \begin{aligned}
        & k_1\biggl( \frac{k_1^*-d}{k_1} \biggr) \biggl(1- \frac{k_1^*-d}{k_1}\biggr) + k_2\biggl(\frac{d}{k_2}\biggr)\biggl(1-\frac{d}{k_2}\biggr) \\
        = ~ & k_1\biggl( \frac{c_1^*s_0-d}{k_1} \biggr) \biggl(1- \frac{c_1^*s_0-d}{k_1}\biggr) + k_2\biggl(\frac{d}{k_2}\biggr)\biggl(1-\frac{d}{k_2}\biggr).
     \end{aligned}
     \]
     The denominator term in \eqref{equ:proof_betagap} is always upper bounded by $s_0/2$. Note that $k_1-k_1^*+d>0$ and $d<k_1^*$. We consider two situations with different orders of $d$. 
     
     \noindent\text{(i)} Under $d = o(s_0)$, 
     we have
     \begin{equation}\label{equ:nume}
        k_1\biggl(\frac{k_1^*-d}{k_1}\biggr)\biggl(1-\frac{k_1^*-d}{k_1} \biggr) \approx c_1^*s_0 \frac{k_1-k_1^*+d}{k_1}.
     \end{equation}
     Now it is possible that $k_1-k_1^*+d = c_1s_0$ with some $c_1\in(0,1)$ or $k_1-k_1^*+d = o(s_0)$. If  $k_1-k_1^*+d = c_1s_0$ with some $c_1\in(0,1)$, we have $\eqref{equ:nume}\gtrsim s_0$ and consequently $\eqref{equ:proof_betagap}\gtrsim 1$.
     If $k_1-k_1^*+d = o(s_0)$, it holds that $\min\{k_1-k_1^*+2d, s_0-(k_1-k_1^*+2d)\} = k_1-k_1^* + 2d$ with sufficiently large $s_0$. We have
     \[
     \eqref{equ:proof_betagap} = \frac{(k_1^*-d)(k_1-k_1^*+d)/k_1}{k_1-k_1^*+2d} \gtrsim 1.
      \]
     
\noindent\text{(ii)}  
     Under $d\approx s_0$, by noticing that $d<k_2$, we have $k_2\approx s_0$ and
     \[
        k_2\biggl(\frac{d}{k_2}\biggr)\biggl(1-\frac{d}{k_2}\biggr)  = d \frac{k_2-d}{k_2} \approx k_2-d.
     \]
     If $k_2-d\approx s_0$, then $\eqref{equ:nume}\gtrsim s_0$ and $\eqref{equ:proof_betagap} \gtrsim 1$. If $k_2-d = o(s_0)$, we have $k_1-k_1^*+d = c_2^*s_0 - o(s_0)$, thus the first term of the numerator in  \eqref{equ:proof_betagap} is lower bounded by the order of $s_0$. 
    
    Combining all the cases concludes the proof.
\end{proof}

	\bibliographystyle{imsart-nameyear}
	\bibliography{ref2}
\end{document}